\newcolumntype{+}{!{\vrule width 2pt}}
\newlength\savedwidth
\renewcommand{\@biblabel}[1]{\quad#1.}
\newcommand{\avsum}{\mathop{\mathpalette\avsuminner\relax}\displaylimits}
\newcommand\avsuminner[2]{
  {\sbox0{$\m@th#1\sum$}
   \vphantom{\usebox0}
   \ooalign{
     \hidewidth
     \smash{\vrule height\dimexpr\ht0+1pt\relax depth\dimexpr\dp0+1pt\relax}%
     \hidewidth\cr
     $\m@th#1\sum$\cr
   }
  }
}
\newcommand{\BlackBox}{\rule{1.5ex}{1.5ex}}  
\newenvironment{proof}{\par\noindent{\bf Proof\ }}{\hfill\BlackBox\\[2mm]}
\newtheorem{theorem}{Theorem}
\newcommand{\mb}[1]{\mathbf{#1}}
\begin{document}
\justifying
\vspace*{0.2in}

\begin{flushleft}
{\Large
\textbf\newline{Introducing Neuromodulation in Deep Neural Networks to Learn Adaptive Behaviours} 
}
\newline
\\
Vecoven Nicolas\textsuperscript{1*},
Ernst Damien\textsuperscript{1},
Wehenkel Antoine\textsuperscript{1},
Drion Guillaume\textsuperscript{1}
\\
\bigskip
\textbf{1} Department of Electrical Engineering and Computer Science 
       Montefiore Institute, University of Li\`ege
       B-4000 Li\`ege, Belgium
\\
\bigskip

* nvecoven@uliege.be

\end{flushleft}

\begin{abstract}
 
Animals excel at adapting their intentions, attention, and actions to the environment, making them remarkably efficient at interacting with a rich, unpredictable and ever-changing external world, a property that intelligent machines currently lack. Such an adaptation property relies heavily on \emph{cellular neuromodulation}, the biological mechanism that dynamically controls intrinsic properties of neurons and their response to external stimuli in a context-dependent manner. In this paper, we take inspiration from cellular neuromodulation to construct a new deep neural network architecture that is specifically designed to learn adaptive behaviours. The network adaptation capabilities are tested on navigation benchmarks in a meta-reinforcement learning context and compared with state-of-the-art approaches. Results show that neuromodulation is capable of adapting an agent to different tasks and that neuromodulation-based approaches provide a promising way of improving adaptation of artificial systems.  


\end{abstract}

\section{Introduction}
The field of machine learning has seen tremendous progress made during the past decade, predominantly owing to the improvement of deep neural network (DNN) algorithms. DNNs are networks of artificial neurons whose interconnections are tuned to reach a specific goal through the use of an optimization algorithm, mimicking the role of synaptic plasticity in biological learning. This approach has led to the emergence of highly efficient algorithms that are capable of learning and solving complex problems. Despite these tremendous successes, it remains difficult to learn models that generalise or adapt themselves efficiently to new, unforeseen problems based on past experiences. This calls for the development of novel architectures specifically designed to enhance adaptation capabilities of current DNNs. 

In biological nervous systems, adaptation capabilities have long been linked to neuromodulation, a biological mechanism that acts in concert with synaptic plasticity to tune neural network functional properties. In particular, cellular neuromodulation provides the ability to continuously tune neuron input/output behaviour to shape their response to external stimuli in different contexts, generally in response to an external signal carried by biochemicals called neuromodulators \cite{bargmann2013connectome, marder2014neuromodulation}. Neuromodulation regulates many critical nervous system properties that cannot be achieved solely through synaptic plasticity \cite{marder1996principles, marder2001central}. It has been shown as being critical to the adaptive control of continuous behaviours, such as in motor control, among others \cite{marder1996principles, marder2001central}. In this paper, we introduce a new neural architecture specifically designed for DNNs and inspired from cellular neuromodulation, which we call NMN, standing for ``Neuro-Modulated Network''. 

At its core, the NMN architecture comprises two neural networks: a main network and a neuromodulatory network. The main network is a feed-forward DNN composed of neurons equipped with a parametric activation function whose parameters are the targets of neuromodulation. It allows the main network to be adapted to new unforeseen problems. The neuromodulatory network, on the other hand, dynamically controls the neuronal properties of the main network via the parameters of its activation functions. Both networks have different inputs: the neuromodulatory network processes feedback and contextual data whereas the main network is in charge of processing other inputs.

Our proposed architecture can be related to previous works on different aspects. In \cite{miconi2018differentiable}, the authors take inspiration from Hebbian plasticity to build networks with plastic weights, allowing them to tune their weights dynamically. In \cite{miconi2018backpropamine} the same authors extend their work by learning a neuromodulatory signal that dictates which and when connections should be plastic. Our architecture is also related to hypernetworks \cite{ha2016hypernetworks}, in which a network's weights are computed through another network. Finally, other recent works focused on learning fixed activation functions \cite{apl, lin2013network}.


\section{NMN architecture}
\label{sec:NMN}
The NMN architecture revolves around the neuromodulatory interaction between the neuromodulatory and main networks. We mimic biological cellular neuromodulation \cite{drion2015neuronal} in a DNN by assigning the neuromodulatory network the task to tune the slope and bias of the main network activation functions. 

Let $\sigma(x): \mathbb{R}\rightarrow\mathbb{R}$ denote any activation function and its neuromodulatory capable version $\sigma_{\text{NMN}}(x, \mb{z}; \mb{w}_s, \mb{w}_b) = \sigma\left( \mb{z}^T (x \mb{w}_s + \mb{w}_b)  \right) $ where $\mb{z} \in \mathbb{R}^k$ is a neuromodulatory signal and $\mb{w}_s, \mb{w}_b \in \mathbb{R}^k$ are two parameter vectors of the activation function, respectively governing a scale factor and an offset. In this work, we propose to replace all the main network neuron activation functions with their neuromodulatory capable counterparts. The neuromodulatory signal $\mb{z}$, where size $k$ is a free parameter, is shared for all these neurons and computed by the neuromodulatory network as $\mb{z} = f(\mb{c})$, where $\mb{c}$ is a vector representing contextual and feedback inputs. The function $f$ can be any DNN taking as input such vector $\mb{c}$. For instance, $\mb{c}$ may have a dynamic size (e.g. more information about the current task becomes available as time passes), in which case $f$ could be parameterised as a recurrent neural network (RNN) or a conditional neural process \cite{cnp}, enabling refinement of the neuromodulatory signal as more data becomes available. The complete NMN architecture and the change made to the activation functions are depicted in Figure \ref{fig:high_level_view}. 

\begin{figure}[htbp]
\centering
\definecolor{imblue}{HTML}{44BDDA}
\begin{tikzpicture}[scale=0.93, minimum size=1em]

\draw[white, line width=1pt] (-6,0) rectangle (7.75,4); 

\begin{scope}[xshift=-0.20cm]
\node[] at (-5.7,3.75) {\textbf{A}};
\node[inner sep=0pt] (main) at (-3.55,1) (main)
{\includegraphics[width=.15\textwidth]{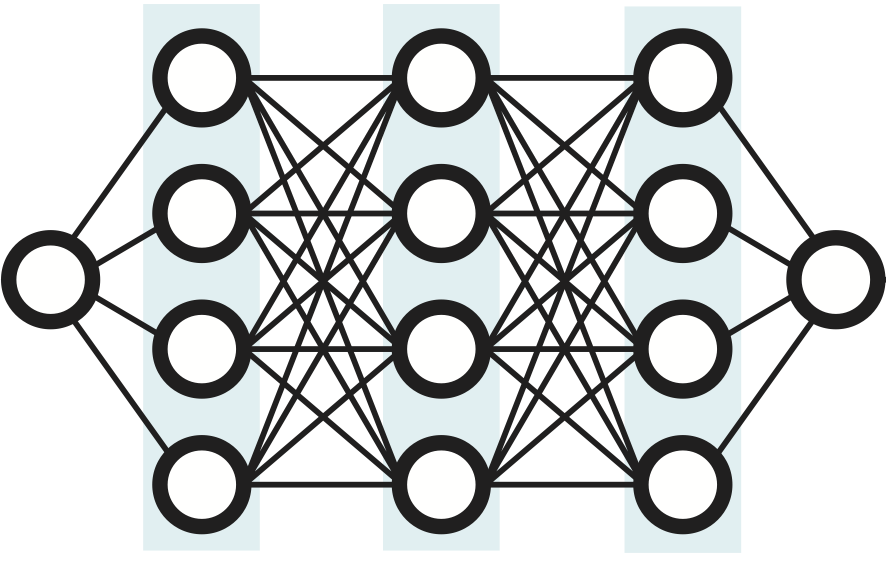}};
\node[inner sep=0pt] (neuromodulatory) at (-3.55,2.8)
{\includegraphics[width=.1\textwidth]{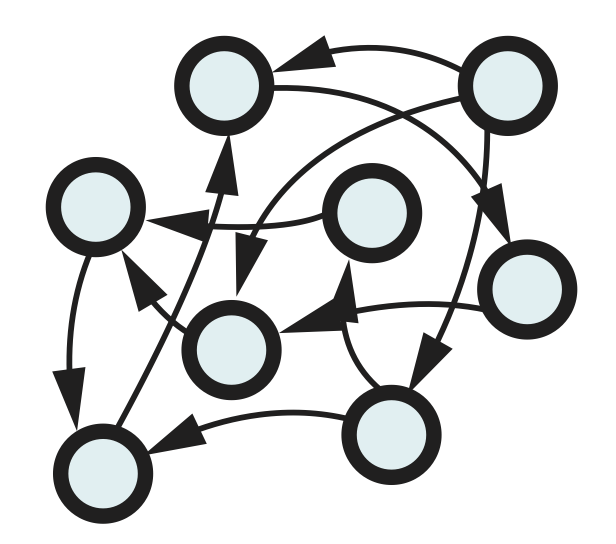}};
\draw[line width=1pt,OliveGreen,draw] (-4.4,2.1) rectangle (-2.75,3.5);
\draw[-, line width=0.5pt,imblue,fill=imblue] (-2.98,2.1) -- +(0,-0.35) circle (0.15em);
\draw[-, line width=0.5pt,imblue,fill=imblue] (-3.55,2.1) -- +(0,-0.35) circle (0.15em);
\draw[-, line width=0.5pt,imblue,fill=imblue] (-4.125,2.1) -- +(0,-0.35) circle (0.15em);
\draw[<-,line width=1pt] (main) -- +(-1.5,0) node[near end, left] {\small Inputs};
\draw[->,line width=1pt,shorten <=0.05cm,] (main) -- +(+1.5,0) node[near end, right,yshift=-0.1ex] {\small Outputs};
\draw[<-,line width=1pt,shorten >=0.1cm,shorten <=0.1cm,] (neuromodulatory) -- +(+1.5,0) node[near end, right, align=center,yshift=-1.1ex] {\small Context\\$\mathbf{c}$};
\node[] at (-3.5,3.75) {\small \textit{neuromodulatory}};
\node[] at (-3.55,0.2) {\small \textit{main}};
\end{scope}

\begin{scope}
\node[] at (-0.5,3.75) {\textbf{B}};	    
\draw[gray, rounded corners=2mm, line width=0.5pt] (0.5,0.25) rectangle (6.35,3.75); 
\node[minimum size=2.5em, circle, draw=black,line width=1pt] at (-0.25,1) (X1) {\footnotesize $x$};
\node[line width=1pt,minimum size=1.4em,draw,imblue] at (2.5,1) (mul) {$\times$};
\node[line width=1pt,draw,minimum size=1.4em,imblue] at (3.75,1) (plus) {$+$};
\node[line width=1pt,minimum size=1.4em, draw] at (5.5,1) (sig) {$\sigma$};
\node[line width=1pt,minimum size=2.5em,circle, draw=black] at (1.25,3) (ws) {\footnotesize $\boldsymbol{w}_s$};
\node[line width=1pt,draw,imblue,minimum size=1.4em] at (2.5,3) (dot1) {$\cdot$};
\node[line width=1pt,OliveGreen,minimum size=2.5em, circle,  draw] at (3.75,3) (z) {\footnotesize $\boldsymbol{z}$};
\node[line width=1pt,imblue, draw,minimum size=1.4em] at (3.75,2) (dot2) {$\cdot$};
\node[line width=1pt,minimum size=2.5em,,circle, draw=black] at (5.5,2) (wb) {\footnotesize $\boldsymbol{w}_b$};
\node[line width=1pt,minimum size=2.5em,circle, draw=black] at (7.25,1) (y) {\footnotesize $y$};
\node[] at (5.75,3.45) {\small $\sigma_{\text{NMN}}$};
\draw[->,line width=1pt] (X1) -- (mul);
\draw[->,line width=1pt] (mul) -- (plus);
\draw[->,line width=1pt] (plus) -- (sig);
\draw[->,line width=1pt] (sig) -- (y);
\draw[->,line width=1pt] (ws) -- (dot1);
\draw[->,line width=1pt] (z) -- (dot1);
\draw[->,line width=1pt] (dot1) -- (mul);
\draw[->,line width=1pt] (z) -- (dot2);
\draw[->,line width=1pt] (dot2) -- (plus);
\draw[->,line width=1pt] (wb) -- (dot2);
\end{scope}

\end{tikzpicture}
\caption{Sketch of the NMN architecture. \textbf{A.} The NMN is composed of the interaction of a \textit{neuromodulatory} neural network that processes some context signal (top) and a \textit{main} neural network that shapes some input-output function (bottom). \textbf{B.} Computation graph of the NMN activation functions $\sigma_{NMN}$, where $\mb{w}_s$ and $\mb{w}_b$ are parameters controlling the scale factor and the offset of the activation function $\sigma$, respectively. $\textbf{z}$ is a context-dependent variable computed by the neuromodulatory network.}
\label{fig:high_level_view}
\end{figure}

Notably, the number of newly introduced parameters scales linearly with the number of neurons in the main network whereas it would scale linearly with the number of connections between neurons if the neuromodulatory network was affecting connection weights, as seen for instance in the context of hypernetworks \cite{ha2016hypernetworks}. Therefore our approach can be extended more easily to very large networks. 

\section{Experiments}
\label{sec:experiments}

\subsection{Setting}
\label{subsec:formalization}
A good biologically motivated framework to which the NMN can be applied and evaluated is meta-reinforcement learning (meta-RL), as defined in \cite{learningtorl}. 
In contrast with classical reinforcement learning (RL), which is formalised as the interaction between an agent and an environment defined as a Markov decision process (MDP), the meta-RL setting resides in the sub-division of an MDP as a distribution $\mathcal{D}$ over simpler MDPs.
Let $t$ denote the discrete time, $\mb{x}_t$ the state of the MDP at time $t$, $\mb{a}_t$ the action taken at time $t$ and $r_t$ the reward obtained at the subsequent time-step. 
At the beginning of a new episode $i$, a new element is drawn from $\mathcal{D}$ to define an MDP, referred to as $\mathcal{M}$, with which the meta-RL agent interacts for $T \in \mathbb{N}$ time-steps afterwards. The only information that the agent collects on $\mathcal{M}$ is through observing the states crossed and the rewards obtained at each time-step.
We denote by $\mb{h}_{t}=\left[\mb{x}_{0},\mb{a}_{0},r_{0},\mb{x}_{1},\ldots,\mb{a}_{t-1}, r_{t-1}, \mb{x}_t\right]$ the history of the interaction with $\mathcal{M}$ up to time-step $t$. As in \cite{learningtorl}, the goal of the meta-learning agent is to maximise the expected value of the discounted sum of rewards it can obtain over all the time-steps and episodes.

\subsection{Training}
\label{subsec:training}

In \cite{learningtorl}, the authors tackle this meta-RL framework by using an advantage actor-critic (A2C) algorithm. This algorithm revolves around two distinct parametric functions: the actor and the critic. The actor represents the policy used to interact with the MDPs, while the critic is a function that rates the performance of the agent policy. All actor-critic algorithms follow an iterative procedure that consists of the three following steps. 
\begin{enumerate}
    \item Use the policy to interact with the environment and gather data.
    \item Update the actor parameters using the critic ratings.
    \item Update the critic parameters to better approximate a value function.
\end{enumerate}

In \cite{learningtorl}, the authors chose to model the actor and the critic with RNNs, taking $\mb{h}_{t}$ as the input. In this work, we propose comparing the NMN architecture to standard RNN by modelling both the actor and the critic with NMN. To this end, we define the feedback and contextual inputs $\mb{c}$ (i.e. the neuromodulatory network inputs) as $\mb{h}_{t} \setminus \mb{x}_t$ while the main network input is defined as $\mb{x}_{t}$. Note that $\mb{h}_{t}$ grows as the agent interacts with $\mathcal{M}$, motivating the usage of a RNN as neuromodulatory network. A graphical comparison between both architectures is shown on Figure~\ref{fig:tikz_model}.

\begin{figure}[htbp]
\centering
\definecolor{imblue}{HTML}{44BDDA}
\begin{tikzpicture}[scale=1, minimum size=1em]
\draw[white, line width=1pt] (-6,0) rectangle (7.75,4); 
\begin{scope}
\node[] at (-5.7,4.5) {\textbf{A}};

\coordinate (rnn) at (-4, 2.6);
\node[line width=1pt,minimum size=0.5em, draw=black] at (rnn) (node_rnn) {RNN};
\node[line width=1pt,minimum size=0.5em, draw=black] at ($(rnn)+(2.0,0.0)$) (node_mlp) {MLP};
\node[line width = 0pt] at ($(rnn)+(0.0,1.0)$) (input_rnn_1){$[\mb{x}_t, \mb{a}_{t-1}, r_{t-1}]$};
\node[line width = 0pt] at ($(node_mlp)+(1.4, 0.0)$) (output_rnn_1){$\mb{a}_{t}$};
\draw[->] (input_rnn_1)--(node_rnn);
\draw[->] (node_rnn)--(node_mlp);
\draw[->] (node_mlp)--(output_rnn_1);

\coordinate (rnn2) at (-4, 0.4);
\node[line width=1pt,minimum size=0.5em, draw=black] at (rnn2) (node_rnn2) {RNN};
\node[line width=1pt,minimum size=0.5em, draw=black] at ($(rnn2)+(2.0,0.0)$) (node_mlp2) {MLP};

\node[line width = 0pt] at ($(rnn2)+(0.0,1.0)$) (input_rnn_2){$[\mb{x}_{t+1}, \mb{a}_{t}, r_{t}]$};

\node[line width = 0pt] at ($(node_mlp2)+(1.4, 0.0)$) (output_rnn_2){$\mb{a}_{t+1}$};
\draw[->] (input_rnn_2)--(node_rnn2);
\draw[->] (node_rnn2)--(node_mlp2);

\draw[->] (node_mlp2)--(output_rnn_2);

\draw[->, dashed, line width = 1.5] (node_rnn) to [out=-180,in=-180](node_rnn2) node[midway, right]{};

\end{scope}

\begin{scope}
\node[] at (1.0,4.5) {\textbf{B}};

\coordinate (main) at (5.3, 2.6);
\coordinate (nmd1) at ($(main) + (-2.0, 0.0)$);
\node[line width=1pt,minimum size=0.5em, draw=black] at (main) (node_main1) {MLP};
\node[line width=1pt,minimum size=0.5em, draw=black] at (nmd1) (node_nmd1) {RNN};
\node[line width = 0pt] at ($(main)+(0.0,1.0)$) (input_main_1){$\mb{x}_t$};
\node[line width = 0pt] at ($(nmd1)+(0.0,1.0)$) (input_nmd_1){$[\mb{x}_{t-1},\mb{a}_{t-1},r_{t-1}]$};
\node[line width = 0pt] at ($(main)+(1.4,0.0)$) (output_rnn_1){$\mb{a}_{t}$};
\draw[->] (input_main_1)--(node_main1);
\draw[->] (input_nmd_1)--(node_nmd1);
\draw[-o] (node_nmd1)--(node_main1);
\draw[->] (node_main1)--(output_rnn_1);
\coordinate (main1) at (main);
\coordinate (nmdo) at (nmd1);

\coordinate (main) at (5.3, 0.4);

\coordinate (nmd1) at ($(main) + (-2.0, 0.0)$);
\node[line width=1pt,minimum size=0.5em, draw=black] at (main) (node_main1) {MLP};
\node[line width=1pt,minimum size=0.5em, draw=black] at (nmd1) (node_nmd2) {RNN};
\node[line width = 0pt] at ($(main)+(0.0,1.0)$) (input_main_1){$\mb{x}_{t+1}$};
\node[line width = 0pt] at ($(nmd1)+(0.0,1.0)$) (input_nmd_1){$[\mb{x}_{t},\mb{a}_{t},r_{t}]$};
\node[line width = 0pt] at ($(main)+(1.4,0.0)$) (output_rnn_1){$\mb{a}_{t+1}$};
\draw[->] (input_main_1)--(node_main1);
\draw[->] (input_nmd_1)--(node_nmd2);
\draw[-o] (node_nmd2)--(node_main1);
\draw[->] (node_main1)--(output_rnn_1);

\draw[->, dashed, line width = 1.5] (node_nmd1) to [out=-180,in=-180](node_nmd2) node[midway, right]{};

\draw[line width=1pt,black,draw, opacity=0.15, rounded corners] ($(rnn)+(-0.75,-0.5)$) rectangle ($(node_mlp)+(0.75,0.5)$);
\draw[line width=1pt,black,draw, opacity=0.15, rounded corners] ($(rnn2)+(-0.75,-0.5)$) rectangle ($(node_mlp2)+(0.75,0.5)$);

\draw[line width=1pt,black,draw, opacity=0.15, rounded corners] ($(nmdo)+(-0.75,-0.5)$) rectangle ($(main1)+(0.75,0.5)$);
\draw[line width=1pt,black,draw, opacity=0.15, rounded corners] ($(nmd1)+(-0.75,-0.5)$) rectangle ($(main)+(0.75,0.5)$);

\end{scope}

\end{tikzpicture}
\caption{Sketch of a standard recurrent network (\textbf{A}) and of an NMN (\textbf{B}) in a meta-RL framework. $\rightarrow$ represent standard connections, $\multimap$ represent a neuromodulatory connection, $\dashrightarrow$ represent temporal connections and $MLP$ stands for Multi-Layer Perceptron (standard feed-forward network).}
\label{fig:tikz_model}
\end{figure}
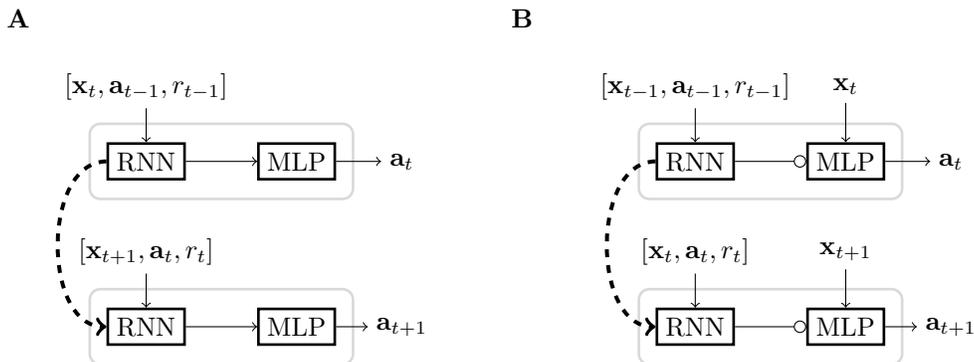

To be as similar as possible to the neuronal model proposed by \cite{drion2015neuronal}, the main network is a fully-connected neural network built using saturated rectified linear unit (sReLU) activation functions $\sigma(x) = \min(1,\max(-1, x))$, except for the final layer (also neuromodulated), for which $\sigma(x) = x$. In Section~\ref{sec:results}, we also report results obtained with sigmoidal activation functions which are often appreciably inferior to those obtained with sReLUs, further encouraging their use.

We built our models such that both standard RNN and NMN architectures have the same number of recurrent layers/units and a relative difference between the numbers of parameters that is lower than $2\%$. Both models are trained using an A2C algorithm with generalized advantage estimation \cite{GAE} and proximal policy updates \cite{PPO}. Finally, no parameter is shared between the actor and the critic. We motivate this choice by noting that the neuromodulatory signal might need to be different for the actor and the critic. For completeness and reproducibility, we provide a formal description of the algorithms used as supplementary material (\ref{app:RL}). This material aims mainly to describe and discuss standard RL algorithms in the context of meta-RL and, to a lesser extent, it aims to provide full implementation details. We also provide the exact neural architectures used for each benchmark as supplementary material (\ref{app:arch}).

\subsection{Benchmarks description}
\label{subsec:benchmarks}

We carried out our experiments on three custom benchmarks: a simple toy problem and two navigation problems with sparse rewards. These benchmarks were built to evaluate our architecture in environments with continuous action spaces. 
For conciseness and clarity, we only provide a mathematical definition of the first benchmark. The two other benchmarks are briefly textually depicted and further details are available as supplementary material (see \ref{app:problems}). Figures~\ref{fig:pres_bench1},~\ref{fig:pres_bench2}~and~\ref{fig:pres_bench3} are a graphical representation of each of the benchmarks.

\begin{figure}[htbp]
\centering
\definecolor{imblue}{HTML}{44BDDA}
\begin{tikzpicture}[scale=1, minimum size=1em]
\draw[white, line width=1pt] (-6,0) rectangle (7.75,4); 
\begin{scope}
\node[] at (-5.7,3.5) {\textbf{A}};
\coordinate (bar) at (-2.7, 2.0);
\coordinate (bar_l) at ($(bar)+(-2.7,-0.1)$);
\coordinate (bar_r) at ($(bar)+(2.7,0.1)$);
\coordinate (alpha) at (1.4, 0.1);
\draw[->, line width = 0.8pt, opacity=.8, black] ($(bar)+(alpha)+(0.0,0.7)$) -- ($(bar)+(alpha)$);
\draw[dashed, line width = 0.8pt, opacity=.6, black] ($(bar)+(alpha)+(0.4,0.7)$) -- ($(bar)+(alpha)+(0.4,-0.0)$);
\draw[dashed, line width = 0.8pt, opacity=.6, black] ($(bar)+(alpha)+(-0.4,0.7)$) -- ($(bar)+(alpha)+(-0.4,0.0)$);
\coordinate (state) at (-2.3, 0.1);
\draw[->, line width = 0.8pt, opacity=.8, imblue] ($(bar)+(state)+(0.0,0.7)$) -- ($(bar)+(state)$);
\coordinate (action) at (-1.6, 0.1);
\draw[->, line width = 0.8pt, opacity=.8, OliveGreen] ($(bar)+(action)+(0.0,0.7)$) -- ($(bar)+(action)$);
\draw[-, line width = 0.8pt, opacity=.8] ($(bar)+(0.0,0.3)$) -- ($(bar)+(0.0,0.1)$);
\draw[-, dashed, line width = 0.8pt, opacity=.65] ($(bar)+(0.0,0.1)$) -- ($(bar)+(0.0,-1.3)$);

\node[] at ($(bar)+(0.0,0.5)$) {\footnotesize $0$};

\draw[->, color = imblue, line width = 1.5] ($(bar)+(0.0,-0.3)$) -- ($(state)+(bar)+(0.0,-0.4)$) node[midway, below]{\footnotesize \textcolor{imblue}{$x_{t+1} = x_{t}$}};
\draw[->, color = OliveGreen, line width = 1.5] ($(bar)+(0.0,-0.7)$) -- ($(action)+(bar)+(0.0,-0.8)$) node[midway, below]{\footnotesize \textcolor{OliveGreen}{$a_t$}};
\draw[->, color = black, line width = 1.5] ($(bar)+(0.0,-0.3)$) -- ($(bar)+(alpha)+(0.0,-0.4)$) node[midway, below]{\footnotesize \textcolor{black}{$p_{t+1} = p_{t}$}};
\draw[->, color = imblue, line width = 1.5] ($(bar)+(alpha)+(0.0,-1.2)$) -- ($(state)+(bar)+(0.0,-1.2)$) node[midway, below]{\footnotesize \textcolor{imblue}{$\alpha$}};

\draw[black, line width=0.5pt] (bar_l) rectangle (bar_r); 

\end{scope}

\begin{scope}
\node[] at (1.0,3.5) {\textbf{B}};	    
\coordinate (bar) at (4, 2.0);
\coordinate (bar_l) at ($(bar)+(-2.7,-0.1)$);
\coordinate (bar_r) at ($(bar)+(2.7,0.1)$);
\coordinate (alpha) at (-1.3, 0.1);
\draw[->, line width = 0.8pt, opacity=.8, black] ($(bar)+(alpha)+(0.0,0.7)$) -- ($(bar)+(alpha)$);
\draw[dashed, line width = 0.8pt, opacity=.6, black] ($(bar)+(alpha)+(0.4,0.7)$) -- ($(bar)+(alpha)+(0.4,-0.0)$);
\draw[dashed, line width = 0.8pt, opacity=.6, black] ($(bar)+(alpha)+(-0.4,0.7)$) -- ($(bar)+(alpha)+(-0.4,0.0)$);
\coordinate (state) at (1.7, 0.1);
\draw[->, line width = 0.8pt, opacity=.8, imblue] ($(bar)+(state)+(0.0,0.7)$) -- ($(bar)+(state)$);
\coordinate (action) at (0.6, 0.1);
\draw[->, line width = 0.8pt, opacity=.8, OliveGreen] ($(bar)+(action)+(0.0,0.7)$) -- ($(bar)+(action)$);
\draw[-, line width = 0.8pt, opacity=.8] ($(bar)+(0.0,0.3)$) -- ($(bar)+(0.0,0.1)$);
\draw[-, dashed, line width = 0.8pt, opacity=.65] ($(bar)+(0.0,0.1)$) -- ($(bar)+(0.0,-1.3)$);

\node[] at ($(bar)+(0.0,0.5)$) {\footnotesize $0$};

\draw[->, color = imblue, line width = 1.5] ($(bar)+(0.0,-0.3)$) -- ($(state)+(bar)+(0.0,-0.4)$) node[midway, below]{\footnotesize \textcolor{imblue}{$x_{t+1} = x_{t}$}};
\draw[->, color = OliveGreen, line width = 1.5] ($(bar)+(0.0,-0.7)$) -- ($(action)+(bar)+(0.0,-0.8)$) node[midway, below]{\footnotesize \textcolor{OliveGreen}{$a_t$}};
\draw[->, color = black, line width = 1.5] ($(bar)+(0.0,-0.3)$) -- ($(bar)+(alpha)+(0.0,-0.4)$) node[midway, below]{\footnotesize \textcolor{black}{$p_{t+1} = p_{t}$}};
\draw[->, color = imblue, line width = 1.5] ($(bar)+(alpha)+(0.0,-1.2)$) -- ($(state)+(bar)+(0.0,-1.2)$) node[midway, below]{\footnotesize \textcolor{imblue}{$\alpha$}};

\draw[black, line width=0.5pt] (bar_l) rectangle (bar_r); 

\end{scope}

\end{tikzpicture}
\caption{Sketch of a time-step interaction between an agent and two different tasks $\mathcal{M}$ (\textbf{A} and \textbf{B}) sampled in $\mathcal{D}$ for the first benchmark. Each task is defined by the bias $\alpha$ on the target's position $p_t$ observed by the agent. $x_t$ is the observation made by the agent at time-step $t$ and $a_t$ its action. For these examples, $a_t$ falls outside the target area (the zone delimited by the dashed lines), and thus the reward $r_t$ received by the agent is equal to $-|a_t-p_t|$ and $p_{t+1} = p_{t}$. If the agent had taken an action near the target, then it would have received a reward equal to $10$ and the position of the target would have been re-sampled uniformly in $[-5-\alpha, 5-\alpha]$.} 
\label{fig:pres_bench1}
\end{figure}
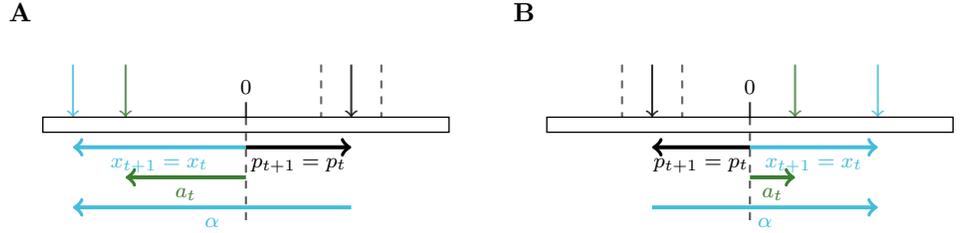

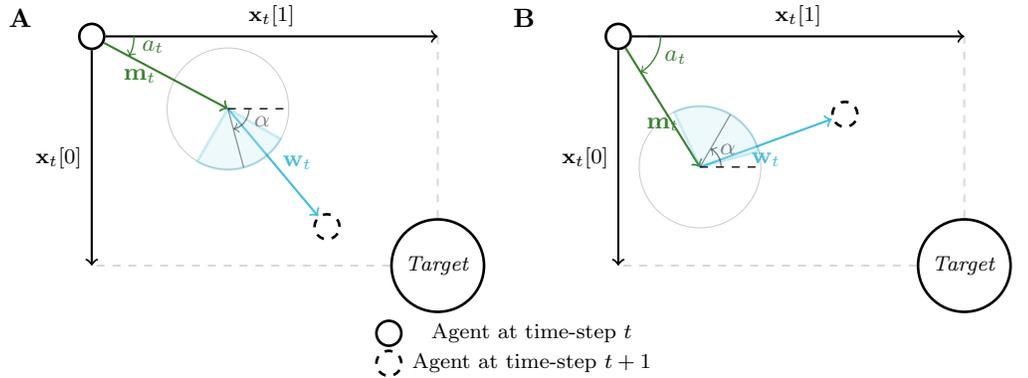
\begin{figure}[htbp]
\centering
\definecolor{imblue}{HTML}{44BDDA}
\begin{tikzpicture}[scale=1, minimum size=1em]
\draw[white, line width=1pt] (-6,0) rectangle (7.75,4); 
\begin{scope}
\node[] at (-5.7,5.0) {\textbf{A}};
\node[line width=1pt,minimum size=3.5em,circle, draw=black] at (-0.15,1.7) (target) {\footnotesize \textit{Target}};
\node[line width=1pt,minimum size=0.5em,circle, draw=black] at (-4.75,4.75) (agent) {};
\coordinate (link_b) at (-4.75,1.7);
\coordinate (link_r) at (-0.15,4.75);
\draw[->,line width=0.8pt] (agent) -- (link_b) node[midway, left] {\footnotesize $\mb{x}_t[0]$};
\draw[->,line width=0.8pt] (agent) -- (link_r) node[midway, above] {\footnotesize $\mb{x}_t[1]$};
\draw[dashed,line width=0.8pt, opacity=.15] (target) -- (link_b) {};
\draw[dashed,line width=0.8pt, opacity=.15] (target) -- (link_r) {};

\draw[->, line width=0.8pt, color=OliveGreen] (agent) --+ ([shift=(-28:2.05)]0,0) node[midway, left] {\small $\mb{m}_t$};

\node[line width=1pt,minimum size=0.5em,circle, draw=black, dashed] at ($(agent)+([shift=(-28:2.05)]0,0)+([shift=(-50:2.05)]0,0)$) (agentn) {};
\draw[->, line width=0.8pt, color=imblue] ($(agent)+([shift=(-28:2.05)]0,0)$) -- (agentn) node[midway, right] {\small $\mb{w}_t$};

\coordinate (center_circle) at ($(agent) + ([shift=(-28:2.05)]0,0)$);
\draw[opacity=.2] (center_circle) circle (2.3em);
\draw[imblue, line width=1pt, opacity=.3, fill=imblue!30] ([shift=(-30:2.3em)]$(center_circle)$) arc[start angle=-30, end angle=-120, radius=2.3em] -- (center_circle) -- cycle;
\draw[opacity=.4, line width=0.5pt] (center_circle) -- ([shift=(-75:2.3em)]$(center_circle)$);
\draw[dashed, line width=0.5pt] (center_circle) -- ([shift=(0:2.3em)]$(center_circle)$);

\draw[->, black, line width=0.5pt, opacity=.5] ([shift=(0:0.8em)]$(center_circle)$) arc[start angle=0, end angle=-75, radius=0.8em] node[midway, right] {\small $\alpha$};
\draw[->,black, line width=0.5pt, opacity=1., color=OliveGreen] ([shift=(0:1.6em)]$(agent)$) arc[start angle=0, end angle=-28, radius=1.6em] node[midway, right] {\small $a_t$};

\end{scope}

\begin{scope}
\node[] at (1.0,5.0) {\textbf{B}};	    

\node[line width=1pt,minimum size=3.5em,circle, draw=black] at (6.85,1.7) (target) {\footnotesize \textit{Target}};
\node[line width=1pt,minimum size=0.5em,circle, draw=black] at (2.25,4.75) (agent) {};
\coordinate (link_b) at (2.25,1.7);
\coordinate (link_r) at (6.85,4.75);
\draw[->,line width=0.8pt] (agent) -- (link_b) node[midway, left] {\footnotesize $\mb{x}_t[0]$};
\draw[->,line width=0.8pt] (agent) -- (link_r) node[midway, above] {\footnotesize $\mb{x}_t[1]$};
\draw[dashed,line width=0.8pt, opacity=.15] (target) -- (link_b) {};
\draw[dashed,line width=0.8pt, opacity=.15] (target) -- (link_r) {};

\draw[->, line width=0.8pt, color=OliveGreen] (agent) --+ ([shift=(-58:2.05)]0,0) node[midway, below] {\small $\mb{m}_t$};
\node[line width=1pt,minimum size=0.5em,circle, draw=black, dashed] at ($(agent)+([shift=(-58:2.05)]0,0)+([shift=(20:2.05)]0,0)$) (agentn) {};

\draw[->, line width=0.8pt, color=imblue] ($(agent)+([shift=(-58:2.05)]0,0)$) -- (agentn) node[midway, below] {\small $\mb{w}_t$};

\coordinate (center_circle) at ($(agent) + ([shift=(-58:2.05)]0,0)$);
\draw[opacity=.2] (center_circle) circle (2.3em);
\draw[imblue, line width=1pt, opacity=.3, fill=imblue!30] ([shift=(15:2.3em)]$(center_circle)$) arc[start angle=15, end angle=115, radius=2.3em] -- (center_circle) -- cycle;
\draw[opacity=.4, line width=0.5pt] (center_circle) -- ([shift=(60:2.3em)]$(center_circle)$);
\draw[dashed, line width=0.5pt] (center_circle) -- ([shift=(0:2.3em)]$(center_circle)$);

\draw[->, black, line width=0.5pt, opacity=.5] ([shift=(0:0.8em)]$(center_circle)$) arc[start angle=0, end angle=60, radius=0.8em] node[right] {\small $\alpha$};
\draw[->,black, line width=0.5pt, opacity=1., color=OliveGreen] ([shift=(0:1.6em)]$(agent)$) arc[start angle=0, end angle=-58, radius=1.6em] node[midway, right] {\small $a_t$};

\draw[dashed, line width = 0.5, opacity=.5] (agent) --+ (0.5,0);

\end{scope}

\node[line width=1pt,minimum size=0.5em,circle, draw=black] at (-0.8,0.8) (agent) {};
\node[] at (1.1,0.8) (agent) {\footnotesize{Agent at time-step $t$}};
\node[dashed,line width=1pt,minimum size=0.5em,circle, draw=black] at (-0.8,0.4) (agent) {};
\node[] at (1.1,0.4) (agent) {\footnotesize{Agent at time-step $t+1$}};

\end{tikzpicture}
\caption{Sketch of a time-step interaction between an agent and two different tasks $\mathcal{M}$ (\textbf{A} and \textbf{B}) sampled in $\mathcal{D}$ for the second benchmark. Each task is defined by the main direction $\alpha$ of a wind cone from which a perturbation vector $\mb{w}_t$ is sampled at each time-step. This perturbation vector is then applied to the movement $m_t$ of the agent, whose direction is given by the action $a_t$. If the agent reaches the target, it receives a reward of $100$, otherwise a reward of $-2$.}
\label{fig:pres_bench2}
\end{figure}

\begin{figure}[htbp]
\centering
\definecolor{imblue}{HTML}{44BDDA}
\begin{tikzpicture}[scale=1, minimum size=1em]
\draw[white, line width=1pt] (-6,0) rectangle (7.75,4); 
\begin{scope}
\node[] at (-5.7,5.0) {\textbf{A}};


\coordinate (target1) at (-4.35,3.35);
\coordinate (target2) at (-2.3,2.0);
\coordinate (agent) at (-0.15,4.9);

\node[line width=1pt, minimum size=3.0em, circle, draw=imblue] at (target1) (target1n) {\footnotesize \textit{\textcolor{imblue}{Target 1}}};
\node[line width=1pt, minimum size=3.0em,  circle, draw=red] at (target2) (target2n){\footnotesize \textit{\textcolor{red}{Target 2}}};

\node[line width=1pt, draw=black, circle] at (agent) (agentn){};

\draw[->, line width = 0.8, color=imblue] let \p1=(agentn), \p2=(target1) in (agentn) -- (\x1,\y2) node[midway, right] {\small \footnotesize \textcolor{imblue}{$\mb{x}_t[0]$}};
\draw[->, line width = 0.8, color=imblue] let \p1=(agentn), \p2=(target1) in (agentn) -- (\x2,\y1) node[midway, above] {\small \footnotesize \textcolor{imblue}{$\mb{x}_t[1]$}};
\draw[->, line width = 0.8, color=red] let \p1=(agentn), \p2=(target2) in (agentn) -- (\x1,\y2) node[midway, right] {\small \footnotesize \textcolor{red}{$\mb{x}_t[2]$}};
\draw[->, line width = 0.8, color=red] let \p1=(agentn), \p2=(target2) in (agentn) -- (\x2,\y1) node[midway, above] {\small \footnotesize \textcolor{red}{$\mb{x}_{t}[3]$}};
\draw[->, line width = 0.8, color=imblue] let \p1=(agentn), \p2=(target1) in (agentn) -- (\x1,\y2) node[midway, right] {\small \footnotesize \textcolor{imblue}{$\mb{x}_t[0]$}};

\draw[-,dashed, line width = 0.8, opacity=.15] let \p1=(agentn), \p2=(target1) in (target1n) -- (\x1,\y2);
\draw[-,dashed, line width = 0.8,opacity=.15] let \p1=(agentn), \p2=(target1) in (target1n) -- (\x2,\y1);
\draw[-,dashed, line width = 0.8,opacity=.15] let \p1=(agentn), \p2=(target2) in (target2n) -- (\x1,\y2);
\draw[-, dashed, line width = 0.8,opacity=.15] let \p1=(agentn), \p2=(target2) in (target2n) -- (\x2,\y1);

\draw[->,black, line width=0.8pt, opacity=1., color=OliveGreen] ([shift=(0:1.6em)]$(agent)$) arc[start angle=0, end angle=-143, radius=1.6em] node[midway, right] {\footnotesize $a_t$};

\coordinate (new_agent) at ($(agent) + ([shift=(-143:1.5)]0,0)$);
\node[dashed,line width=1pt, draw=black, circle] at (new_agent) (aa){};

\draw[->, color=OliveGreen, line width = 0.8, opacity = 1.] (agentn) -- (aa);

\draw[->,black, line width=0.8pt, opacity=1., color=OliveGreen] ([shift=(0:1.6em)]$(new_agent)$) arc[start angle=0, end angle=-180, radius=1.6em] node[midway, below] {\footnotesize $a_{t+1}$};

\draw[->, color=OliveGreen, line width = 0.8, opacity = 1.] (agentn) -- (aa);
\coordinate (new_aa) at ($(new_agent) + ([shift=(-180:1.5)]0,0)$);
\draw[->, color=OliveGreen, line width = 0.8, opacity = 1.] (aa) -- (new_aa);

\draw[-, dashed, line width = 0.5, opacity = .3] (agentn) -- ($(agentn) + (0.5,0.0)$);
\draw[-, dashed, line width = 0.5, opacity = .3] (aa) -- ($(aa) + (0.5,0.0)$);

\end{scope}

\begin{scope}
\node[] at (1.0,5.0) {\textbf{B}};	    


\coordinate (target1) at (4.5,4.55);
\coordinate (target2) at (6.55,2.75);
\coordinate (agent) at (2.35,1.5);

\node[line width=1pt, minimum size=3.0em, circle, draw=red] at (target1) (target1n) {\footnotesize \textit{\textcolor{red}{Target 1}}};
\node[line width=1pt, minimum size=3.0em,  circle, draw=imblue] at (target2) (target2n){\footnotesize \textit{\textcolor{imblue}{Target 2}}};

\node[line width=1pt, draw=black, circle] at (agent) (agentn){};
\draw[->, line width = 0.8, color=red] let \p1=(agentn), \p2=(target1) in (agentn) -- (\x1,\y2) node[midway, left] {\small \footnotesize \textcolor{red}{$\mb{x}_t[0]$}};
\draw[->, line width = 0.8, color=imblue] let \p1=(agentn), \p2=(target2) in (agentn) -- (\x1,\y2) node[midway, left] {\small \footnotesize \textcolor{imblue}{$\mb{x}_t[2]$}};
\draw[->, line width = 0.8, color=imblue] let \p1=(agentn), \p2=(target2) in (agentn) -- (\x2,\y1) node[midway, below] {\small \footnotesize \textcolor{imblue}{$\mb{x}_{t}[3]$}};
\draw[->, line width = 0.8, color=red] let \p1=(agentn), \p2=(target1) in (agentn) -- (\x2,\y1) node[midway, below] {\small \footnotesize \textcolor{red}{$\mb{x}_t[1]$}};

\draw[-,dashed, line width = 0.8, opacity=.15] let \p1=(agentn), \p2=(target1) in (target1n) -- (\x1,\y2);
\draw[-,dashed, line width = 0.8,opacity=.15] let \p1=(agentn), \p2=(target1) in (target1n) -- (\x2,\y1);
\draw[-,dashed, line width = 0.8,opacity=.15] let \p1=(agentn), \p2=(target2) in (target2n) -- (\x1,\y2);
\draw[-, dashed, line width = 0.8,opacity=.15] let \p1=(agentn), \p2=(target2) in (target2n) -- (\x2,\y1);

\draw[->,black, line width=0.8pt, opacity=1., color=OliveGreen] ([shift=(0:1.6em)]$(agent)$) arc[start angle=0, end angle=28, radius=1.6em] node[midway, right] {\footnotesize $a_t$};

\coordinate (new_agent) at ($(agent) + ([shift=(28:1.5)]0,0)$);
\node[dashed,line width=1pt, draw=black, circle] at (new_agent) (aa){};

\draw[->,black, line width=0.8pt, opacity=1., color=OliveGreen] ([shift=(0:1.6em)]$(new_agent)$) arc[start angle=0, end angle=60, radius=1.6em] node[midway, right] {\footnotesize $a_{t+1}$};

\draw[->, color=OliveGreen, line width = 0.8, opacity = 1.] (agentn) -- (aa);
\coordinate (new_aa) at ($(new_agent) + ([shift=(60:1.5)]0,0)$);
\draw[->, color=OliveGreen, line width = 0.8, opacity = 1.] (aa) -- (new_aa);

\draw[-, dashed, line width = 0.5, opacity = .3] (agentn) -- ($(agentn) + (0.5,0.0)$);
\draw[-, dashed, line width = 0.5, opacity = .3] (aa) -- ($(aa) + (0.5,0.0)$);
\end{scope}

\coordinate (legend_b) at (0.0, 1.0);
\coordinate (legend_r) at ($(legend_b)+(0,-0.4)$);
\draw[fill = imblue] (legend_b) circle(0.2em);
\draw[fill = red] (legend_r) circle(0.2em);
\node[] at ($(legend_b)+(0.7,0.0)$) {\footnotesize $r = 100$};
\node[] at ($(legend_r)+(0.73,0.0)$) {\footnotesize $r = -50$};
\end{tikzpicture}
\caption{Sketch of a time-step interaction between an agent for the two different tasks $\mathcal{M}$ (\textbf{A} and \textbf{B}) sampled in $\mathcal{D}$ for the third benchmark. Each task is defined by the attribution of a positive reward to one of the two targets (in blue) and a negative reward to the other (in red). At each time-step the agent outputs an action $a_t$ which drives the direction of its next move. If the agent reaches a target, it receives the corresponding reward.}
\label{fig:pres_bench3}
\end{figure}
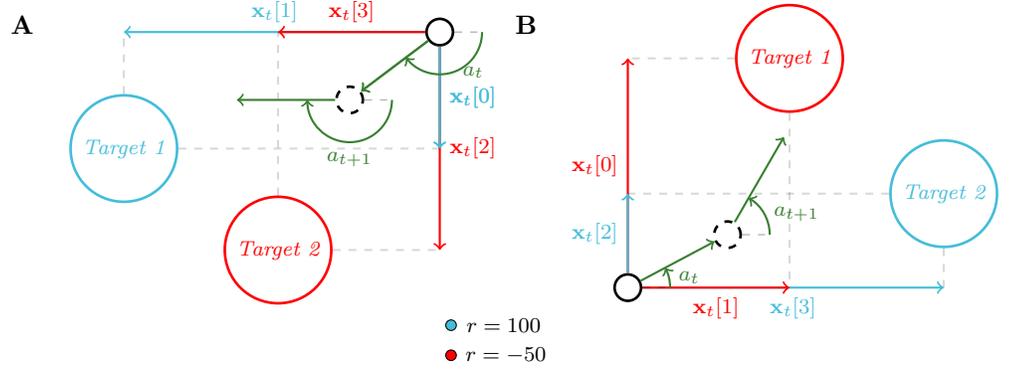

\paragraph{Benchmark 1.}
We define the first benchmark (made of a 1-D state space and action space) through a random variable $\alpha$, informative enough to distinguish all different MDPs in $\mathcal{D}$. With this definition, $\alpha$ represents the current task and drawing $\alpha$ at the beginning of each episode amounts to sampling a new task in $\mathcal{D}$. At each time-step, the agent observes a biased version $x_t = p_t + \alpha$ of the exact position of a target $p_t$ belonging to the interval $[-5 - \alpha, 5 - \alpha]$, with $\alpha \sim \mathbb{U}[-10,10]$. The agent outputs an action $a_t \in \left[-20, 20\right]$ and receives a reward $r_t$ which is equal to $10$ if $|a_t - p_t| < 1$ and $-|a_t - p_t|$ otherwise. In case of positive reward, $p_{t + 1}$ is re-sampled uniformly in its domain, else $p_{t+1} = p_t$. This benchmark is represented on Figure~\ref{fig:pres_bench1}.

\paragraph{Benchmark 2.}
The second benchmark consists of navigating towards a target in a 2-D space with noisy movements. Similarly to the first benchmark, we can distinguish all different MDPs in $\mathcal{D}$ through a three-dimensional random vector of variables $\boldsymbol{\alpha}$. The target is placed at $(\boldsymbol{\alpha}[1],\boldsymbol{\alpha}[2])$ in the 2-D space. At each time-step, the agent observes its relative position to the target and outputs the direction of a move vector $\mb{m}_t$. A perturbation vector $\mb{w}_t$ is then sampled uniformly in a cone, whose main direction $\boldsymbol{\alpha}[3] \sim \mathbb{U}[-\pi,\pi[$, together with the target's position, define the current task in $\mathcal{D}$. Finally the agent is moved following $\mb{m}_t + \mb{w}_t$ and receives a reward ($r_t = -0.2$). If the agent reaches the target, it instead receives a high reward ($r_t = 100$) and is moved to a position sampled uniformly in the 2-D space. This benchmark is represented on Fig~\ref{fig:pres_bench2}

\paragraph{Benchmark 3.}
The third benchmark also involves navigating in a 2-D space, but which contains two targets. As for the two previous benchmarks, we distinguish all different MDPs in $\mathcal{D}$ through a five-dimensional random vector of variables $\boldsymbol{\alpha}$. The targets are placed at positions $(\boldsymbol{\alpha}[1],\boldsymbol{\alpha}[2])$ and $(\boldsymbol{\alpha}[3],\boldsymbol{\alpha}[4])$. At each time-step, the agent observes its relative position to the two targets and is moved along a direction given by its action. One target, defined by the task in $\mathcal{D}$ through $\boldsymbol{\alpha}[5]$, is attributed a positive reward ($100$) and the other a negative reward ($-50$). In other words, $\boldsymbol{\alpha}[5]$ is a Bernoulli variable that determines which target is attributed the positive reward and which is attributed the negative one. As for benchmark 2, once the agent reaches a target, it receives the corresponding reward and is moved to a position sampled uniformly in the 2-D space. This benchmark is represented on Figure~\ref{fig:pres_bench3}.

\section{Results}
\label{sec:results}
\paragraph{Learning.}
\begin{figure}
\centering
\includegraphics[width = 1.0\textwidth]{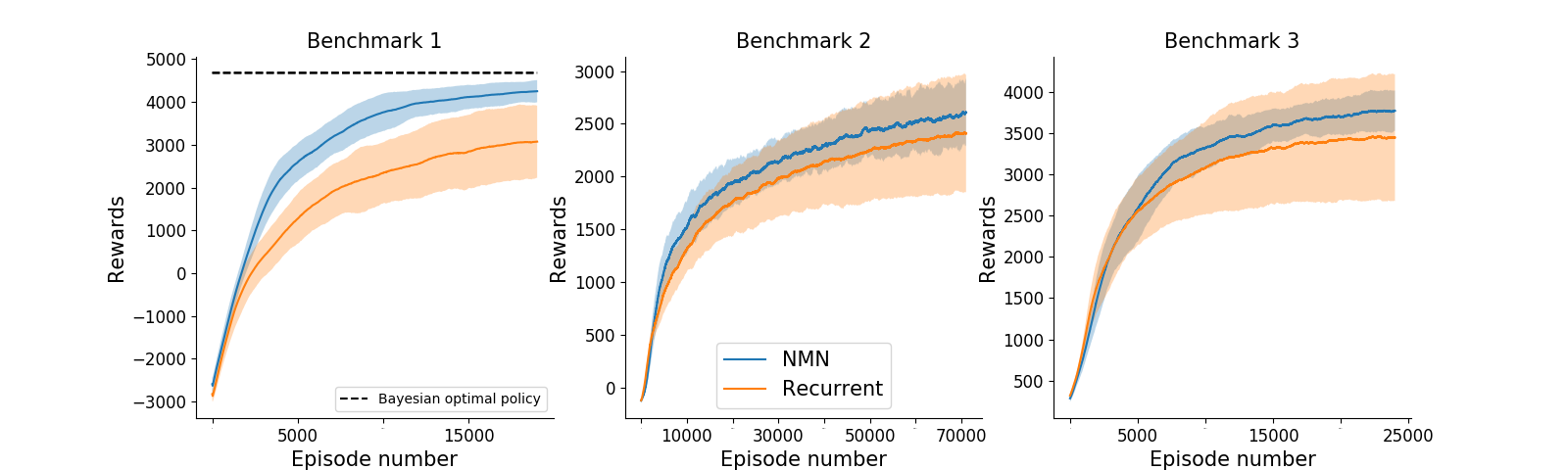}
\caption{Mean ($\pm$ std in shaded) sum of rewards obtained over $15$ training runs with different random seeds with respect to the episode number. Results of benchmark 1,2 and 3 are displayed from left to right. The plots are smoothed thanks to a running mean over $1000$ episodes.}
\label{fig:results}
\end{figure}

From a learning perspective, a comparison of the sum of rewards obtained per episode by NMNs and RNNs on the three benchmarks is shown in Figure~\ref{fig:results}. Results show that, on average, NMNs learn faster (with respect to the number of episodes) and converge towards better policies than RNNs (i.e., higher rewards for the last episodes). It is worth mentioning that, NMNs show very stable results, with small variances over different random seeds, as opposed to RNNs. To put the performance of the NMN in perspective, we note that an optimal Bayesian policy would achieve an expected sum of rewards of $4679$ on benchmark 1 (see \ref{app:opt} for proof) whereas NMNs reach, after $20000$ episodes, an expected sum of rewards of $4534$. For this simple benchmark, NMNs manage to learn near-optimal Bayesian policies.

\paragraph{Adaptation.}
From an adaptation perspective, Figure~\ref{fig:results_nmd} shows the temporal evolution of the neuromodulatory signal $\mathbf{z}$ (part \textbf{A}), of the scale factor (for each neuron of a hidden layer, part \textbf{B}) and of the rewards (part \textbf{C}) obtained with respect to $\alpha$ for $1000$ episodes played on benchmark $1$. 
For small values of $t$, the agent has little information on the current task, leading to a non-optimal behaviour (as it can be seen from the low rewards). 
Of greatest interest, the signal $\mb{z}$ for the first time-steps exhibits little dependence on $\alpha$, highlighting the agent uncertainty on the current task and translating to noisy scale factors. 
Said otherwise, for small $t$, the agent learned to play a (nearly) task-independent strategy. As time passes, the agent gathers further information about the current task and approaches a near-optimal policy. This is reflected in the convergence of $\mb{z}$ (and thus scale factors) with a clear dependency on $\alpha$ and also in wider-spread values of $\mb{z}$. 
For a large value of $t$, $\mb{z}$ holding constant between time-steps shows that the neuromodulatory signal is almost state-independent and serves only for adaptation. We note that the value of $\mathbf{z}$ in each of its dimensions varies continuously with $\alpha$, meaning that for two similar tasks, the signal will converge towards similar values. Finally, it is interesting to look at the neurons scale factor variation with respect to $\alpha$ (\textbf{B}). Indeed, for some neurons, one can see that the scale factors vary between negative and positive values, effectively inverting the slope of the activation function. Furthermore, it is interesting to see that some neurons are inactive (scale factor almost equal to $0$, leading to a constant activation function) for some values of $\alpha$.

\begin{figure}[H]
\centering
\includegraphics[width = \textwidth]{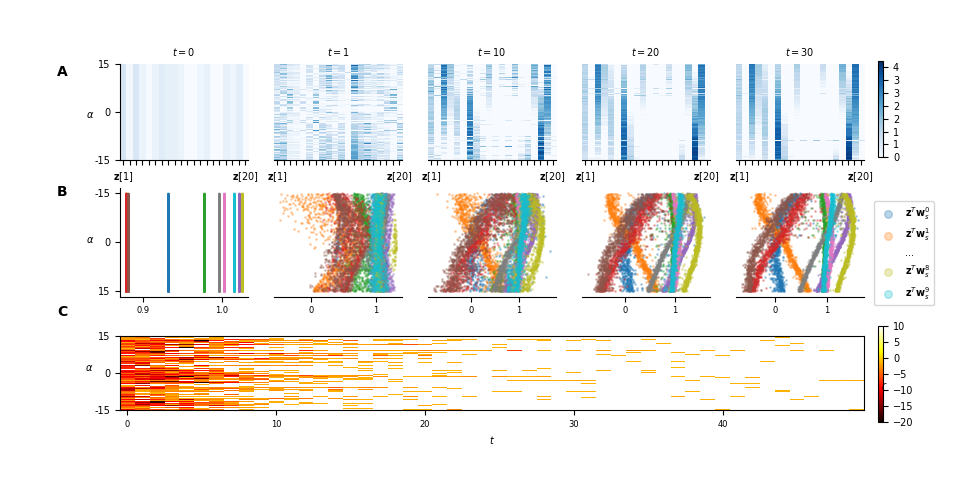}
\caption{Adaptation capabilities of the NMN architecture on benchmark 1. \textbf{A.} Temporal evolution of the neuromodulatory signal $\mathbf{z}$ with respect to $\alpha$, gathered on $1000$ different episodes. Note that the neuromodulatory signals go from uniform distributions over all possible $\alpha$ values (i.e., the different contexts) to non-uniform and adapted (w.r.t. $\alpha$) distributions along with an increase in the rewards. \textbf{B.} The value of the scale factors with respect to $\alpha$ for each neuron of a hidden layer in the main network.
\textbf{C.} Rewards obtained at each time-step by the agent during those episodes. Note that light colours represent high rewards and correspond to adapated neuromodulatory signals.} 
\label{fig:results_nmd}
\end{figure}

For benchmark 2, let us first note that $\mb{z}$ seems to code exclusively for $\boldsymbol{\alpha}[3]$. Indeed, $\mb{z}$ converges slowly with time with respect to $\boldsymbol{\alpha}[3]$, whatever the value of $\boldsymbol{\alpha}[1]$ and $\boldsymbol{\alpha}[2]$ (Figure~\ref{fig:results_nmd_b2}). This, could potentially be explained by the fact that one does not need the values of $\boldsymbol{\alpha}[1]$ and $\boldsymbol{\alpha}[2]$ to compute an optimal move. The graphs on Figure~\ref{fig:results_nmd_b2} are projected on the dimension $\boldsymbol{\alpha}[3]$, allowing the same analysis as for benchmark 1.

The results obtained for benchmark 2 (Figure~\ref{fig:results_nmd_b2}) show similar characteristics. Indeed, despite the agent receiving only noisy information on $\boldsymbol{\alpha}[3]$ at each time-step (as perturbation vectors are sampled uniformly in a cone centered on $\boldsymbol{\alpha}[3]$), $\mb{z}$ quasi-converges slowly with time (part \textbf{A}). The value of $\mathbf{z}$ in each of its dimensions also varies continuously with $\boldsymbol{\alpha}[3]$ (as for the first benchmark) resulting also in continuous scale factors variations. This is clearly highlighted at time-step $100$ on Figure~\ref{fig:results_nmd_b2} where the scale factors of some neurons appear highly asymmetric, but with smooth variations with respect to $\boldsymbol{\alpha}[3]$. Finally, let us highlight that for this benchmark, the agent continues to adapt even when it is already performing well. Indeed, one can see that after $40$ time-steps the agent is already achieving good results (part \textbf{C}), even though $\mb{z}$ has not yet converged (part \textbf{A}), which is due to the stochasticity of the environment. Indeed, the agent only receives noised information on $\alpha$ and thus after $40$ time-steps it has gathered sufficient information to act well on the environment, but insufficient information to deduce a near-exact value of $\boldsymbol{\alpha}[3]$. This shows that the agent can perform well, even while it is still gathering relevant information on the current task.

\begin{figure}[H]
\centering
\includegraphics[width = 1.01\textwidth]{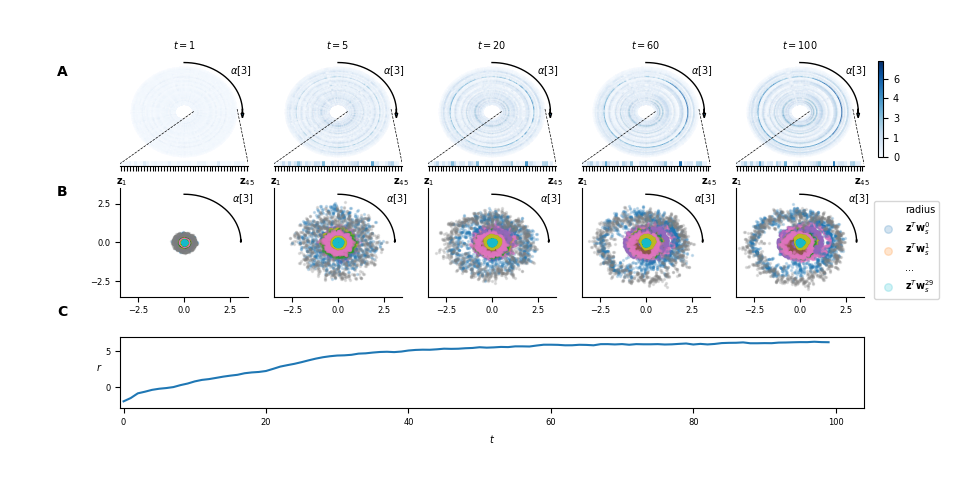}
\caption{Adaptation capabilities of the NMN architecture on benchmark 2. \textbf{A.} Temporal evolution of the neuromodulatory signal $\mathbf{z}$ with respect to $\boldsymbol{\alpha}[3]$, gathered on $1000$ different episodes. As $\boldsymbol{\alpha}[3]$ is an angle, the plot is projected in polar coordinates for a better interpretability of the results. Each dimension of $\mb{z}$ is corresponds to a different radius. \textbf{B.} The value of the scale factors with respect to $\boldsymbol{\alpha}[3]$ for each neuron of a hidden layer in the main network. Again, the plot is projected in polar coordinates. For a given $\boldsymbol{\alpha}[3]$, the values of the neurons' scale factor are given thanks to the radius.
\textbf{c.} Average reward obtained at each time-step by the agent during those episodes. Note that after an average of $40$ time-steps, the agent is already achieving decent performances even though $\mb{z}$ has not yet converged.} 
\label{fig:results_nmd_b2}
\end{figure}

It is harder to interpret the neuromodulatory signal for benchmark 3. In fact, for that benchmark, we show that the signal seems to code not only for the task in $\mathcal{D}$ but also for the state of the agent in some sense. As $\boldsymbol{\alpha}$ is five-dimensional, it would be very difficult to look at its impact on $\mb{z}$ as a whole. Rather, we fix the position of the two references in the 2-D space and look at the behaviour of $\mb{z}$ with respect to $\boldsymbol{\alpha}[5]$. In Figure~\ref{fig:results_nmd_3} adaptation is clearly visible in the rewards obtained by the agent (part \textbf{C}) with very few negative rewards after $30$ time-steps. We note that for later time-steps, $\mb{z}$ tends to partially converge (\textbf{A}) and :
\begin{itemize}
    \item some dimensions of $\mb{z}$ are constant with respect to $\boldsymbol{\alpha}[5]$, indicating that they might be coding for features related to $\boldsymbol{\alpha}[1,2,3,4]$.
    \item Some other dimensions are well correlated to $\boldsymbol{\alpha}[5]$, for which similar observations than for the two other benchmarks can be made. For example, one can see that some neurons have a very different scale factors for the two possible different values of $\boldsymbol{\alpha}[5]$ (\textbf{B}).
    \item The remaining dimensions do not converge at all, implying that these are not related to $\boldsymbol{\alpha}$, but rather to the state of the agent.
\end{itemize}

\begin{figure}[H]
\centering
\includegraphics[width = 1.01\textwidth]{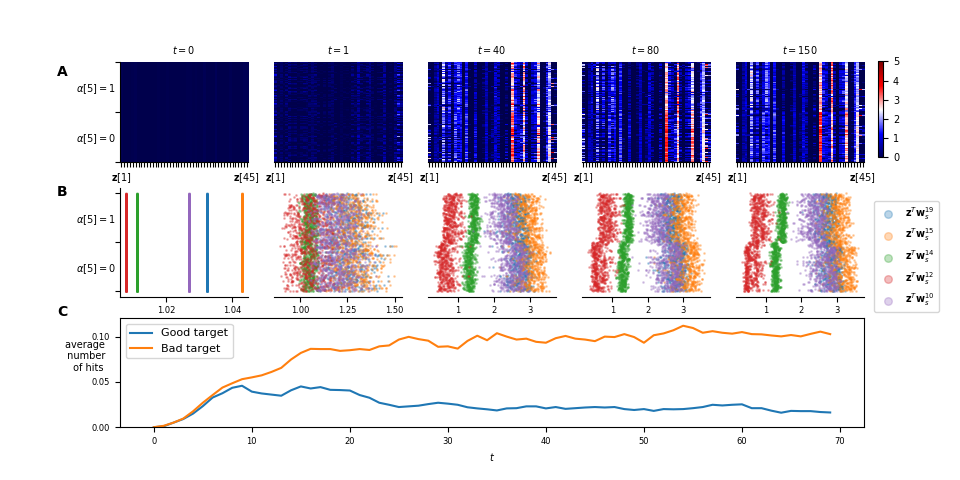}
\caption{Adaptation capabilities of the NMN architecture on benchmark 3. \textbf{A.} Temporal evolution of the neuromodulatory signal $\mathbf{z}$ with respect to $\boldsymbol{\alpha}[5]$, gathered on $1000$ different episodes. Note that the neuromodulatory signals go from uniform distributions over all possible alpha values (i.e., the different contexts) to non-uniform and adapted (w.r.t. alpha) distributions along with an increase of the rewards. \textbf{B.} The value of the scale factors with respect to $\boldsymbol{\alpha}[5]$ for the $5$ neurons of a hidden layer in the main network, for which the scale factor is the most correlated to $\boldsymbol{\alpha}[5]$.
\textbf{C.} Average number of good and bad target hits at each time-step during those episodes. On average, after $15$ time-steps, the agent starts navigating towards the correct target while avoiding the wrong one.} 
\label{fig:results_nmd_3}
\end{figure}

These results suggest that in this case, the neuromodulation network is used to code more complex information than simply that required to differentiate tasks, making $\mb{z}$ harder to interpret. Despite $\mb{z}$ not converging on some of its dimensions, we stress that freezing $\mb{z}$ after adaptation will not strongly degrade the agent's performance. That is, the features coded in $\mb{z}$ that do not depend on $\boldsymbol{\alpha}$ are not critical to the performance of the agent. To illustrate this, we will analyse the behaviour of the agent within an episode when freezing and unfreezing the neuromodulation signal and when changing task. This behaviour is shown on Figure~\ref{fig:freeze_nmd}, for which:

\begin{enumerate}
    \item[\textbf{(a)}] Shows the behaviour of the agent when $\mb{z}$ is locked to its initial value. This plot thus shows the initial "exploration" strategy used by the agent; that is, the strategy played by the agent when it has not gathered any information on the current task.
    \item[\textbf{(b)}] Shows the behaviour of the agent after unlocking $\mb{z}$, that is when the agent is able to adapt freely to the current task by updating $\mb{z}$ at each time-step.
    \item[\textbf{(c)}] Shows the behaviour of the agent when locking $\mb{z}$ at a random time-step after adaptation. $\mb{z}$ is thus fixed at a value which fits well the current task. As one can see, the agent continues to navigate towards the correct target. The performance is however a slightly degraded as the agent seems to lose some capacity to avoid the wrong target. This further suggests that, in this benchmark (as opposed to the two others), the neuromodulation signal does not only code for the current task but also for the current state, in some sense, that is hard to interpret.
    \item[\textbf{(d)}] Shows the same behaviour as in \textbf{(c)} as $\mb{z}$ is still locked to the same value, but the references are now switched. As there is no adaptation without updating $\mb{z}$; the agent is now always moving towards to wrong target.
    \item[\textbf{(e)}] Shows the behaviour of the agent when unlocking $\mb{z}$ once again. As one can see, the agent is now able to adapt correctly by updating $\mb{z}$ at each time-step, and thus it navigates towards the correct target once again.
\end{enumerate}

\begin{figure*}[t!]
    \centering
        \subcaptionbox{Fixed initial $\mb{z}$.}{\adjincludegraphics[height=0.83in,trim={0 0 0 {.5\height}},clip]{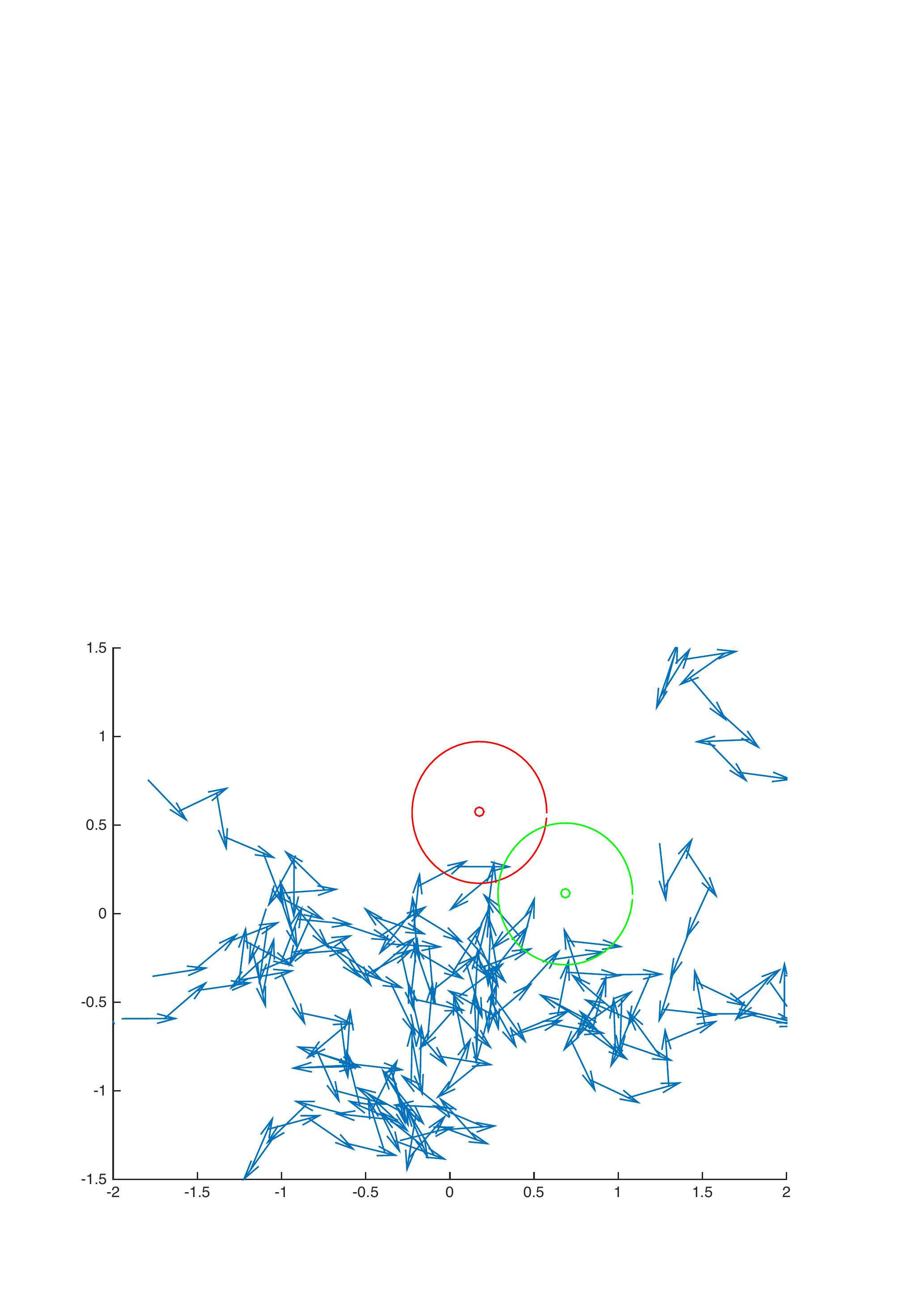}        \hspace{-7.00mm}}
        \subcaptionbox{Unlocking $\mb{z}$.}{\adjincludegraphics[height=0.83in,trim={0 0 0 {.5\height}},clip]{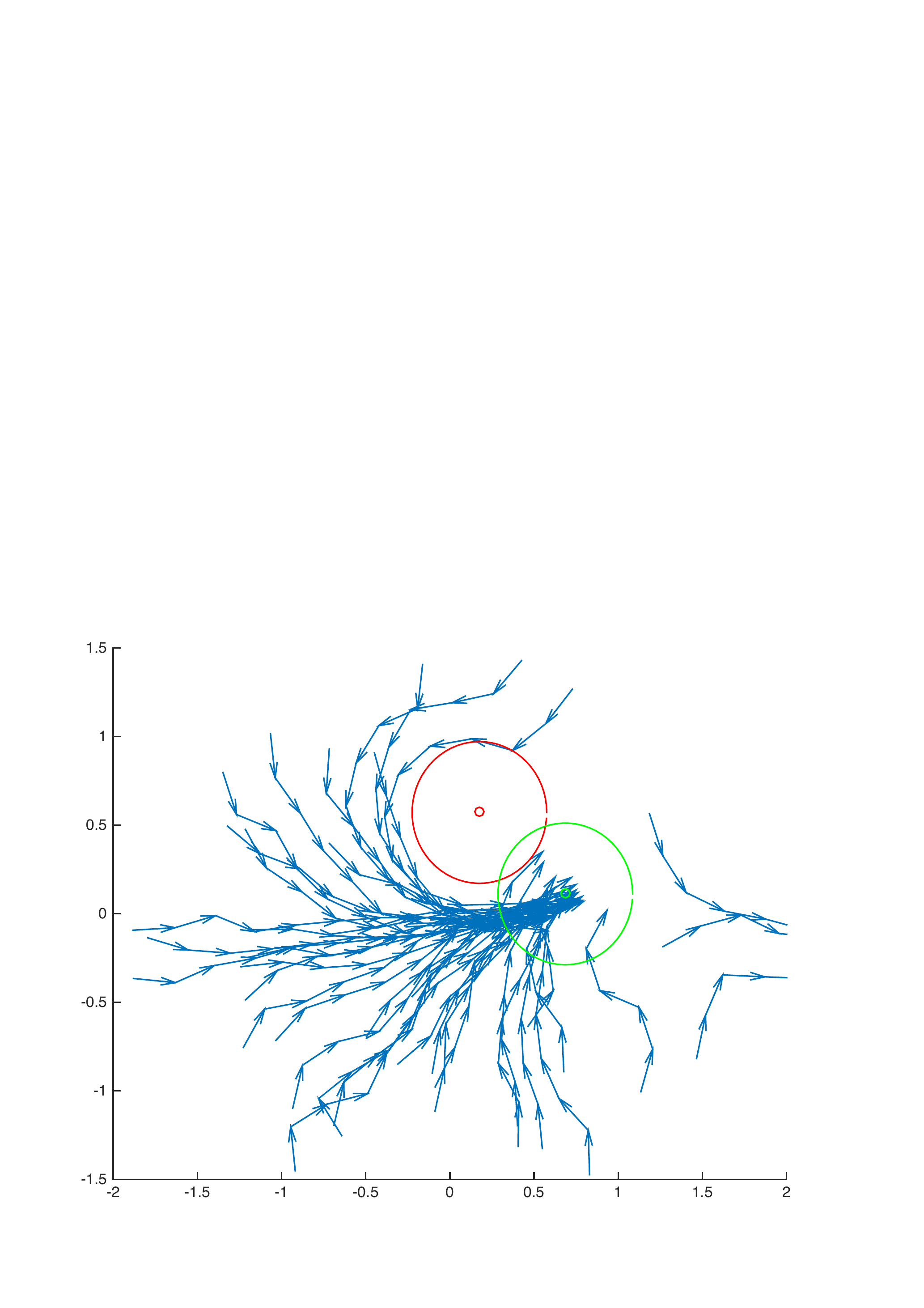}
                \hspace{-7.00mm}}
        \subcaptionbox{Locking $\mb{z}$.}{\adjincludegraphics[height=0.83in,trim={0 0 0 {.5\height}},clip]{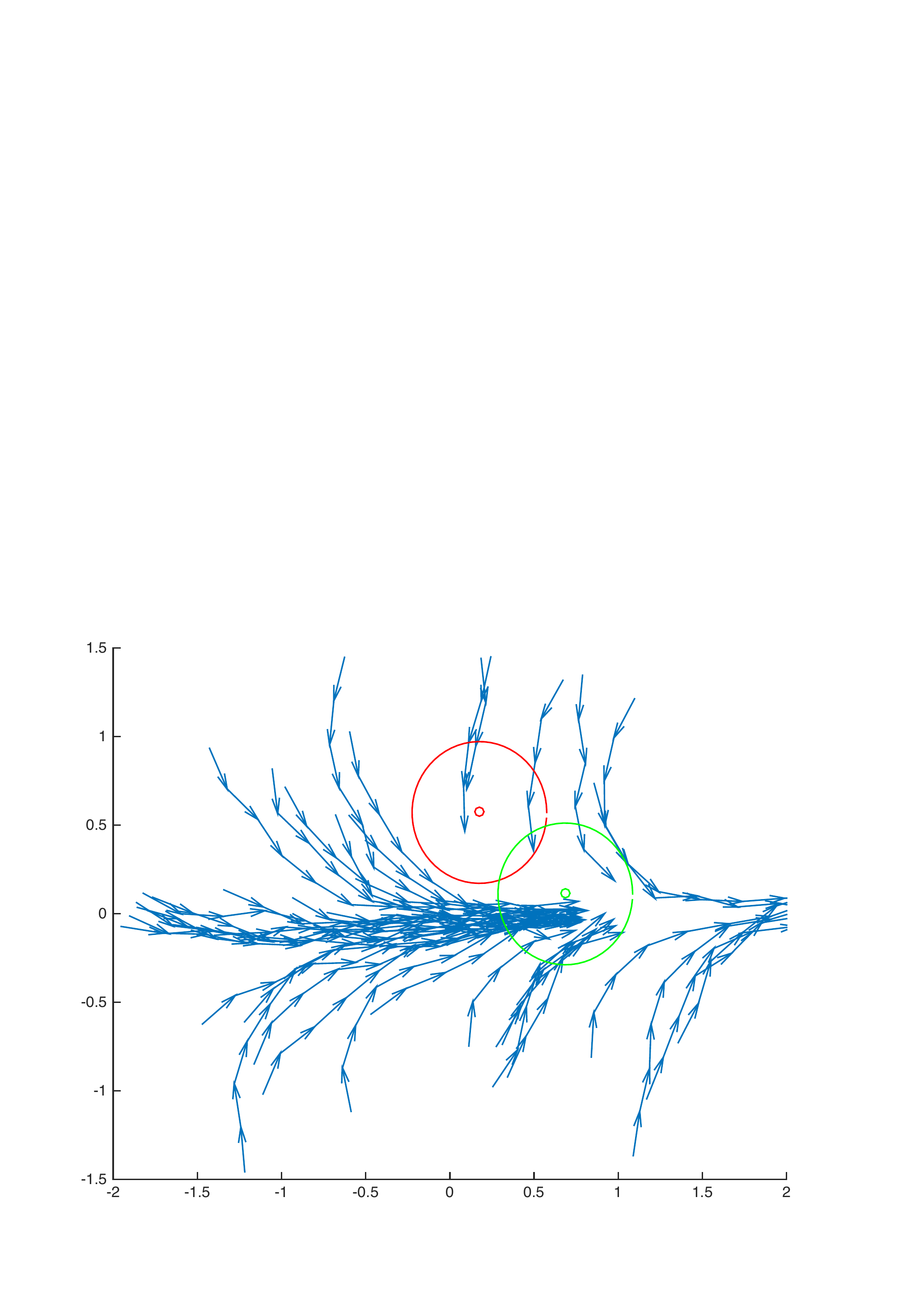}        \hspace{-7.00mm}}
        \subcaptionbox{Switching references.}{\adjincludegraphics[height=0.83in,trim={0 0 0 {.5\height}},clip]{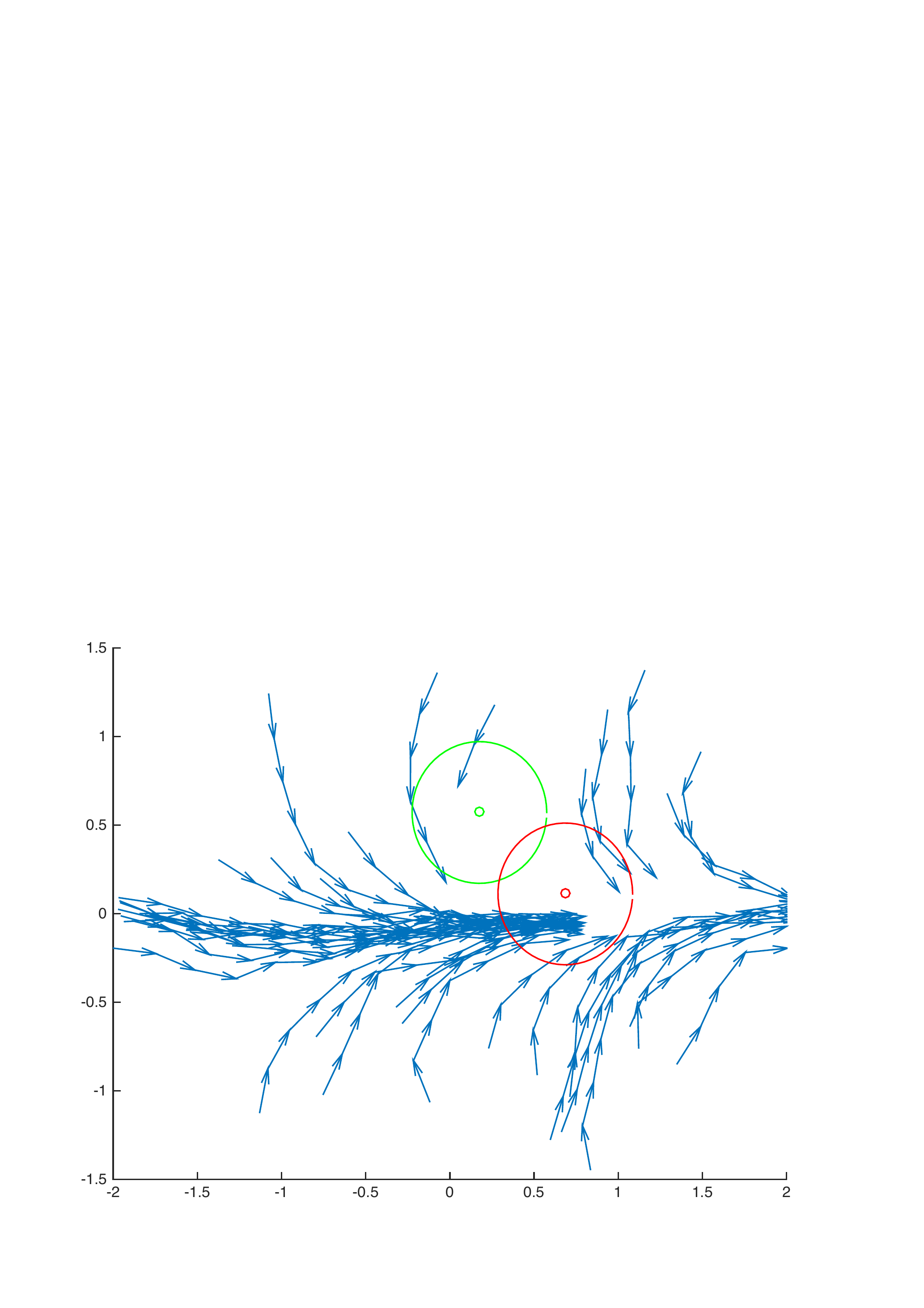}        \hspace{-7.00mm}}
        \subcaptionbox{Unlocking $\mb{z}$.}{\adjincludegraphics[height=0.83in,trim={0 0 0 {.5\height}},clip]{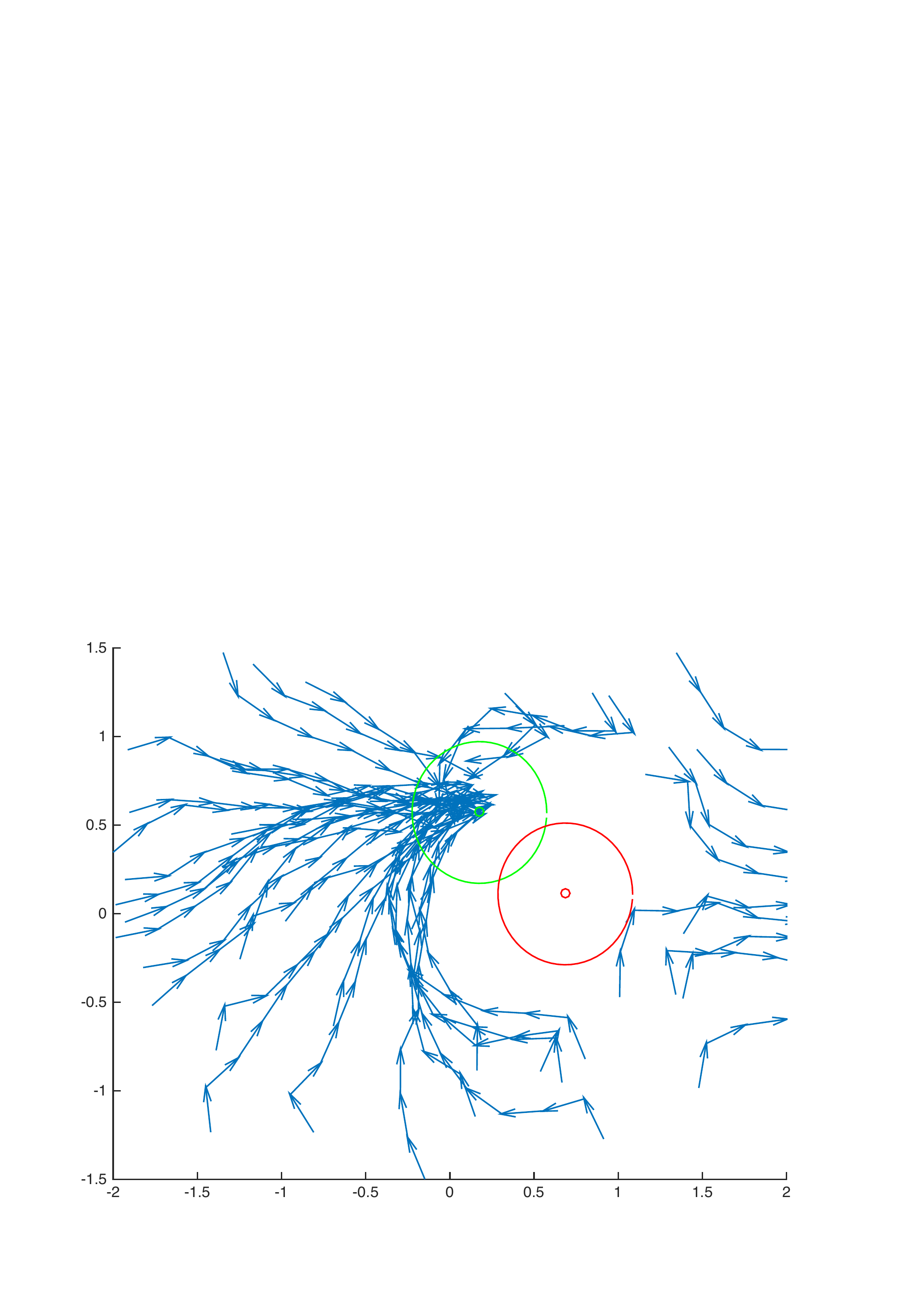}        \hspace{-7.00mm}}
    \caption{Analysis of the agent's behaviour when freezing and unfreezing the neuromodulation signal and when changing task within an episode. The green reference is attributed a reward of $100$ while the red one is attributed a reward of $-50$. Each blue arrow represents the movement of the agent for a given time-step.}
    \label{fig:freeze_nmd}
\end{figure*}

\paragraph{Robustness study.}
\label{subsec:robustness}
Even though results are quite promising for the NMN, it is interesting to see how it holds up with another type of activation function as well as analysing its robustness to different main networks' architectures.

\subparagraph{Sigmoid activation functions.}
Figure~\ref{fig:results_sigmoid} shows the comparison between having sigmoids as the main network's activation function instead of sReLUs. As one can see, sigmoid activation functions lead to worse or equivalent results to sReLUs, be it for RNNs or NMNs. In particular, the NMN architecture seems more robust to the change of activation function as opposed to RNNs, as the difference between sReLUS and sigmoids is often far inferior for NMNs than RNNs (especially for benchmark 2).

\begin{figure}[H]
\centering
\includegraphics[width = 1.0\textwidth]{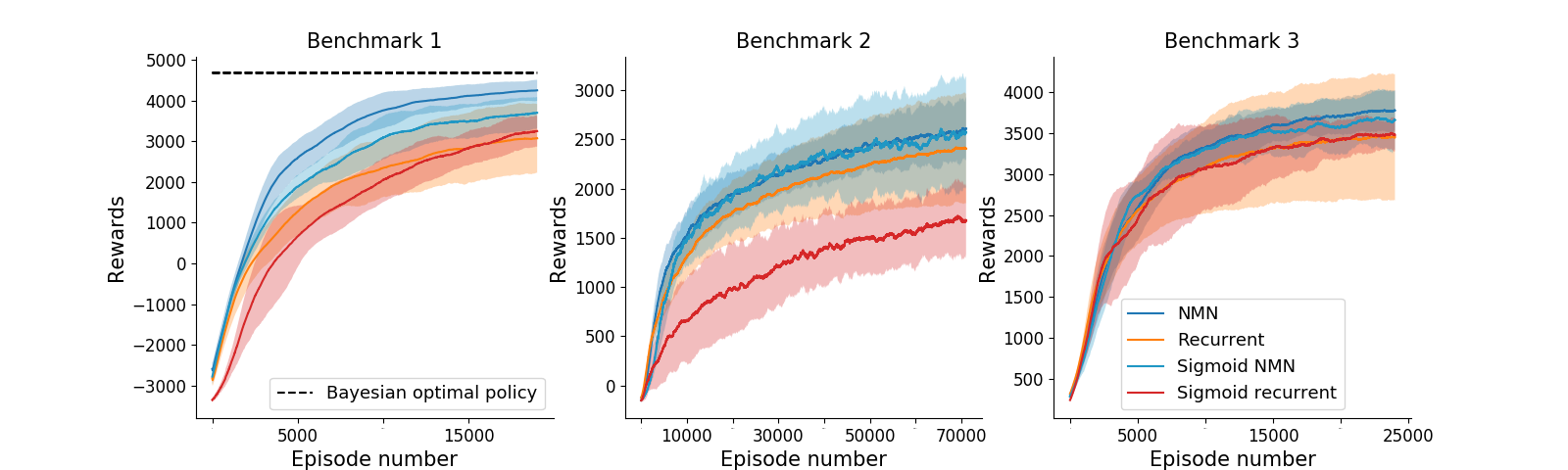}
\caption{Mean ($\pm$ std in shaded) sum of rewards obtained over $15$ training runs with different random seeds with respect to the episode number. Results of benchmark 1,2 and 3 are displayed from left to right. The plots are smoothed thanks to a running mean over $1000$ episodes.}
\label{fig:results_sigmoid}
\end{figure}

\subparagraph{Architecture impact.}
Figure~\ref{fig:results_arch} shows the learning curve, on benchmark 1, for different main network architectures ($0$, $1$ and $4$ hidden layers in the main network respectively). As one can see, RNNs can, in fact, reach NMNs' performances for a given architecture (no hidden layer in this case), but seem relatively dependant on the architecture. On the contrary, NMNs seem surprisingly consistent with respect to the number of hidden layers composing the main network.

\begin{figure}[H]
\centering
\includegraphics[width = 1.0\textwidth]{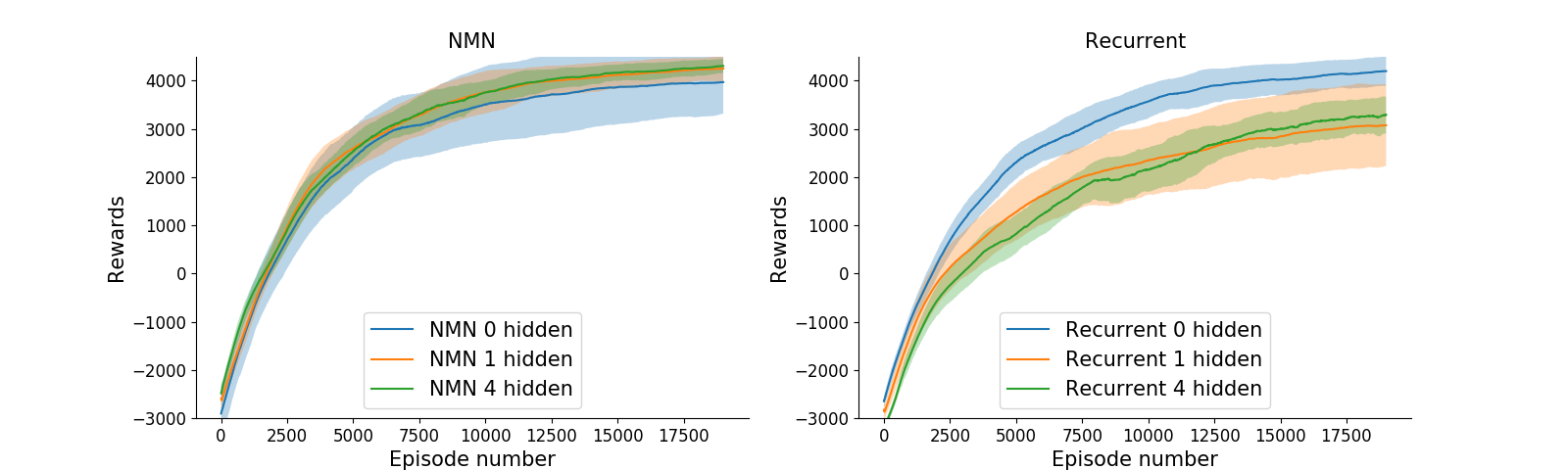}
\caption{Mean ($\pm$ std in shaded) sum of rewards obtained on benchmark 1 over $15$ training runs with different random seeds with respect to the episode number. The plots are smoothed thanks to a running mean over $1000$ episodes.}
\label{fig:results_arch}
\end{figure}

\section{Conclusions}

In this work, we adopt a high-level view of a nervous system mechanism called cellular neuromodulation to improve  artificial  neural  networks  adaptive  capabilities. The results obtained on three meta-RL benchmark problems showed that this new architecture was able to perform better than classical RNN. The work reported in this paper could be extended along several lines. 

First, it would make sense to explore other types of machine-learning problems where adaptation is required. Supervised meta-learning would be an interesting track to follow.

Second, research work could also be carried out to further improve the NMN introduced here. For instance, one could introduce new types of parametric activation functions which are not linear, or even spiking neurons. It would also be of interest to look at sharing activation function parameters per layer. 

Furthermore, it would be a logical progression to tackle other benchmarks to see if the observations made here hold true. More generally, analysing the neuromodulatory signal further in depth (and its impact on activation functions) with respect to different more complex tasks would be worth-while.

Finally, let us emphasize that even if the results obtained by our NMN are good and also rather robust with respect to a large choice of parameters, further research is certainly still needed to better characterise NMNs performances.



\bibliographystyle{alpha}
\bibliography{sample}

\newpage

\section*{Supporting information}

\subsection{Detailed description of benchmark 2 and 3}
\label{app:problems}
Before defining the three benchmark problems, let us remind that for each benchmark, the MDPs that belong to the support of $\mathcal{D}$, which generates the different tasks (see Section~\ref{subsec:formalization}), have transition probabilities and reward functions that differ only according to the value of a scalar $\alpha$. Drawing  an MDP according to $\mathcal{D}$ will amount for all the benchmark problems to draw a value of $\alpha$ according to a probability distribution $P_\alpha(\cdot)$ and to determine the transition function and the reward function that correspond to this value. Let us also denote by $\mathcal{X}$ and $\mathcal{A}$ the state and action spaces respectively.

\subsubsection{Benchmark 2}
\label{app:p2}

\paragraph{State space and action space:}
\[\mathcal{X} = [-3.0,3.0]^2\]
\[\mathcal{A} = \mathbb{R}\]

\paragraph{Probability distribution of $\alpha$:}
\[\boldsymbol{\alpha[i]} \sim \mathbb{U}[-1.0,1.0],~\forall i \in [1,2]\]
\[\boldsymbol{\alpha[3]} \sim \mathbb{U}[-\pi,\pi[\]
\noindent where $\mathbb{U}[a,b]$ stands for a uniform distribution between $a$ and $b$.

\paragraph{Initial state distribution:}\mbox{}\\
The initial state $x_0$ is drawn through $2$ auxiliary random variables that represent the $x$ and $y$ initial coordinates of the agent and are denoted $u^x_{0}, u^y_{0}$. At the beginning of an episode, those variables are drawn as follows:
\[u^k_{0} \sim \mathbb{U}[-1.5 * \pi,1.5*\pi] ~\forall k \in \{x,y\}\]
From those four auxiliary variables, we define $x_0$ as:
\[x_0 = [\boldsymbol{\alpha[1]}-a^x_{0},\boldsymbol{\alpha[2]}-a^y_{0}]\]
The distribution $P_{x_0}(\cdot)$ is thus fully given by the distributions over the auxiliary variables.

\paragraph{Transition function:}\mbox{}\\
Fist, let $target$ be the set of points $(x,y) \in \mathbb{R}^2$ such that \[(x,y) \in target \Leftrightarrow \sqrt{(x-\boldsymbol{\alpha[1]})^2 + (y-\boldsymbol{\alpha[2]})^2} \leq 0.4 \quad.\]
When taking action $a_t$ in state $x_t$ drawing the state $x_{t+1}$ from the transition function amounts to first compute $u^x_{t+1}$ and $u^y_{t+1}$ according to the following procedure:   

\begin{enumerate}
\item If $(u^x_{t}, u^y_{t}) \in target$ then $u^k_{t+1}\sim \mathbb{U}[-1.5,1.5]~\forall k \in \{x,y\}\quad.$
\item If the preceding condition is not met, an auxiliary variable $n_t \sim \mathbb{U}[\frac{-\pi}{4},\frac{\pi}{4}]$ is drawn to compute $u^x_{t+1}$ and $u^y_{t+1}$ through the following sub-procedure:
\begin{enumerate}
\item Step one:
\[u^x_{t+1} = u^x_{t} + 0.25 * (\sin(a_t) + \sin(\boldsymbol{\alpha[3]} + n_t))\]
\[u^y_{t+1} = u^x_{t} + 0.25 * (\cos(a_t) + \cos(\boldsymbol{\alpha[3]} + n_t))\quad.\]

\noindent One can see that taking an action $a_t$ moves the agent in a direction which is the vectoral sum of the intended move $\mb{m}_t$ of direction $a_t$ and of a perturbation vector $\mb{p}_t$ of direction $\alpha + n_t$ sampled through the distribution over $n_t$.
\item Step two: In the case where the coordinates computed by step one lay outside $S[-2; 2]^2$, they are corrected so as to model the fact that when the agent reaches  an edge of the 2D space, it is moved to the opposite edge from which it continues its move. More specifically, $\forall k \in \{x,y\}$:
\[u^k_{t+1} \leftarrow \begin{cases}
u^k_{t+1} - 4 &\text{if}~u^k_{t+1} > 2 \\
u^k_{t+1} + 4 &\text{if}~u^k_{t+1} < -2 \\
u^k_{t+1} &\text{otherwise\quad.}
\end{cases}
\]\end{enumerate}\end{enumerate}

Once $u^x_{t+1}$ and $u^y_{t+1}$ have been computed, $x_{t+1}$ is set equal to $[\boldsymbol{\alpha[1]}-u^x_{t+1},\boldsymbol{\alpha[2]}-u^y_{t+1}]$.
\paragraph{Reward function:}\mbox{}\\
The reward function can be expressed as follows:

\[
\rho(a_t, x_t, x_{t+1}) = \begin{cases}
100 &\text{if} ~ (u^x_{t},u^y_{t}) \in target \\
-2 & \text{otherwise\quad.}
\end{cases}
\]

\subsubsection{Benchmark 3}
\label{app:p3}

\paragraph{State space and action space:}
\[\mathcal{X} = [-2.5,2.5]^4\]
\[\mathcal{A} = \mathbb{R}\]

\paragraph{Probability distribution of $\alpha$:}
\[\boldsymbol{\alpha[i]}
 \sim \mathbb{U}[-1.0,1.0],~\forall i \in [1,2,3,4]\]
\[\boldsymbol{\alpha[5]} \sim \mathbb{U}\{-1,1\}\]

Note that $\boldsymbol{\alpha}[1,2,3,4]$ define the 2-D positions of two targets. For clarity, we will refer to these values respectively by $\alpha^{x_1}$,$\alpha^{y_1}$,$\alpha^{x_2}$ and $\alpha^{y_2}$.

\paragraph{Initial state distribution:}\mbox{}\\
The initial state $x_0$ is drawn through two auxiliary random variables that represent the $x$ and $y$ initial coordinates of the agent and are denoted $u^x_{0}, u^y_{0}$. At the beginning of an episode, those variables are drawn as follows:

\[u^k_{0} \sim \mathbb{U}[-1.5,1.5] ~\forall k \in \{x,y\}\quad.\]
From those six auxiliary variables, we define $x_0$ as:
\[x_0 = [\alpha^{x_1}-u^x_{0},\alpha^{y_1}-u^y_{0},\alpha^{x_2}-u^x_{0},\alpha^{y_2}-u^y_{0}]\quad.\]

\paragraph{Transition function:}\mbox{}\\
For all $i \in \{1,2\}$ let $target_{i}$ be the set of points $(x,y) \in \mathbb{R}^2$ such that \[\sqrt{(x-\alpha^{x_i})^2 + (y-\alpha^{y_i})^2} \leq 0.4\quad.\]. When taking action $a_t$ in state $x_t$, drawing the state $x_{t+1}$ from the transition function amounts to first compute $u^x_{t+1}$ and $u^y_{t+1}$ according to the following procedure:   
\begin{enumerate}
\item If $\exists i \in \{1,2\} : (u^x_{t},u^y_{t}) \in target_i$, which means that the agent is in one of the two targets, then $u^k_{t+1}\sim \mathbb{U}[-1.5,1.5]~\forall k \in \{x,y\}$
\item If the preceding condition is not met, $u^x_{t+1}$ and $u^y_{t+1}$ are computed by the following sub-procedure:
\begin{enumerate}
\item Step one:
\[u^x_{t+1} = u^x_{t} + \sin(a_t * \pi) * 0.25\]
\[u^y_{t+1} = u^y_{t} + \cos(a_t * \pi) * 0.25\quad.\]
\noindent  This step moves the agent in the direction it has chosen.

\item Step two: In the case where the coordinates computed by step one lay outside $[-2; 2]^2$, they are corrected so as to model the fact that when the agent reaches  an edge of the 2D space, it is moved to the opposite edge from which it continues its move. More specifically, $\forall k \in \{x,y\}$:
\[u^k_{t+1} \leftarrow \begin{cases}
u^k_{t+1} - 4.0 &\text{if}~u^k_{t+1} > 2 \\
u^k_{t+1} + 4.0 &\text{if}~u^k_{t+1} < -2 \\
u^k_{t+1} &\text{otherwise\quad.}
\end{cases}\]
\end{enumerate}
\end{enumerate}

Once $u^x_{t+1}$ and $u^y_{t+1}$ have been computed, $x_{t+1}$ is set equal to $[\alpha^{x_1}-u^x_{t+1},\alpha^{y_1}-u^y_{t+1},\alpha^{x_2}-u^x_{t+1},\alpha^{y_2}-u^y_{t+1}]$.
\paragraph{Reward function:}\mbox{}\\
In the case where $(u^x_{t}, u^y_{t})$ either belongs to only $target_1$, only $target_2$ or none of them, the reward function can be expressed as follows:

\[
\rho(a_t, x_t, x_{t+1}) = \begin{cases}
100 * \boldsymbol{\alpha}[5] &\text{if}  ~(u^x_{t}, u^y_{t}) \in target_1 \wedge (u^x_{t}, u^y_{t})\not\in target_2 \\
-50 * \boldsymbol{\alpha}[5] &\text{if}  ~(u^x_{t},u^y_{t}) \in target_2\wedge (u^x_{t}, u^y_{t})\not\in target_1 \\
0 &\text{if}  ~(u^x_{t},u^y_{t}) \not\in target_1\wedge (u^x_{t}, u^y_{t})\not\in target_2\quad.
\end{cases}
\]

In the case where $(u^x_{t}, u^y_{t})$ belongs to both $target_1$ and $target_2$, that is $(u^x_{t}, u^y_{t}) \in target_1 \wedge (u^x_{t}, u^y_{t}) \in target_2$, the reward function can be expressed as follows:

\[
\rho(a_t, x_t, x_{t+1}) = \begin{cases}
100 * \boldsymbol{\alpha}[5] &\text{if}  ~ \sqrt{(u^x_{t}-p^{x_1})^2 + (u^y_{t}-p^{y_1})^2} \leq \sqrt{(u^x_{t}-p^{x_2})^2 + (u^y_{t}-p^{y_2})^2}\\
-50 * \boldsymbol{\alpha}[5] &\text{otherwise\quad.} 
\end{cases}
\]
That is, we consider that the agent belongs to the target to which it is closer to the centre.

\subsection{Advantage actor-critic with generalized advantage estimation}
\label{app:RL}


In our meta-RL setting, both the actor and the critic are parametric functions that are defined on the trajectories' histories. With $\theta \in \Theta$ and $\psi \in \Psi$ the parameters of the actor and critic ($\Theta$ and $\Psi$ are the actor and critic parameters spaces), respectively, we define $\pi_{\theta}$ and $c_{\psi}$ as the policy and critic functions. Let $\pi_{\theta_k}$ and $c_{\psi_{k}}$ be the models for the policy and the critic after $k$ updates of the parameters $\theta$ and $\psi$, respectively. To update from $\theta_k$ to $\theta_{k+1}$ and $\psi_k$ to $\psi_{k+1}$, the actor-critic algorithm uses the policy $\pi_{\theta_k}$ to select actions during $B$ MDPs drawn sequentially from $\mathcal{D}$, where $B \in \mathbb{N}_0$ is a parameter of the actor-critic approach. This interaction between the actor-critic algorithm and the meta-RL problem is presented in a tabular version in Algorithm 1 (\ref{app:tabular}).\\

Using the $L \in \mathbb{N}_0$ first elements of each trajectory generated from the interaction with the $B$ MDPs and the values of $\theta_{k}$ and $\psi_{k}$, the algorithm computes $\theta_{k+1}$ and $\psi_{k+1}$. To this end, the algorithm exploits the set $[h_{B*k,L}, \ldots, h_{B*(k+1)-1,L}]$, which we denote as $H_{k}$. Note that we use a replay buffer for updating $\psi$, thus for this update we also use several previous sets $H_{k-1}$, $H_{k-2}$, etc... A tabular version of the algorithm that details how MDPs are drawn and played, as well as how the set $H_k$ is built, is presented in Algorithm~\ref{alg:run_episodes} of Appendix~\ref{app:tabular}. Let $R^{\pi_{\theta}}_{\mathcal{M}}$ denote the sum of discounted rewards obtained when playing policy $\pi_\theta$ on task $\mathcal{M}$. That is,

\[R^{\pi_{\theta}}_{\mathcal{M}} = \underset{T \rightarrow \infty}{\lim} \sum_{t=0}^{T} \gamma^t r_{t}\]

where $r_t$ are the rewards gathered at each time-step. To have a properly performing actor-critic algorithm, the value chosen for $L$ has to be chosen sufficiently large to produce an accurate estimation of the returns $R^{\pi_{\theta_k}}_{\mathcal{M}_i} ~\forall i \in [B*k,\ldots,B*(k+1)-1]$ obtained by the policy $\pi_{\theta_k}$.

When used in a classical RL setting,
an AC algorithm should interact with its environment to find the value
of $\theta$ that leads to high values of the expected return given a
probability distribution over the initial states. This expected return is written as:
\begin{equation}
\underset{a_t \sim \pi_{\theta} \forall t}{\underset{x_0 \sim P_{x_0}(\cdot)}{\mathbb{E}}} R^{\pi_\theta}_{\mathcal{M}}
\end{equation}
where $\mathcal{M}$ denotes the Markov Decision Process with which the AC algorithm interacts. When working well, actor critic algorithms produce a sequence of policies $\pi_{\theta_1}$, $\pi_{\theta_2}$,
$\pi_{\theta_3}$, $\ldots$ whose expected returns increase as the iterative
process evolves and eventually reaches values close to those obtained by
$\pi_{\theta^*_{\mathcal{M}}}$ with $\theta^*_{\mathcal{M}} = \underset{\theta
  \in \Theta}{\arg \max} \underset{a_t \sim \pi_{\theta} \forall t}{\underset{x_0 \sim P_{x_0}(\cdot)}{\mathbb{E}}}
R^{\pi_\theta}_{\mathcal{M}}$, which, if $\pi^\theta$ is flexible enough, are themselves close to those obtained by an
optimal policy $\pi^*_{\mathcal{M}}$ defined as:
\begin{equation}
\pi^*_{\mathcal{M}} \in \underset{\pi \in \Pi }{\arg \max} \underset{a_t \sim \pi_{\theta} \forall t}{\underset{x_0 \sim P_{x_0}(\cdot)}{\mathbb{E}}} R^{\pi}_{\mathcal{M}}
\end{equation}
where  $\Pi$ is the set of all admissible policies. 

Let $h_t=\{x_0,a_0,r_0,\ldots, x_{t}\}$ be a trajectory generated by
policy $\pi_\theta$ on $\mathcal{M}$ and let $J_{\mathcal{M}}^{\pi_\theta}(h_t)$ be the expected sum of discounted rewards that can be obtained while starting from $h_t$ and playing the policy $\pi_\theta$ in this environment, that is:
\begin{equation}
J^{\pi_\theta}_{\mathcal{M}} (h_t) = \sum_{j=t}^{\infty} \gamma^{j-t}  \rho_\mathcal{M}(x_j, a_j \sim \pi_\theta(h_j), x_{j+1}) \label{eq:returnPolicyFromT} 
\end{equation}
where $\rho_\mathcal{M}(x_j, a_j, x_{j+1})$ is the reward function of task $\mathcal{M}$.
In a classical RL setting, and again for an efficient AC algorithm,
the value of  the critic for  $h_t$, $c_\psi(h_t)$,
also converges to  $J^{\pi_{\theta^*_{\mathcal{M}}}}_{\mathcal{M}} (h_t) $. We also note that in
such a setting, the critic is updated at  iteration $k+1$  in a
direction that provides a better approximation of  $J^{\pi_{\theta_k}}_{\mathcal{M}} (\cdot)$. Now, let us go back to our meta-RL problem and let $V^{\pi} $ denote
the expected sum of returns that policy ${\pi}$ can obtain on this
problem:
\begin{equation}
V^{\pi} =
\underset{\mathcal{M}\sim\mathcal{D}}{\underset{a_t \sim \pi_{\theta} \forall t}{\underset{x_0 \sim P_{x_0}(\cdot)}{\mathbb{E}}}}
\quad .
\end{equation}
Let $\theta^* \in \underset{\theta \in \Theta}{\arg\max} V^{\pi_\theta}$.
When interacting with our meta-RL problem, a performant AC algorithm
should, in principle, converge towards a policy
$\pi_{\hat{\theta}^*}$, leading to a value of
$V^{\pi_{\hat{\theta}^*}}$ close to $V^{\pi_\theta^*}$ that is itself
close to $\underset{\pi  \in \Pi}{\max} V^{\pi}$. A policy $\pi^*$ such that $\pi^* \in \underset{\pi \in \Pi}{\arg \max} V^\pi $ is called a Bayes optimal policy in a Bayesian RL setting where the distribution $\mathcal{D}$ is assumed to be known.  If we are working
with policies that are, indeed, able to quickly adapt to  the environment,  we  may  also  expect that  the  policy
$\pi_{\hat{\theta}^*}$ learned by  the algorithm  is such  that, when
applied on an $\mathcal{M}$ belonging to the support of $\mathcal{D}$, it
leads to a value of  $J^{\pi_{\hat{\theta}^*}}_{\mathcal{M}} (h_t)$ close to $
\underset{\pi \in \Pi}{\max}J^{\pi}_{\mathcal{M}}  (h_t) $ as $t$
increases. In other words, once the agent has gathered enough
information to adapt to the current MDP, it should start behaving
(almost) optimally. This is the essence of meta-RL.

We may also expect that, in such case, the value of the critic for
$h_t$ when the budget is exhausted closely estimates
the expected value of the future discounted rewards that can be
obtained when using policy $\pi^{\hat{\theta}^*}$ and after
having already observed a trajectory $h_t$. Therefore, we may also expect that once the episode budget is exhausted,
$c_{\psi}(h_t)$:
\begin{enumerate}
\item will  be close to  $\underset{\mathcal{M} \sim \mathcal{D}}{\mathbb{E}} J^{\pi_{\hat{\theta}^*}}_{\mathcal{M}} (h_t)    \simeq  \underset{\mathcal{M} \sim \mathcal{D}}{\mathbb{E}} \underset{\pi \in \Pi}{\max} J^{\pi}_{\mathcal{M}} (h_t) $                                     if   $h_t =\{x_0 \}$;
\item will, as $t$ increases, tend to get closer to $\underset{\pi \in \Pi}{\max} J^{\pi}_{\mathcal{M}} (h_t)    \simeq J^{\pi_{\hat{\theta}^*}}_{\mathcal{M}} (h_t)$ where $\mathcal{M}$ can be any environment belonging to the support of $\mathcal{D}$ used to generate $h_t$.   
\end{enumerate}

Existing actor-critic algorithms mainly differ from each other by the way the actor and critic are updated. While in early actor-critic algorithms the critic was directly used to compute the direction of update for the actor's parameters (see for example the REINFORCE policy updates \cite{reinforce}), now it is more common to use an advantage function. This function represents the advantage in terms of return of selecting specific actions given a trajectory history (or simply a state when AC algorithms are used in a standard setting) over selecting them following the policy used to generate the trajectories. Here, we use generalised advantage estimations (GAE), as introduced in \cite{GAE}. More recently, it has been shown that avoiding too large policy changes between updates can greatly improve learning (\cite{TRPO}, \cite{PPO}). Therefore, while in classical AC algorithms the function used to update the actor aims at representing directly the gradient of the actor's return with respect to its parameters, we rather update the actor's parameters $\theta$ by minimising a loss function that represents a surrogate objective. We have selected as surrogate function one that is similar to the one introduced in \cite{PPO} with an additional loss term that proved to improve (albeit slightly) the performances of PPO in all cases. \\

As our actor and critic are modelled by differentiable functions, they are both updated through gradient descent. We now proceed to explain the losses use to compute the gradient for both the actor and the critic.  

\paragraph{Actor update}
First, we define the temporal error difference term for any two consecutive time-steps of any trajectory: 

$$TD_{i} = r_{i} + \gamma * c_{\psi_{k}}(h_{i+1}) - c_{\psi_{k}}(h_{i}), ~\forall i \in [0,\ldots,L]$$

where $\psi_k$ denotes the critic's parameters for playing the given trajectory. This temporal difference term represents, in some sense, the (immediate) advantage obtained, after having played action $a_{j}$ over what was expected by the critic. If $c_{\psi_k}(\cdot)$ was the true estimate of $J^{\pi_{\theta_{k}}}_{\mathcal{M}}(\cdot)$ and if the policy played was $\pi_{\theta_{k}}$, the expected value of these temporal differences would be equal to zero. We now define the GAE's terms that will be used later in our loss functions:  

\begin{equation}\label{eq:gae}GAE^i_{j} = \sum_{t = j}^{L} (\gamma * \lambda)^{t-j} * TD^j_{i}, ~\forall j \in [1,\ldots,E], i \in [0,\ldots,L']\end{equation}

\noindent where $\lambda \in [0,1]$ is a discount factor used for computing GAEs, $TD^j_i$ is the value of $TD_i$ for trajectory $j$, and where $L'$ is another hyper-parameter of the algorithm, chosen in combination with $L$ in order to have a value of $GAE_{i,j}$ that accurately approximates $\sum_{t=j}^{\infty} (\gamma * \lambda)^{k-j} * TD^j_{i}$ $\forall i,j$. Note that the value chosen for $L'$ also has to be sufficiently large to provide the loss function with a sufficient number of GAE terms. These GAE terms, introduced  in \cite{GAE}, represent the exponential average of the discounted future advantages observed. Thanks to the fact that GAE terms can catch the accumulated advantages of a sequence of actions rather than of a single action, as it is the case with the temporal difference terms, they can better represent the advantage of the new policy played by the AC algorithm over the old one (in terms of future discounted rewards).  \\

In the loss function, we will actually not use the advantage terms as defined by Equation~\ref{eq:gae}, but normalised versions in order to have advantages that  remain in a similar range regardless of
rewards magnitude. Thanks to this normalisation, the policy learning rate does not have to be tuned according to the loss magnitude. However, this normalisation does not mask actions that have led to higher
or lower returns than average. We normalize as follows $\forall k \in [1,\ldots,E]$ with $E$ the number of actor and critic updates that have been carried: 

\[\mu_{gae} = \avsum_{j=0}^{B-1}[{\avsum_{i=0}^{L'-1} GAE^{B*k+j}_{i}}]\]

$$\sigma_{gae} = \sqrt{\avsum_{j=0}^{B-1}[\avsum_{i=0}^{L'-1} (\mu_{gae} - GAE^{B*k+j}_{i})^2]}$$
 
$$GAE^j_{i} = \frac{GAE^{B*k+j}_{i}-\mu_{gae}}{\sigma_{gae}}~\forall (j,i) \in ([0,\ldots,B-1] * [0,\ldots,L'-1])$$

\noindent where $\avsum$ is the symbol we use to represent the average sum operator (i.e. $\avsum_{x=1}^{m}f(x) = \sum_{x=1}^m \frac{f(x)}{m}$). To define the loss functions used to compute $\theta_{k+1}$ and $\psi_{k+1}$, only the GAE terms corresponding to time-steps $[0,\ldots,L']$ of episodes $[B*k, \ldots, B*(k+1) -1]$ are computed. A tabular version of the algorithm used to compute these terms is given in Algorithm~\ref{alg:ac_procedure} of Appendix~\ref{app:tabular} \footnote{Although not explicitly written in the text for clarity, we use a normalisation technique when computing discounted sums for the AC algorithm update. In fact, when carrying an update of the AC algorithm, if rewards appear in discounted sums, they are multiplied by $(1-\gamma)$. This has for effect that the discounted sum values remain of the same magnitude regardless of $\gamma$. The implications of this normalization are two-fold. (i) The critic does not directly approximate $J^\pi_{\mathcal{M}}(\cdot)$ but rather $(1-\gamma)*J^\pi_{\mathcal{M}}(\cdot)$. (ii) Second, for the temporal differences to remain coherent with this normalisation, $r_i$ must also be multiplied by $(1-\gamma)$ when computing $TD_i$. Those two small changes are included in Algorithm~\ref{alg:ac_procedure}.}. \\

Once advantages have been computed, the values of $\theta_{k+1}$ are computed using updates that are strongly related to PPO updates with a Kullback Leibler (KL) divergence implementation \cite{PPO}. The loss used in PPO updates is composed of two terms: a classic policy gradient term and a penalisation term. Let us now present a standard policy gradient loss, note that from now on, we will refer to the value of $a_t$, $x_t$ and $h_t$ at episode $i$ by $a^i_t$, $x^i_t$ and $h^i_t$ respectively.

\begin{equation}
\label{eq:lvanilla}
\mathcal{L}_{vanilla}(\theta) = -\avsum_{(i,t) \in \mathcal{B}_k}{}\frac{\pi_{\theta}(a^i_{t}|h^i_{t})}{\pi_{\theta_{k}}(a^i_{t}|h^i_{t})} * GAE^i_{t} \quad.\end{equation}

\noindent where $\mathcal{B}_k$ is the set of all pairs $(i,t)$ for which $i,t \in ([B*k,\ldots,B*(k+1)-1] *[0,\ldots,L'])$, that is, the set containing the first $L'$ time-steps of the $B$ trajectories played for iteration $k$ of the actor-critic algorithm. 

One can easily become intuitive about Equation~\ref{eq:lvanilla} as, given an history $h^i_{t}$, minimising this loss function tends to increase the probability of the policy taking actions leading to positive advantages (i.e. $GAE^i_{t} > 0$) and decreases its probability to take actions leading to negative advantages (i.e. $GAE^i_{t} < 0$).
It has been found that to obtain good performances with this above-written loss function, it was important to have a policy that does not change too rapidly from one iteration to the other. Before explaining how this can be achieved, let us first give an explanation on why it may be important to have slow updates of the policy. Let us go back to the loss function given by Equation~\ref{eq:lvanilla}. Minimising this loss function will give a value for $\theta_{k+1}$ that will lead to higher probabilities of selecting actions corresponding to high values of the advantages $GAE^i_{t}$. A potential problem is that these advantages are not really related to the advantages of the would-be new policy $\pi_{\theta_{k+1}}$  over $\pi_{\theta_k}$ but are instead related to the advantages of policy $\pi_{\theta_{k}}$  over $\pi_{\theta_{k-1}}$. Indeed, the advantages $GAE^i_{t}$ are computed using the value function $c_{\psi_k}$, whose parameters have been updated from $\psi_{k-1}$ in order to better approximate the sum of discounted rewards obtained during the episodes $[B*(k-1),\ldots,B*k-1]$. It clearly appears that $\psi_k$ has, in fact, been updated to approximate discounted rewards obtained through the policy $\pi_{\theta_{k-1}}$ (used to play episodes for update $k-1$). A solution to this problem is to constraint the minimisation to reach a policy $\pi_{\theta_{k+1}}$ that does not stand too far from $\pi_{\theta_k}$. We may reasonably suppose that the advantage function used in (\ref{eq:lvanilla}) still correctly reflects the real advantage function of $\pi_{\theta_{k+1}}$ over $\pi_{\theta_k}$. To achieve this, we add a penalisation term $\mathcal{P}(\theta)$ to the loss function. In the PPO approach, the penalisation term is $\mathcal{P}_{ppo}(\theta) = \beta_k * d(\theta)$, where:
\begin{itemize}
    \item[i)] $\beta_k$ is an adaptive weight
    \item[ii)] $d(\theta) = \avsum_{[i,t] \in \mathcal{B}_k}{}[KL(\pi_{\theta_k(.|h_{i,t})},\pi_{\theta(.|h_{i,t})})]$, where $KL$ is the Kullback-Leibler divergence, detailed later on. This term penalises policies that are too different from $\pi_{\theta_k}$.
\end{itemize} 





We note that the $\beta_k$ dynamical updates use a hyper-parameter $d_{targ} \in \mathbb{N}_0$ called the divergence target. The update is done through the following procedure (note that, unlike updates of $\beta$ proposed in \cite{PPO}, we constrain $\beta$ to remain in the range $[\beta_{min}, \beta_{max}]$; we explain later why):

\begin{equation}\label{eq:beta_update}\beta_{k+1} = \begin{cases}
    \max(\beta_{min},\frac{\beta_k}{1.5}) &\text{if}~d(\theta) < \frac{d_{targ}}{2.0}\\
    \min(\beta_{max}, \beta_k * 1.5) &\text{if}~d(\theta) > d_{targ} * 2\\
    \beta_k ~&\text{otherwise} \quad .
    \end{cases}
\end{equation}

With this update strategy, the penalisation term will tend to evolve in a way such that the KL divergence between two successive policies does not tend to go beyond $d_{targ}$ without having to add an explicit constraint on $d$, as was the case in Trust Region Policy Optimization (TRPO) updates \cite{TRPO}, which is more cumbersome to implement. \\

As suggested in \cite{blog}, adding another penalisation term (squared hinge loss) to $\mathcal{P}_{PPO}$ to further penalise the KL divergence, in cases where it surpasses $2*d_{targ}$, improved algorithm performance. The final expression of the penalisation term is:

\[\mathcal{P}(\theta) = \beta_k * d(\theta) + \delta * \max(0,d(\theta) - 2*d_{targ})^2\]

\noindent where $\delta$ is a hyper-parameter that weights the third loss term. The loss function $\mathcal{L}_{policy}$ that we minimise as a surrogate objective becomes:

\begin{equation}
\label{eq:lpolicy}
\mathcal{L}_{policy}(\theta) = \mathcal{L}_{vanilla}(\theta) + \mathcal{P}(\theta)\end{equation}

We now detail how to compute the KL divergence. First, let us stress that we have chosen
to work with multi-variate Gaussian policies for the actor. This choice is particularly
well suited for MDPs with continuous action spaces. The approximation architecture of
the actor will therefore not directly output an action, but the means and standard
deviations of an m-dimensional multi-variate Gaussian from which the actor’s policy can be defined in a straightforward way. For each dimension, we bound the multi-variate Gaussian to the support, $\mathcal{U}$, by playing the action that is clipped to the bounds of $\mathcal{U}$ whenever the multi-variate Gaussian is sampled outside of $\mathcal{U}$. In the remaining of this paper, we will sometimes abusively use the terms "output of the actor at time $t$ of episode $i$" to refer to the means vector $\mu_{i,t}^{\theta_k}$ and the standard deviations vector $\sigma_{i,t}^{\theta_k}$ that the actor uses to define its probabilistic policy at time-step $t$ of episode $i$. Note that we have chosen to work with a diagonal covariance matrix for the multi-variate Gaussian distribution. Its diagonal elements correspond to those of the vector $\sigma_{i,t}^{\theta_k}$. We can then compute the KL divergence in each pair $[i,t]$ following the well-established formula:
\begin{multline}
KL(\pi_{\theta_k}(\cdot|h^i_{t}),\pi_{\theta}(\cdot|h^i_{t})) = \\ \frac{1}{2}\{tr(\Sigma^{-1}_{\theta,i,t} \Sigma_{\theta_k,i,t}) + (\mu^{\theta}_{i,t} -\mu^{\theta_k}_{i,t})^T\Sigma^{-1}_{\theta,i,t}(\mu^{\theta}_{i,t} - \mu^{\theta_k}_{i,t}) - k + \ln(\frac{|\Sigma_{\theta,i,t}|}{|\Sigma_{\theta_k,i,t}|})\}\end{multline}
where $\Sigma_{\theta_k,i,t}, \Sigma_{\theta,i,t}$ are the diagonal covariance matrices of the two multi-variate Gaussian distributions $\pi_{\theta_k}(\cdot|h^i_t),\pi_{\theta}(\cdot|h^i_t)$ that can be derived from $\sigma_{i,t}^{\theta_{k}}$ and $\sigma_{i,t}^{\theta}$. The loss function $\mathcal{L}_{vanilla}$ can be expressed as a function of $\Sigma_{\theta_k,i,t}$, $\Sigma_{\theta,i,t}$, $\mu^{\theta}_{i,t}$ and $\mu^{\theta_k}_{i,t}$ when working with a multi-variate Gaussian. To this end, we use the log-likelihood function $\ln{(\pi_\theta(a_{i,t}|h^i_t))}$, which gives the log-likelihood of having taken action $a^i_t$ given a trajectory history $h^i_t$. In the case of a multi-variate Gaussian, $\ln{(\pi_\theta(a^i_t|h^i_t))}$ is defined as:
\begin{equation}\label{eq:ln}\ln{(\pi_\theta(a^i_t|h^i_t))}  =-\frac{1}{2}(\ln(|\Sigma_{\theta,i,t}|) + (a^i_t-\mu^{\theta}_{i,t})^T * \Sigma_{\theta,i,t}^{-1}*(a^i_t-\mu^{\theta}_{i,t}) + m * \ln(2*\pi))\end{equation}

\noindent where $m$ is the dimension of the action space and where $|\Sigma_{\theta,i,t}|$ represents the determinant of the matrix. From this definition, one can rewrite $\mathcal{L}_{vanilla}$ as:
\begin{equation}\label{eq:vanilla_fully_expressed}
\mathcal{L}_{vanilla} = -\avsum_{[i,t] \in \mathcal{B}_k} e^{\ln{(\pi_{\theta}(a^i_t|h^i_t))}-\ln{(\pi_{\theta_k}(a^i_t|h^i_t))}} * GAE^i_{t}\quad .
\end{equation}

\noindent By merging equation~(\ref{eq:vanilla_fully_expressed}), (\ref{eq:ln}) and equation~(\ref{eq:lpolicy}), one gets a loss $\mathcal{L}_{policy}$ that depends only on $\Sigma_{\theta_k,i,t}$, $\Sigma_{\theta,i,t}$, $\mu^{\theta}_{i,t}$ and $\mu^{\theta_k}_{i,t}$.

\paragraph{Critic update}
The critic is updated at iteration $k$ in a way to better approximate the expected return obtained when following the policy $\pi_{\theta_k}$, starting from a given trajectory history. To this end, we use a mean-square error loss as a surrogate objective for optimizing $\psi$. First, we define $\hat{R}^i_{j} = \sum_{k=j}^{L} \gamma^{k-j}*r^i_{j}$ $\forall i,j \in [B*k,\ldots,B*(k+1)-1],[0,\ldots,L]$.
 From the definition of $\hat{R}^i_{j}$ we express the loss as:
    \begin{equation}
\label{eq:lcritic}
\mathcal{L}_{critic}(\psi) = \sum_{(i,t) \in \mathcal{B}_{k-CRB}}^{}[(c_{\psi}(h^i_{t}) - \hat{R}^i_{t})^2]\end{equation}
where (i) $crb \in \mathbb{N}_0$ is a hyper-parameter; (ii) $\mathcal{B}_{k-crb}$ is the set of all pairs $(i,t)$ for which $i,j \in ([B*(k-crb),\ldots,B*(k+1)-1]*[0,\ldots,L'])$. The set $\mathcal{B}_{k-crb}$ used in (\ref{eq:lcritic}) contains all the pairs from the current trajectory batch and from the $crb$ previous trajectory batches. We call this a replay buffer whose length is controlled by $crb$ which stands for "\textbf{c}ritic\textbf{r}eplay\textbf{b}uffer". Minimising $\mathcal{L}_{critic}$ does not lead to updates such that $c_{\psi}$ directly approximates the average expected return of the policy $\pi_{\theta_k}$. Rather, the updates are such that $c_\psi$ directly approximates the average expected return obtained by the last $crb+1$ policies played. We found out that using a replay buffer for the critic smoothed the critic's updates and improved algorithm performances.

Note that the loss (\ref{eq:lcritic}) is only computed on the $L' << L$ first time-steps of each episode, as was the case for the actor. The reason behind this choice is simple. The value function $c_{\psi_k}$ should approximate $R^i_{j} = \sum_{t = j}^{+\infty}\gamma^{t-j}*r^i_{t}$ for every $h^i_{j}$, where $R^i_{j}$ the infinite sum of discounted rewards that are attainable when "starting" from $h^i_{j}$. However, this approximation can become less accurate when $j$ becomes close to $L$ since we can only guarantee $\hat{R}^i_{j}$ to stand in the interval: $[R^i_{j} - \frac{\gamma^{L-j}}{1-\gamma} R_{max} , R^i_{j} - \frac{\gamma^{L-j}}{1-\gamma} R_{min}]$. Hence this choice of $L'$. \\

\paragraph{Gradients computation and update}
The full procedure is available as Tabular versions in \ref{app:tabular}. As a summary, we note that both the actor and critic are updated using the Adam procedure (\cite{adam}) and back-propagation through time (\cite{bptt}). The main difference between both updates is that the actor is updated following a full-batch gradient descent paradigm, while the critic is updated following a mini-batch gradient descent paradigm.


\subsubsection{Tabular version}
\label{app:tabular}

\begin{algorithm}[H]
\caption{Advantage actor-critic with generalised advantage estimate for solving the meta-RL problem}
\label{alg:ac_top_level}
\begin{algorithmic}[1]
\State \textbf{Run}({$\mathcal{D}$, $E$, $\mathbf{hyperparameters_0}$})
\State \textbf{Inputs}: 
\begin{enumerate}[leftmargin=1.5cm,labelsep=0cm,align=left,label={[\arabic*]}]
\item $\mathcal{D}$ : The distribution over MDPs.
\item $E$ : The total episodes budget.
\item $hyperparameters_0$ : The  set of hyper-parameters that contains the following elements: 
\begin{itemize}[label={\tiny\raisebox{1ex}{\textbullet}}]
\item $B$ : Number of episodes played between updates.
\item $P_{\theta_0}$ and $P_{\psi_0}$ : The distributions for initialising actor and critic's parameters. Those distributions are intrinsically tied to the models used as function approximators.
\item $\lambda \in [0,1]$ : The discount factor for computing GAE.
\item $L$ : Number of time steps played per episode.
\item $L'$ : Number of time steps per episode used to compute gradients.
\item $e_{a}$ : The number of epochs per actor update.
\item $e_{c}$ : The number of epochs per critic update.
\item $\eta$ : The squared hinge loss weight.
\item $d_{targ}$ : The KL divergence target.
\item $d_{thresh}$ : The threshold used for early stopping.
\item $\beta_{min}$ and $\beta_{max}$ : The minimum and maximum $\beta$ values.
\item $\beta_0$ : The initial value of $\beta_k$ for penalising the KL divergence.
\item $a_{lr_0}$ : The initial value of the policy learning rate $a_{lr_k}$.
\item $c_{v_0}$, $c_{z_0}$, $a_{v_0}$ and $a_{z_0}$ : The initial value for the ADAM optimiser moments $c_{v_k}$, $c_{z_k}$, $a_{v_k}$ and $a_{z_k}$.
\item $\epsilon$, $\omega_1$, $\omega_2$ : The three ADAM optimiser hyper-parameters.
\item $c_{lr}$ : The critic learning rate.
\item $cmb$ : The mini-batch size used for computing the critic's gradient.
\item $crb$ : The number of previous trajectory batches used in the replay buffer for the critic.
\end{itemize}
\end{enumerate}

We note that some of the hyper-parameters are adaptive. These are $\beta_k$, $a_{lr_k}$, $c_{v_k}$, $c_{z_k}$, $a_{v_k}$ and $a_{z_k}$. Thus the hyper-parameter vector may have to change in between iterations. For this reason we introduce the notation $hp_k$ which represents the hyper-parameter vector with the values of the adaptive parameters at iteration $k$.
\State $hp_0 \leftarrow hyperparameter_0$
\State $k \leftarrow 0$ 
\State $\theta_0 \sim P_{\theta_0}(.)$ \Comment Random initialisation
\State $\psi_0 \sim P_{\psi_0}(.)$ \Comment Random initialisation
\While {$B * k < E$}
\State $H_k = \textbf{run episodes}(k, \theta_{k}, \mathcal{D}, hp_k)$
\State $\theta_{k+1},\psi_{k+1}$= \textbf{update ac}($H_{\max(0,k-CRB)},\ldots,H_{k}$,$\theta_k$,$\psi_k$,$hp_k$)
\State $k \leftarrow k + 1$
\EndWhile
\end{algorithmic}
\end{algorithm}

\begin{algorithm}[H]
\caption{Kth run of $B$ episodes}
\label{alg:run_episodes}
\begin{algorithmic}[1]
\State \textbf{run episodes}({$k$, $\theta_k$, $\mathcal{D}$, $hp_k$})
\State \textbf{Inputs}: 
\begin{enumerate}[leftmargin=1.5cm,labelsep=0cm,align=left,label={[\arabic*]}]
\item $\theta_k$ : The parameters of the policy at iteration $k$.
\item $\mathcal{D}$ : The distribution from which the MDPs are sampled.
\item $hp_k$ : In this procedure, we use as hyper-parameters:
\begin{itemize}[label={\tiny\raisebox{1ex}{\textbullet}}]
\item $B$ : The number of episodes to be played.
\item $L$ : The number of time steps played by episode.
\end{itemize}
\end{enumerate}
\State \textbf{Output}: 
\begin{enumerate}[leftmargin=1.5cm,labelsep=0cm,align=left,label={[\arabic*]}]
\item $H^k$ : The set of $B$ trajectories $[h^{B*k}_{L}, h_{B*(k+1),L}, \ldots, h^{B*(k+1)-1}_{L}]$ played during this procedure.
\end{enumerate}

\State $i \leftarrow B * k$ 
\While {$i < B * k + B$}
\State $t \leftarrow 0$
\State $\mathcal{M} \sim \mathcal{D}$
\State $x^i_{t} \sim P_{x_0}(.)$
\State $h^i_{t} = [x^i_{t}]$
\While {$t <= L$}
\State $a^i_{t} \sim \pi_{\theta_k}(h^i_{t})$
\State $x^i_{t+1} \sim P^{\mathcal{M}}(x^i_{t+1}|x^i_{t},a^i_{t})$ 
\Comment The right-side refers to $P(x^i_{t+1}|x^i_t,a^i_t)$ of current task $\mathcal{M}$.
\State $r^i_{t} = \rho^{\mathcal{M}}(x^i_{t}, a^i_{t}, x^i_{t+1})$
\Comment The right-side refers to $\rho(x_{t+1}, x_t, a_t)$ of $\mathcal{M}$.
\State $h^i_{t} = [x^i_{0},a^i_{0},r^i_{0},\ldots,x^i_{t}]$
\State $t \leftarrow t + 1$
\EndWhile
\State $i \leftarrow i + 1$
\EndWhile
\State \textbf{Return }~$H_k = [h^{B*k}_{L},\ldots,h^{B*(k+1)-1}_L]$
\end{algorithmic}
\end{algorithm}

\begin{algorithm}[H]
\caption{Kth update of the actor critic model}
\label{alg:ac_procedure}
\begin{algorithmic}[1]
\State \textbf{update ac}({$H^{k-crb},\ldots,H^{k}$, $\theta_k$, $\psi_k$, $hp_k$})
\State \textbf{Inputs}: 
\begin{enumerate}[leftmargin=1.5cm,labelsep=0cm,align=left,label={[\arabic*]}]
\item $H^{k-crb},\ldots,H^{k}$ : The $crb+1$ last sets of $B$ trajectories of length $L$.
\item $\theta_k$ and $\psi_k$ : The parameters of the actor and critic after $k$ updates.
\item $hp_k$ : In this procedure, we use as hyper-parameter:
\begin{itemize}[label={\tiny\raisebox{1ex}{\textbullet}}]
\item $\lambda \in [0,\ldots,1]$ : The discount factor for computing GAE.
\item $a_{lr_k}$ : The current policy learing rate.
\item $\beta_k$ : The current KL divergence penalisation.
\end{itemize}
\end{enumerate}
\State \textbf{Output}: 
\begin{enumerate}[leftmargin=1.5cm,labelsep=0cm,align=left,label={[\arabic*]}]
\item $\theta_{k+1}$, $\psi_{k+1}$ : The updated actor and critic parameters.
\end{enumerate}

\State $i \leftarrow B * k$
\State $\mb{D} \leftarrow \emptyset$
\While {$i < B * k + B$}
\State $D^i_{j} = \sum_{t = j}^{L}{\gamma ^ {t-j} * r_{j} * (1 - \gamma)},~\forall j \in [0,\ldots,L-1]$
\State $\mb{D} \leftarrow \mb{D} \cup D^i_{j}$
\State $TD_{j} = (1-\gamma) * r_{j} - c_{\psi_k}(h_{j}) + c_{\psi_k}(h_{j+1}), j \in [0,\ldots,L-1]$
\State $GAE^i_{j} = \sum_{t = j}^{L} (\gamma * \lambda)^{t-j} * TD_{j}, ~\forall j \in [0,\ldots,L-1]$
\EndWhile
\State $\mu = \avsum_{i=B*k}^{B*(k+1)-1}\avsum_{j=0}^{L-1} GAE^i_{j}$
\State $\sigma = \sqrt{\avsum_{i=B*k}^{B*(k+1)-1}\avsum_{j=0}^{L-1} (\mu - GAE^i_{j})^2}$
 
\State $GAE^i_{j} \leftarrow \frac{GAE^i_{j}-\mu}{\sigma}~\forall i \in [B*k,\ldots,B*(k+1)-1], j \in [0,\ldots,L-1]$
\State $\mb{A} = [GAE^{B*k+i}_j, ~\forall (i,j) \in ([0,\ldots,B-1] * [0,\ldots,L])]$

\State $\theta_{k+1}$ = \textbf{update policy parameters}($H^k$, $\mb{A}$, $\theta_k$, $hp_k$)

\State $\psi_{k+1}$= \textbf{update critic parameters}($H^{k-crb},\ldots,H^{k}$, $\mb{D}$, $\psi_k$, $hp_{k}$)
\State \textbf{Return} $\theta_{k+1}, \psi_{k+1}$

\end{algorithmic}
\end{algorithm}


\begin{algorithm}[H]
\caption{Update from $\theta_k$ to $\theta_{k+1}$}
\label{alg:actor_math_update}
\begin{algorithmic}[1]
\State \textbf{update policy parameters}({$H_{k}$, $\mb{A}$,$\theta_k$,$hp_k$})
\State \textbf{Inputs}: 
\begin{enumerate}[leftmargin=1.5cm,labelsep=0cm,align=left,label={[\arabic*]}]
\item $H^{k}$ : The set of $B$ trajectories of length $L$.
\item $\theta_{k}$ : The actor's parameters.
\item $\epsilon$, $\omega_1$ and $\omega_2$ : The three ADAM optimizer hyper-parameters.
\item $hp_k$ : In this procedure, we use as hyper-parameters:
\begin{itemize}[label={\tiny\raisebox{1ex}{\textbullet}}]
\item $e_{actor}$ : The number of epochs per actor update.
\item $\eta$ : The squared hinge loss weight.
\item $d_{targ}$ : The KL divergence target.
\item $d_{thresh}$ : The threshold used for early stopping.
\item $L'$ : The number of time-steps per trajectory used for computing gradients.
\item $\beta_k$ : The KL penalisation weight.
\item $a_{lr_k}$ : The actor learning rate.
\item $a_{v_k}$ and $a_{z_k}$ : The last ADAM moments computed at iteration $k-1$.
\end{itemize}
\end{enumerate}
\State \textbf{Output}: 
\begin{enumerate}[leftmargin=1.5cm,labelsep=0cm,align=left,label={[\arabic*]}]
\item $\theta_{k+1}$ : The updated actor parameters.
\end{enumerate}

\State $m \leftarrow 0$
\State $\mathcal{B} \leftarrow [(B*k+i,j), ~\forall (i,j) \in ([0,\ldots,B-1] * [0,\ldots,L'])]$
\State $\theta' \leftarrow \theta_k$
\State $a'_{v} \leftarrow a_{v_{k}}$
\State $a'_{z} \leftarrow a_{z_{k}}$
\While {$m < AE$}

\State $\mathcal{L}_{vanilla} = - \avsum_{(i,t) \in \mathcal{B}}\frac{\pi_{\theta}(a^i_{t}|h^i_{t})}{\pi_{\theta_{k}}(a^i_{t}|h^i_{t})}*GAE^i_{t}$

\State $d = \avsum_{(i,t) \in \mathcal{B}} KL(\pi_{\theta_k}(.|h^i_{j}),\pi_{\theta}(.|h^i_{j}))$

\State $s = [\max(0, (d-2*d_{targ}))]^2$

\State $\mathcal{L}_{policy} = \mathcal{L}_{vanilla} + \beta_k * d + \eta * s$

\State $\nabla_{\theta}\mathcal{L}_{policy}(\theta') = \textbf{compute gradients}(\mathcal{L}_{policy}, \mathcal{B}_k, \theta')$

\State $a'_{lr} = a_{lr_k} * \frac{\sqrt{1-\omega^{k*e_{actor}+m}_2}}{1-\omega^{k*e_{actor}+m}_1}$
\State $a'_{z} \leftarrow \omega_1 * a'_{z} + (1-\omega_1)*\nabla_{\theta}\mathcal{L}_{policy}(\theta')$
\State $a'_v \leftarrow \omega_2 * a'_v + (1-\omega_2)*\nabla_{\theta}\mathcal{L}_{policy}(\theta')\odot\nabla_{\theta}\mathcal{L}_{policy}(\theta')$
\State $\theta' \leftarrow \theta' - \frac{a'_{lr}*a'_z}{\sqrt{a'_v} +\epsilon}$
\State $m \leftarrow m + 1$
\If{$d > d_{threshold} * d_{targ}$}
\Comment Early stop
\State $\theta' \leftarrow \theta_k$
\State $m \leftarrow e_{actor}$
\EndIf
\EndWhile
\State \textbf{update auxiliary parameters}$(d,hp_k)$
\State $a_{v_{k+1}} \leftarrow a'_v$
\State $a_{z_{k+1}} \leftarrow a'_z$
\State $\theta_{k+1} \leftarrow \theta'$
\State \textbf{Return} $\theta_{k+1}$

\end{algorithmic}
\end{algorithm}

\begin{algorithm}[H]
\caption{Actor auxiliary parameters update}
\label{alg:update_aux}
\begin{algorithmic}[1]
\State \textbf{update auxiliary parameters}(d,$hp_k$)
\State \textbf{Inputs}: 
\begin{enumerate}[leftmargin=1.5cm,labelsep=0cm,align=left,label={[\arabic*]}]
\item $d$ : The KL divergence between $\pi_{\theta_k}$ and $\pi_{\theta_{k+1}}$ empirically averaged.
\item $hp_k$ : In this procedure, we use as hyper-parameters:
\begin{itemize}[label={\tiny\raisebox{1ex}{\textbullet}}]
\item $d_{targ}$ : The KL divergence target.
\item $\beta_{min}$ and $\beta_{max}$ : The minimum and maximum $\beta$ values.
\item $\beta_k$ : The current KL penalisation weight.
\item $a_{lr_k}$ : The current actor learning rate.
\end{itemize}
\end{enumerate}

\If{$d > 2 * d_{targ}$}
\State $\beta_{k+1} \leftarrow \min(\beta_{max}, \beta_k * 1.5)$
\If{$\beta_{k} > 0.85 * \beta_{max}$}
\State $a_{lr_{k+1}} \leftarrow \frac{a_{lr_k}}{1.5} $
\EndIf
\ElsIf{$d < \frac{d_{targ}}{2}$}
\State $\beta_{k+1} \leftarrow \max(\beta_{min}, \frac{\beta_k}{1.5})$
\If{$\beta_k < 1.15 * \beta_{min}$}
\State $a_{lr_{k+1}} \leftarrow a_{lr_k} * 1.5 $
\EndIf
\EndIf
\end{algorithmic}
\end{algorithm}

\begin{algorithm}[H]
\caption{Update from $\psi_k$ to $\psi_{k+1}$}
\label{alg:critic_math_update}
\begin{algorithmic}[1]
\State \textbf{update critic parameters}({$H_{k-CRB},\ldots,H_{k}$, $\mb{D}$,$\psi_k$,$hp_k$})
\State \textbf{Inputs}: 
\begin{enumerate}[leftmargin=1.5cm,labelsep=0cm,align=left,label={[\arabic*]}]
\item $H_{k-CRB},\ldots,H_{k}$ : The $crb+1$ last sets of trajectories of length $L$. 
\item $\psi_{k}$ : The critic's parameters.
\item $hp_k$ : In this procedure, we use as hyper-parameters:
\begin{itemize}[label={\tiny\raisebox{1ex}{\textbullet}}]
\item $e_{critic}$ : The number of epochs per critic update.
\item $cmb$ : The mini-batch size used for computing the critic's gradient.
\item $T$ : A hyper-parameter of our gradient estimate.
\item $crb$ : The replay buffer size.
\item $c_{lr}$ : The critic learning rate.
\item $L'$ : The number of time-steps per trajectory used for computing gradients.
\item $c_{v_{k}}$ and $c_{z_{k}}$ : The last ADAM moments computed at iteration $k-1$.
\end{itemize}
\end{enumerate}
\State \textbf{Output}: 
\begin{enumerate}[leftmargin=1.5cm,labelsep=0cm,align=left,label={[\arabic*]}]
\item $\psi_{k+1}$ : The updated critic parameters.
\end{enumerate}

\State $m \leftarrow 0$
\State $c'_{v} \leftarrow c_{v_k}$
\State $c'_z \leftarrow c_{z_k}$
\State $\mathcal{T}_{(i,t)} = [[i,t*T],\ldots,[i,max((t+1)*T-1,L')]]~\forall (i, t) \in ([B*(k-CRB),\ldots,B*(k+1)-1]*[0,\ldots,\lfloor \frac{L'}{T} \rfloor])$ 
\State $\mathcal{BT} = [(B*(k-crb)+i, j) \forall (i,j) \in ([0,\ldots,B*(crb+1)-1]*[0,\ldots,\lfloor \frac{L'}{T} \rfloor])]$ 
\State $\psi' \leftarrow \psi_k$
\State $e_{iter} \leftarrow e_{critic} * \lceil \frac{|\mathcal{BT}|}{cmb * T} \rceil$
\State $\mathcal{S} \leftarrow \emptyset$
\While {$m < e_{iter}$}
\State $p \leftarrow 0$, $\mathcal{Y} \leftarrow \emptyset$
\While{$p < cmb \wedge \mathcal{BT} \setminus \mathcal{S} \neq \emptyset$}
\State $(i_{cur}, t_{cur}) \sim \mathcal{BT} \setminus \mathcal{S}$
\State $\mathcal{S} \leftarrow \mathcal{S} \cup (i_{cur}, t_{cur})$
\State $\mathcal{Y} \leftarrow \mathcal{Y} \cup \mathcal{T}_{i_{cur},t_{cur}}$
\State $p \leftarrow p + 1$
\EndWhile
\If {$\mathcal{BT} \setminus \mathcal{S} = \emptyset$}
\State $\mathcal{S} \leftarrow \emptyset$
\EndIf

\State $\mathcal{L}_{sur}(\psi) = \avsum_{(i,t) \in \mathcal{Y}}^{}{(c_{\psi}(h^i_t) - D^i_t)^2}$

\State $\nabla_{\psi}\mathcal{L}_{sur}(\psi', \mathcal{Y}) = \textbf{compute gradients}(\mathcal{L}_{sur}, \mathcal{Y}, \psi')$

\State $c'_{lr} = c_lr * \frac{\sqrt{1-\omega^{k*e_{iter}+m}_2}}{1-\omega^{k*e_{iter}+m}_1}$
\State $c'_z = \omega_1 * c'_z + (1-\omega_1)*\nabla_{\psi}\mathcal{L}_{sur}(\psi', \mathcal{Y})$
\State $c'_v = \omega_2 * c'_v + (1-\omega_2)*\nabla_{\psi}\mathcal{L}_{sur}(\psi', \mathcal{Y})\odot\nabla_{\psi}\mathcal{L}_{sur}(\psi', \mathcal{Y})$
\State $\psi' \leftarrow \psi' - \frac{c'_{lr}*c'_z}{\sqrt{c'_v} +\epsilon}$
\State $m \leftarrow m + 1$
\EndWhile
\State $c_{v_{k+1}} \leftarrow c'_v$
\State $c_{z_{k+1}} \leftarrow c'_z$
\State $\psi_{k+1} \leftarrow \psi'$
\State \textbf{Return} $\psi_{k+1}$
\end{algorithmic}
\end{algorithm}

\begin{algorithm}[H]
\caption{Gradient computing with BPTT \cite{bptt}}
\label{alg:compute_gradients}
\begin{algorithmic}[1]
\State \textbf{compute gradients}($\mathcal{L}(\alpha), \mathcal{Z}, \alpha'$, $hp_k$)
\State \textbf{Inputs}: 
\begin{enumerate}[leftmargin=1.5cm,labelsep=0cm,align=left,label={[\arabic*]}]
\item $\mathcal{L}(\alpha)$ : A loss function which is dependent on a function approximate $v(\alpha)_{\cdot,\cdot}$.
\item $\mathcal{Z}$ : The set of pairs $(i,j)$ such that $v(\alpha)_{i,j}$ appears in $\mathcal{L}(\alpha)$.
\Comment We emphasise that from the way $\mathcal{Z}$ is built (see Algorithms~\ref{alg:actor_math_update} and~\ref{alg:critic_math_update}), most of the time $\mathcal{Z}$ contains $x$ batches of $T$
consecutive pairs. Note that very rarely, batches may have fewer than $T$ consecutive
pairs (whenever a batch contains the last pairs of an episode which does not contain
a multiple of $T$ pairs), although, the same gradient descent algorithm can still be applied.
\item $\alpha'$ : The element for which the estimate gradient of $\mathcal{L}(\alpha)$ needs to be evaluated.
\item $hp_k$ : In this procedure, we use as hyper-parameter:
\begin{itemize}[label={\tiny\raisebox{1ex}{\textbullet}}]
    \item $T$ : The number of time-steps for which the gradient can propagate.
\end{itemize}
\end{enumerate}
\State \textbf{Output}: 
\begin{enumerate}[leftmargin=1.5cm,labelsep=0cm,align=left,label={[\arabic*]}]
\item $\nabla_\alpha \mathcal{L} (\alpha')$ : The gradient estimate of the function $\mathcal{L}(\alpha)$ evaluated in $\alpha'$.
\end{enumerate}

\Comment We refer the reader to the source
code which is available on Github (\url{https://github.com/nvecoven/nmd_net}), which is a particular implementation of standard BPTT \cite{bptt}. We note that giving a full tabular version
of the algorithm here would not constitute valuable information to the reader, due to
its complexity/length.
\end{algorithmic}
\end{algorithm}
\subsection{Architecture details}
\label{app:arch}

For conciseness, let us denote by $f_{n}$ a hidden layer of $n$ neurons with activation functions $f$, by $\rightarrow$ a connection between two fully-connected layers and by $\multimap()$ a neuromodulatory connection (as described in Section~\ref{sec:NMN}).

\paragraph{Benchmark 1.} The architectures used for this benchmark were as follows:

\begin{itemize}
    \item RNN : $GRU_{50} \rightarrow ReLU_{20} \rightarrow ReLU_{10} \rightarrow I_1$
    \item NMN : $GRU_{50} \rightarrow ReLU_{20} \multimap (SReLU_{10} \rightarrow I_1)$
\end{itemize}
    
\paragraph{Benchmark 2 and 3.} The architectures used for benchmark 2 and 3 were the same and as follows:

\begin{itemize}
    \item RNN : $GRU_{100} \rightarrow GRU_{75} \rightarrow ReLU_{45} \rightarrow ReLU_{30} \rightarrow ReLU_{10} \rightarrow I_1$
    \item NMN : $GRU_{100} \rightarrow GRU_{75} \rightarrow ReLU_{45} \multimap (ReLU_{30} \rightarrow ReLU_{10} \rightarrow I_1)$
\end{itemize}


\subsection{Hyper-parameter values}
\label{app:hyperparameters}
\begin{table}[H]
    \centering
    \begin{tabular}{||c|c||}
    \hline
         $B$ & $50$ \\ \hline
         $\lambda$ & $0.98$ \\ \hline
         $\gamma$ & $0.998$\\ \hline
         $\beta_0$ & $1$ \\ \hline 
         $\beta_{min}$ & $1/30$ \\ \hline 
         $\beta_{max}$ & $30$\\ \hline
         $d_{targ}$ & $0.003$ \\ \hline 
         $a_{lr_0}$ & $2*10^{-4}$\\ \hline
         $\omega_1$ & $0.9$ \\ \hline 
         $\omega_2$ & $0.999$ \\ \hline 
         $\epsilon$ & $10^{-8}$\\ \hline
         $e_{actor}$ & $20$ \\ \hline
         $crb$ & $2$ \\ \hline
         $cmb$ & $25$\\ \hline
         $c_{lr}$ & $6*10^{-3}$ \\ \hline 
         $T$ & $200$ \\ \hline
         $e_{critic}$ & $10$\\ \hline
         $c_{v_0}, c_{z_0}, a_{v_0}, a_{z_0}$ & $0$ \\ \hline
         $\eta$ & $50$ \\ \hline
    \end{tabular}
    \caption{Value of the hyper-parameters that are kept constant for every benchmark in this paper.}
    \label{tab:hyperparams}
\end{table}

\newpage
\subsection{Bayes optimal policy for benchmark 1}
\label{app:opt}

A Bayes optimal policy is a policy that maximises the expected sum of rewards it obtains when playing an MDP drawn from a known distribution $\mathcal{D}$. That is, a Bayes optimal policy $\pi^*_{bayes}$ belongs to the following set:

\[\pi^*_{bayes} \in \underset{\pi \in \Pi}{\arg\max}\underset{\underset{ {x_{\cdot}}\sim P_{\mathcal{M}}\mbox{\tiny{(\raisebox{0.6ex}{.},\raisebox{0.6ex}{.})}}}{\underset{ {a_{\cdot}}\sim\pi\mbox{\tiny{(\raisebox{0.6ex}{.})}}}{\underset{{x_0} \sim P_{x_0}}{{\mathcal{M}} \sim \mathcal{D}}}}}{\mathbb{E}} R^{\pi}_{\mathcal{M}}\quad,\] 
with $P_{\mathcal{M}}$ being the state-transition function of the MDP $\mathcal{M}$ and $R^\pi_{\mathcal{M}}$ the discounted sum of reward obtained when playing policy $\pi$ on $\mathcal{M}$.

\noindent In the first benchmark, the MDPs only differ by a bias, which we denote $\alpha$. Drawing an MDP according to $\mathcal{D}$ amounts to draw a value of $\alpha$ according to  a uniform distribution of $\alpha$ over $[-\alpha_{max}, \alpha_{max}]$, denoted by $\mathbb{U}_\alpha$,  and to  determine the transition function and the reward function that correspond to this value. Therefore, we can write the previous equation as:  
\begin{align}
    \pi^*_{bayes} & \in \underset{\pi \in \Pi}{\arg\max}\underset{\underset{ {x_{\cdot}}\sim P_{\mathcal{M}(\alpha)}\mbox{\tiny{(\raisebox{0.6ex}{.},\raisebox{0.6ex}{.})}}}{\underset{ {a_{\cdot}}\sim\pi\mbox{\tiny{(\raisebox{0.6ex}{.})}}}{\underset{{x_0} \sim P_{x_0}}{{\alpha} \sim \mathbb{U}_\alpha}}}}{\mathbb{E}} R^{\pi}_{\mathcal{M}}\quad, \nonumber
\end{align}
with $\mathcal{M}(\alpha)$ being a function giving as output the MDP corresponding to $\alpha$ and $\Pi$ the set of all possible policies.

We now prove the following theorem.

\begin{theorem} \label{theorem:1}
The policy that selects:
\begin{enumerate}
    \item at time-step $t=0$ the action $a_{0} = x_{0} + \frac{\gamma * (\alpha_{max}+4.5)}{1+\gamma}$
    \item at  time-step $t=1$ 
    \begin{enumerate}
        \item[a)] if $r_{0} = 10$, the action  $a_{1} = x_{1} + a_{0} - x_{0}$ 
        \item[b)] else if $|r_{0}| > \alpha_{max} - (a_{0}-x_{0}) \quad \wedge \quad a_{0}-x_{0} > 0$, the action $a_{1} = a_{0} + r_{0}$
        \item[c)] else if  $|r_{0}| > \alpha_{max} - (x_{0} - a_{0}) \quad \wedge \quad a_{0}-x_{0} < 0$, the action  $a_{1} = a_{0} - r_{0}$ 
        \item[d)] and otherwise the action $a_{1} = a_{0} + r_{0} + 1$
    \end{enumerate}
    \item for the remaining time-steps:
    \begin{enumerate}
    \item[a)] if  $r_{0} = 10$, the action  $a_{t} = x_{t} + a_{0} - x_{0}$
    \item[b)] else if $r_{1} = 10$, the action     $a_{t} = x_{t} + a_{1} - x_{1}$
    \item[c)] and otherwise the action $a_t=x_t + i_t $ where $i_t$ is the unique element of the set   $\{a_{0} - x_{0} + r_{0}; a_{0} - x_{0} - r_{0}\} \cap \{a_{1} - x_{1} + r_{1}; a_{1} - x_{1} - r_{1}\}$
    \end{enumerate}
\end{enumerate}
is Bayes optimal for benchmark 1 defined in SubSection~\ref{subsec:benchmarks}. 

\end{theorem}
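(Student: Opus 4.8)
I would strip benchmark~1 down to a one-dimensional Bayesian guessing game, solve the belief states that can arise, and then reduce optimality of the whole policy to maximising a single function of the first action.

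\emph{Reduction.} Since $x_t=p_t+\alpha$ with $p_t$ (hence $x_t$) drawn independently of $\alpha$, the observation $x_t$ carries no information about the task, and the reward depends on $(a_t,x_t,\alpha)$ only through the offset $d_t:=a_t-x_t$, via $r_t=10$ if $|d_t+\alpha|\le 1$ and $r_t=-|d_t+\alpha|$ otherwise (I use the hit-inclusive threshold, which the stated policy exploits at $t=1$). A positive reward resamples $p$ but leaves the map $a_{t+1}\mapsto d_{t+1}$ unchanged, so re-using a hitting offset keeps hitting. Writing $\beta:=-\alpha\sim\mathbb U[-\alpha_{max},\alpha_{max}]$, the problem is therefore equivalent to: each step pick $d_t$, receive $10$ if $|d_t-\beta|\le1$ and $-|d_t-\beta|$ otherwise, and \emph{observe the reward} — a hit reveals $\beta\in[d_t-1,d_t+1]$, a miss reveals $\beta\in\{d_t-|r_t|,\,d_t+|r_t|\}$ exactly. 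One checks the clipping to $[-20,20]$ is never active ($x_t\in[-5,5]$ and all offsets used lie in $[-\alpha_{max}-1,\alpha_{max}+1]$).

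\emph{Belief states and their values.} In the reduced game the posterior on $\beta$ is always an interval, a two-point set, or a singleton, and a two-point set only arises from a single miss out of an interval, hence is symmetric ($\beta_+-\beta_-=2|r|\ge2$, each point with mass $\tfrac12$). I would prove, each time via an explicit policy plus a matching upper bound: $V^\ast(\{\beta\})=\tfrac{10}{1-\gamma}=:\mu$; $V^\ast=\mu$ for an interval reached by hitting at $d$ (replay $d$, which stays within $1$ of $\beta$, forever); and $V^\ast=\tfrac{11-g}{2}+\gamma\mu$ for a symmetric two-point set of gap $g\ge2$, attained by playing $d=\beta_-+1$ — this hits $\beta_-$ at distance exactly $1$ (reward $10$), else misses $\beta_+$ at distance $g-1$, and in both cases identifies $\beta$, so the next state is a singleton. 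For the upper bound, any $d$ is within $1$ of at most one of $\beta_\pm$ (gap $\ge2$); splitting on the two equiprobable cases and using $|d-\beta_-|+|d-\beta_+|\ge g$ and per-step reward $\le10$ shows no action beats $\tfrac{11-g}{2}+\gamma\mu$ — in particular playing $\beta_-$ exactly loses $\tfrac12$, and any non-hitting or non-informative (midpoint) play is strictly worse. The sole degeneracy $g=2$ (then $\beta_-+1$ hits both points and $V^\ast=\mu$) has probability zero.

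\emph{First action and conclusion.} Finally I would compute $V(d_0)$, the value of ``play $d_0$ from the full interval, then act optimally'' (which is exactly the Bellman value of the initial state), and maximise over $d_0\ge0$ (the game is symmetric under $(\beta,d)\mapsto(-\beta,-d)$). Partition over $\beta$: a hit if $|\beta-d_0|\le1$, value $\mu$; a miss whose two-point set loses a point outside $[-\alpha_{max},\alpha_{max}]$, which happens exactly when $\beta<d_0$ and $|r_0|>\alpha_{max}-d_0$, value $r_0+\gamma\mu$; otherwise a genuine two-point miss, value $r_0+\gamma\big(\tfrac{11-2|r_0|}{2}+\gamma\mu\big)$ by the previous step. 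On each of these $\beta$-intervals the integrand is affine in $\beta$ and in $d_0$; integrating against $\tfrac1{2\alpha_{max}}$ and differentiating, the $\mu$-terms combine through $(1-\gamma)\mu=10$ and $\tfrac{d}{dd_0}V(d_0)$ has the sign of $2\gamma\alpha_{max}+9\gamma-2(1+\gamma)d_0$, which is decreasing and vanishes at $d_0^\ast=\tfrac{\gamma(\alpha_{max}+4.5)}{1+\gamma}$; since $d_0^\ast\in(0,\alpha_{max}-1)$ and $V$ is concave there (and no $d_0$ outside this range does better), $d_0^\ast$ is the maximiser. Translating via $a_t=x_t+d_t$ and $x_1=x_0$ on a miss — so $d_1=\beta_-+1$ becomes $a_1=a_0+r_0+1$, a disambiguated miss gives $a_1=a_0+r_0$ or $a_1=a_0-r_0$, and $a_t=x_t+\beta$ once $\beta$ is known — reproduces the stated policy.

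\emph{Main obstacle.} The delicate point is the two-point state: $d=\beta_-+1$ is optimal only because the hit threshold is inclusive, and discarding all competitors (notably the non-informative midpoint) needs care; and the constant $4.5$ in $d_0^\ast$ appears only after bookkeeping the hit bonus against the two-point penalty correctly across the four $\beta$-regions — an affine but error-prone integration.
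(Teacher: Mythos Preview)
Your argument is correct and lands on the same integrals and optimiser as the paper, but the decomposition is organised differently. The paper splits the objective as $\mathbb{E}[r_0+\gamma r_1]+\mathbb{E}\big[\sum_{t\ge 2}\gamma^t r_t\big]$ and shows the stated policy maximises each piece separately: Part~1 verifies by a five-interval case analysis that the prescribed $a_1$ maximises $\mathbb{E}[r_1\mid h_1]$, then integrates over $\alpha$ to locate the optimal $a_0$; Part~2 checks directly that $r_t=R_{\max}$ for every $t\ge 2$. You instead run a full Bellman recursion on the posterior over $\beta=-\alpha$, computing $V^\ast$ for each belief class (hit interval, symmetric two-point set, singleton) and collapsing the infinite-horizon optimum to the single scalar maximisation of $V(d_0)$. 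The change of variables $(d_t,\beta)$ and the belief-state framing compress the paper's casework --- your two-point value $\tfrac{11-g}{2}+\gamma\mu$ is exactly the paper's $V_1=5.5-|r_0|$ plus the continuation $\gamma\mu$ --- and your route is arguably more principled, since it does not rely on the fortuitous fact that the first two rewards decouple from the tail. Both proofs lean on the same boundary convention you flag: the step $a_1=a_0+r_0+1$ lands at distance exactly $1$ from one candidate, which the strict inequality in the benchmark description would classify as a miss; the paper's own interval analysis treats it as a hit as well.
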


\begin{proof}
Let us denote by  $\pi^*_{theorem1}$ the policy described in this theorem. 
To prove this theorem, we first prove that in the set of all possible policies $\Pi$ there are no policy $\pi$ which leads to a higher value of
\begin{equation} \label{eq:2StepProblem}
  \underset{\underset{\underset{{x_\cdot} \sim P_{\mathcal{M}}\mbox{\tiny{(\raisebox{0.6ex}{.},\raisebox{0.6ex}{.})}}}{{a_{\cdot}} \sim \pi\mbox{\tiny{(\raisebox{0.6ex}{.})}}}}{\underset{{x_0} \sim P_{x_0}}{\mathcal{M} \sim \mathcal{D}}}}{\mathbb{E}} (r_{0} + \gamma * r_{1})
\end{equation}
\noindent than $\pi^*_{theorem1}$. Or equivalently:

\begin{equation}\label{eq:t_to_2}\underset{\underset{\underset{{x_\cdot} \sim P_{\mathcal{M}}\mbox{\tiny{(\raisebox{0.6ex}{.},\raisebox{0.6ex}{.})}}}{{a_{\cdot}} \sim \pi^*_{theorem1}\mbox{\tiny{(\raisebox{0.6ex}{.})}}}}{\underset{{x_0} \sim P_{x_0}}{\mathcal{M} \sim \mathcal{D}}}}{\mathbb{E}} (r_{0} + \gamma * r_{1}) \geq \underset{\underset{\underset{{x_\cdot} \sim P_{\mathcal{M}}\mbox{\tiny{(\raisebox{0.6ex}{.},\raisebox{0.6ex}{.})}}}{{a_{\cdot}} \sim \pi\mbox{\tiny{(\raisebox{0.6ex}{.})}}}}{\underset{{x_0} \sim P_{x_0}}{\mathcal{M} \sim \mathcal{D}}}}{\mathbb{E}} (r_{0} + \gamma * r_{1}) ~\forall \pi \in \Pi  \quad.\end{equation}

\noindent  Afterwards, we  prove  that  the policy  $\pi^*_{theorem1}$,
generates for  each time-step $t \geq  2$ a reward equal  to $R_{max}$
which is the maximum reward achievable, or written alternatively as:
    
    \begin{equation}\label{eq:t_to_inf}\underset{\underset{\underset{{x_\cdot} \sim P_{\mathcal{M}}\mbox{\tiny{(\raisebox{0.6ex}{.},\raisebox{0.6ex}{.})}}}{{a_{\cdot}} \sim \pi^*_{bayes}\mbox{\tiny{(\raisebox{0.6ex}{.})}}}}{\underset{{x_0} \sim P_{x_0}}{\mathcal{M} \sim \mathcal{D}}}}{\mathbb{E}} (\sum_{t=2}^\infty \gamma^t * r_{t}) = \sum_{t=2}^\infty \gamma^t * R_{max} \geq  \underset{\underset{\underset{{x_\cdot} \sim P_{\mathcal{M}}\mbox{\tiny{(\raisebox{0.6ex}{.},\raisebox{0.6ex}{.})}}}{{a_{\cdot}} \sim \pi\mbox{\tiny{(\raisebox{0.6ex}{.})}}}}{\underset{{x_0} \sim P_{x_0}}{\mathcal{M} \sim \mathcal{D}}}}{\mathbb{E}} (\sum_{t=2}^\infty \gamma^t * r_{t})~\forall \pi \in \Pi\quad.\end{equation}
    
\noindent 
By merging (\ref{eq:t_to_2}) and (\ref{eq:t_to_inf}), we have that 
    \[\underset{\underset{\underset{{x_\cdot} \sim P_{\mathcal{M}}\mbox{\tiny{(\raisebox{0.6ex}{.},\raisebox{0.6ex}{.})}}}{{a_{\cdot}} \sim \pi_{theorem1}\mbox{\tiny{(\raisebox{0.6ex}{.})}}}}{\underset{{x_0} \sim P_{x_0}\mbox{\tiny{(\raisebox{0.6ex}{.})}}}{\mathcal{M} \sim \mathcal{D}}}}{\mathbb{E}} (\sum_{t=0}^\infty \gamma^t * r_{t}) \geq \underset{\underset{\underset{{x_\cdot} \sim P_{\mathcal{M}}\mbox{\tiny{(\raisebox{0.6ex}{.},\raisebox{0.6ex}{.})}}}{{a_{\cdot}} \sim \pi\mbox{\tiny{(\raisebox{0.6ex}{.})}}}}{\underset{{x_0} \sim P_{x_0}\mbox{\tiny{(\raisebox{0.6ex}{.})}}}{\mathcal{M} \sim \mathcal{D}}}}{\mathbb{E}} (\sum_{t=0}^\infty \gamma^t * r_{t})~\forall \pi \in \Pi\]
    \noindent which proves the theorem.

    \paragraph{{ $\triangleright$ \it Part 1}.} Let us now prove inequality (\ref{eq:t_to_2}). The first thing to notice is that for a policy to maximise  expression (\ref{eq:2StepProblem}), it only needs to satisfy two conditions for all $x_0$. The first one: to select an action $a_1$, which knowing the value of $(x_0,a_0,r_0,x_1)$, maximises the expected value of $r_1$. We denote by $V_1(x_0,a_0,r_0,x_1)$ the maximum expected value of $r_1$ that can be obtained knowing the value of $(x_0,a_0,r_0,x_1)$. The second one: to select an action $a_0$ knowing the value of $x_0$ that maximises the expected value of the sum $r_0 + \gamma V_1(x_0,a_0,r_0,x_1)$. We now show that the policy $\pi_{theorem1}$ satisfies these two conditions.

    Let us start with the first condition that we check by analysing four cases, which correspond to the four cases a), b), c), d) of policy $\pi_{theorem1}$ for time step $t=1$.
\begin{enumerate}
\item[a)]  If $r_0 = 10$, the maximum reward that can be obtained,  we are in a context where  $a_0$ belongs to the target interval. It is easy to see that, by playing $a_1 = x_1 + a_0 - x_0$, we will obtain $r_1$ equal to 10. This shows that in case a) for time step $t=1$,  $\pi_{theorem1}$ maximises this expected value of $r_1$. 
\item[b)]  If   $|r_{0}| > \alpha_{max} - (a_{0}-x_{0}) \quad \wedge \quad a_{0}-x_{0} > 0$ and  $r_0 \ne 10$ it is easy  to see  that the value of $\alpha$ to which the MDP corresponds can be inferred from $(x_0,a_0,r_0)$ and that the action $a_{1} = a_{0} + r_{0}$ will fall in the middle of the target interval, leading to a reward of 10. Hence, in this case also, the policy $\pi_{theorem1}$ maximises the expected value of $r_1$.
\item[c)] If  $|r_{0}| > \alpha_{max} - (x_{0} - a_{0}) \quad \wedge \quad a_{0}-x_{0} < 0$ and $r_0 \ne 10$, we are also in a context where the value of $\alpha$ can be inferred directly from $(x_0,a_0,r_0)$ and the action $a_1=a_{0} - r_{0}$ targets the centre of the target interval, leading to a reward of $10$. Here again, $\pi_{theorem1}$ maximises the expected value of $r_1$.
\item[d)] When none of the three previous conditions is satisfied, $a$ is not satisfied and so $x_1 = x_0$,  we need to consider two cases: $(a_0 -x_0) \ge 0$ and $(a_0 -x_0)  < 0$. Let us first start with  $(a_0 -x_0) \ge 0$. In such a context, $\alpha  \in \{ a_{0} - x_{0} + r_{0}; a_{0} - x_{0} - r_{0}\} = \{a_{0}-x_{0} - |a_{0}-x_{0}-\alpha |,a_{0}-x_{0} + |a_{0}-x_{0}-\alpha|\}$ and where:
    \begin{enumerate}
        \item [1)]$P(\alpha = a_{0} - x_{0} - |a_{0}-x_{0}-\alpha| | x_0,a_0,r_0,x_1) = 0.5$
        \item [2)]$P(\alpha = a_{0} - x_{0} + |a_{0}-x_{0}-\alpha| | x_0,a_0, r_0, x_1) = 0.5 \quad .$
    \end{enumerate}
    Let us now determine  the action $a_1$ that  maximises $\hat{r}_1$,  the expected value of $r_1$ according to $P(\alpha| x_0,a_0,r_0,x_1)$. Five cases, represented on Figure~\ref{fig:demo}, have to be considered: 
    \begin{enumerate}
    \item [1)] $a_{1}  < a_{0} - |a_{0}-x_{0}-\alpha|-1$. Here  $ \hat{r}_{1} =  a_{1}-a_{0} $ \noindent
and the  maximum of $\hat{r}_1$ is equal to $ -|a_{0}-x_{0}-\alpha|-1$.

     \item [2)] $a_{1}  \in [a_{0}- |a_{0}-x_{0}-\alpha|-1,a_{0} - |a_{0}-x_{0}-\alpha|+1]$. Here we have $\hat{r}_{1} = \frac{1}{2} (10 + a_0 - |a_0 - x_0 - \alpha | - a_1)$
whose maximum over the interval is   $5.5 - | a_0 - x_0 - \alpha | $ which is reached for $a_1 = a_{0} + |a_{0}-x_{0}-\alpha|-1$.

        \item [3)] $a_{1}  \in [a_{0} - |a_{0}-x_{0}-\alpha|+1,a_{0} + |a_{0}-x_{0}-\alpha|-1]$. In this case $\hat{r}_{1} = -|a_{0}-x_{0}-\alpha|$ and is independent from $a_1$.

\item [4)] $a_{1}  \in [a_{0} + |a_{0}-x_{0}-\alpha|-1,a_{0} + |a_{0}-x_{0}-\alpha|+1]$. The expected reward is 
          $\hat{r}_{1} = \frac{1}{2} (10 + a_0 - |a_0 - x_0 - \alpha | - a_1)$
whose maximum over the interval is  $5.5 - | a_0 - x_0 - \alpha | $ which is reached for $a_1 = a_{0} + |a_{0}-x_{0}-\alpha|+1$.  

\item [5)] $a_{1}  > a_{0} + |a_{0}-x_{0} - \alpha| + 1$. In this case the expected reward is $\hat{r}_{1} =  a_{0}-a_{1}$
and the maximum of $\hat{r}_1$ is equal to $ -|a_{0}-x_{0}-\alpha|-1$.

    \end{enumerate}
    
    \begin{figure}[H]
        \centering
        \definecolor{imblue}{HTML}{44BDDA}
\begin{tikzpicture}[scale=1, minimum size=1em]
\draw[white, line width=1pt] (-6,0) rectangle (7.75,4); 
\begin{scope}
\coordinate (bar) at (0.5, 3.0);
\coordinate (bar_l) at ($(bar)+(-5.6,-0.1)$);
\coordinate (bar_r) at ($(bar)+(5.6,0.1)$);
\coordinate (alpha) at (2.5, 0.1);
\draw[-, line width = 0.8pt, opacity=.8, black] ($(bar)+(alpha)+(0.0,1.2)$) -- ($(bar)+(alpha)+(0.0,-0.4)$);
\draw[dashed, line width = 0.8pt, opacity=.6, black] ($(bar)+(alpha)+(0.4,1.2)$) -- ($(bar)+(alpha)+(0.4,-0.4)$);
\draw[dashed, line width = 0.8pt, opacity=.6, black] ($(bar)+(alpha)+(-0.4,1.2)$) -- ($(bar)+(alpha)+(-0.4,-0.4)$);
\coordinate (alpha2) at (-2.5, 0.1);
\draw[-, line width = 0.8pt, opacity=.8, black] ($(bar)+(alpha2)+(0.0,1.2)$) -- ($(bar)+(alpha2)+(0.0,-0.4)$);
\draw[dashed, line width = 0.8pt, opacity=.6, black] ($(bar)+(alpha2)+(0.4,1.2)$) -- ($(bar)+(alpha2)+(0.4,-0.4)$);
\draw[dashed, line width = 0.8pt, opacity=.6, black] ($(bar)+(alpha2)+(-0.4,1.2)$) -- ($(bar)+(alpha2)+(-0.4,-0.4)$);
\coordinate (state) at (-4.2, 0.1);
\draw[->, line width = 0.8pt, opacity=.8, imblue] ($(bar)+(state)+(0.0,0.7)$) -- ($(bar)+(state)$);
\coordinate (action) at (-1.6, 0.1);
\draw[-, line width = 0.8pt, opacity=.8] ($(bar)+(0.0,0.5)$) -- ($(bar)+(0.0,0.1)$);
\draw[-, dashed, line width = 0.8pt, opacity=.65] ($(bar)+(0.0,-0.1)$) -- ($(bar)+(0.0,-0.9)$);

\node[] at ($(bar)+(0.0,0.7)$) {\footnotesize $a_0$};

\draw[->, color = imblue, line width = 1.5] ($(bar)+(0.0,-0.9)$) -- ($(state)+(bar)+(0.0,-1.)$) node[midway, below]{\footnotesize \textcolor{imblue}{$x_0$}};
\draw[<->, color = black, dashed, line width = 1.5] ($(bar)+(0.0,-0.3)$) -- ($(bar)+(alpha)+(0.0,-0.4)$)node[midway, below]{\footnotesize \textcolor{black}{$|r_0|$}};
\draw[<->, color = black, dashed, line width = 1.5] ($(bar)+(0.0,-0.3)$) -- ($(bar)+(alpha2)+(0.0,-0.4)$)node[midway, below]{\footnotesize \textcolor{black}{$|r_0|$}};

\draw[->, color = imblue, line width = 1.5] ($(bar)+(alpha2)+(0.0,0.4)$) -- ($(state)+(bar)+(0.0,0.4)$) node[midway, above]{\footnotesize \textcolor{imblue}{$\alpha$}};

\draw[<->, color = red, line width = 1.5] ($(bar_l)+(0.0,-1.4)$) -- ($(bar)+(alpha2)+(-0.4,-1.6)$) node[midway, below]{\footnotesize \textcolor{red}{$1$}};
\draw[<->, color = red, line width = 1.5] ($(alpha2)+(bar)+(-0.4,-1.6)$) -- ($(bar)+(alpha2)+(0.4,-1.6)$) node[midway, below]{\footnotesize \textcolor{red}{$2$}};
\draw[<->, color = red, line width = 1.5] ($(alpha2)+(bar)+(0.4,-1.6)$) -- ($(bar)+(alpha)+(-0.4,-1.6)$) node[midway, below]{\footnotesize \textcolor{red}{$3$}};
\draw[<->, color = red, line width = 1.5] ($(alpha)+(bar)+(-0.4,-1.6)$) -- ($(bar)+(alpha)+(0.4,-1.6)$) node[midway, below]{\footnotesize \textcolor{red}{$4$}};
\draw[<->, color = red, line width = 1.5] ($(alpha)+(bar)+(0.4,-1.6)$) -- ($(bar_r)+(0.0,-1.6)$) node[midway, below]{\footnotesize \textcolor{red}{$5$}};
\draw[black, line width=0.5pt] (bar_l) rectangle (bar_r); 

\end{scope}
\end{tikzpicture}

        \caption{Graphical representation of the $5$ different cases when playing $a_1$.}
        \label{fig:demo}
    \end{figure}
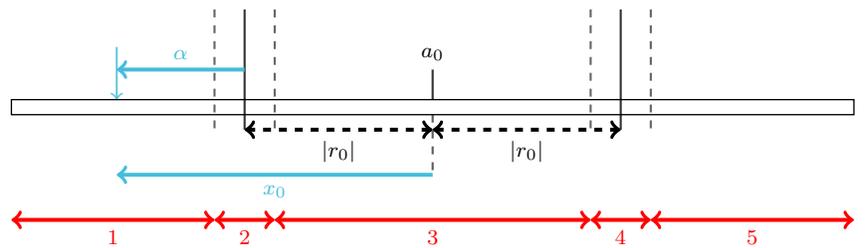
    From 1), 2), 3), 4) and 5) one can see that, given the conditions considered here,  an optimal policy can either play $a_{1} = a_{0} + |a_{0}-x_{0}-\alpha|-1$ or $a_{1} = a_{0}-|a_{0}-x_{0}-\alpha|+1$.
    In the following we will fix $a_1$ to $a_{0} + |a_{0}-x_{0}-\alpha|+1$ when $a_0 - x_0 \geq 0$. Let us also observe that the expected value of $r_1$ is equal to $5.5 - |a_{0} - x_{0} -\alpha|$. Up to now in this item d), we have only considered the case where $(a_{0}-x_{0}) > 0$. When $(a_{0}-x_{0}) \leq 0$, using the same reasoning we reach the exact same expression for  the optimal action to be played and  for  the maximum expected return of $r_1$. This is due to the symmetry that exists between both cases.
Since $\pi_{theorem1}$  plays  the  action $a_{1} =  a_{0} + r_{0} + 1 = a_{0}-|a_{0}-x_{0}-\alpha|+1$ in the case d) at time step $1$, it is straightforward to conclude that, in this case, it also plays an action that maximises the expected value of $r_1$. 
\end{enumerate}

Now that the first condition for $\pi_{theorem1}$ to maximise  expression (\ref{eq:2StepProblem}) has been proved, let us turn our attention to the second one. To this end, we will compute for each  $x_0 \in \mathcal{X}$, the action $a_0 \in \mathcal{A}$ that maximises: 
\begin{equation}\label{eq:expectation}\underset{\underset{x_1 \sim P_{\mathcal{M}(\alpha)}(x_0,a_0)}{\alpha \sim \mathbb{U}_\alpha}}{\mathbb{E}} (r_{0} + \gamma * V_1(x_0, a_0, r_0, x_1))\end{equation}
\noindent and show that this action coincide with the action taken by $\pi_{theorem1}$ for time step $t=0$.
First let us observe that for  this optimisation problem, one can reduce the search space $\mathcal{A}$ to $[x_{0}-\alpha_{max}+1, x_{0}+\alpha_{max}-1] \subset \mathcal{A}$.  Indeed, an action $a_0$ that does not belong to this latter interval would not give more information about $\alpha$ than playing $a_{0} = x_{0}-\alpha_{max}+1$ or $x_{0}+\alpha_{max}-1$ and lead to a worse expected $r_0$. This reduction of the search space will be exploited in the developments that follow.

However, we should first remember that  $\mathbb{U}_\alpha = \mathbb{U}[-\alpha_{max},\alpha_{max}]$  and that the function $V_1(x_0, a_0, r_0, x_1)$ can be written as follows: 
\begin{enumerate}
    \item if $r_0 = 10$, $V_1$ is equal to $R_{max} = 10$ 
    \item else if $|r_0| > \alpha_{max} - (a_0 - x_0)\quad\wedge\quad a_0 - x_0 > 0$ and $r_0 \neq 10$, then $V_1$ is equal to $R_{max} = 10$
    \item else if $|r_0| > \alpha_{max} - (x_{0} - a_{0}) \quad\wedge\quad a_0 - x_0 < 0$ and $r_0 \neq 10$, then $V_1$ is equal to $R_{max} = 10$
    \item and otherwise  $V_1$ is equal to $5.5 - |a_0-x_0-\alpha|$.
\end{enumerate}
We note that the value of $V_1(x_0, a_0, r_0, x_1)$ does not depend on the state $x_1$, which allows us to rewrite expression  (\ref{eq:expectation})  as follows: 
\begin{equation}\label{eq:new_expectation}\underset{\alpha\sim\mathbb{U}_\alpha}{\mathbb{E}} (r_{0} + \gamma * V_1(x_0, a_0, r_0, x_1))\end{equation}

\noindent and since the expectation is a linear operator:

\begin{equation}\label{eq:exp_sum}(\ref{eq:new_expectation}) = \underset{\alpha\sim\mathbb{U}_\alpha}{\mathbb{E}} (r_{0} ) + \gamma * \underset{\alpha\sim\mathbb{U}_\alpha}{\mathbb{E}} (V_1(x_0, a_0, r_0, x_1)) \quad . \end{equation}

\noindent Let us now focus on the second term of this sum:

\begin{equation}\label{eq:exp_v1}
    \underset{\alpha\sim\mathbb{U}_\alpha}{\mathbb{E}} (V_1(x_0, a_0, r_0, x_1)) \quad .
\end{equation}

\noindent We note that when $a_0 - x_0 \geq 0$ the function $V_1$ can be rewritten under the following form:
\begin{enumerate}
    \item if $\alpha \in [-\alpha_{max}, 2*(a_0 - x_0) -\alpha_{max}[$, $V_1$ is equal to 10
    \item else if $\alpha \in [2 * (a_{0}-x_0)-\alpha_{max},a_0-x_0-1]$, $v_1$ is equal to $5.5 + \alpha - (a_0 - x_0)$
    \item else if $\alpha \in [a_0-x_0-1, a_0 - x_0 +1]$, $V_1$ is equal to $10$
    \item else if $\alpha \in ]a_0-x_0+1,\alpha_{max}]$, $V_1$ is equal to $5.5 - \alpha + (a_0 - x_0)$.
\end{enumerate}

From here, we can compute  the value of expression (\ref{eq:exp_v1}) when  $a_0 - x_0 \geq 0$. We note that due to the symmetry that exists between the case $a_0 - x_0 \geq 0$ and $a_0 - x_0 \leq 0$, expression (\ref{eq:exp_v1}) will have the same value for both cases. 
Since we have:
\[(\ref{eq:exp_v1}) = \int_{-\infty}^{\infty} V_1 * p_{\alpha} * d\alpha\]
\noindent where $p_{\alpha}$ is the probability density function of $\alpha$, we can write:

\begin{align}
(\ref{eq:exp_v1}) &= \int_{-\alpha_{max}}^{\alpha_{max}} V_1 * \frac{1}{2*\alpha_{max}} d\alpha \nonumber\\ &= \int_{-\alpha_{max}}^{2*(a_{0}-x_{0})-\alpha_{max}} \frac{10}{2*\alpha_{max}} d\alpha 
 + \int_{2*(a_{0}-x_{0})-\alpha_{max}}^{a_{0}-x_{0}-1} \frac{5.5 + \alpha - (a_{0}-x_{0})}{2*\alpha_{max}} d\alpha \nonumber\\ 
&+ \int_{a_{0}-x_{0}-1}^{a_{0}-x_{0}+1} \frac{10}{2*\alpha_{max}} d\alpha 
+ \int_{a_{0}-x_{0}+1}^{\alpha_{max}} \frac{5.5 - \alpha + (a_{0}-x_{0})}{2*\alpha_{max}} d\alpha \quad . \nonumber
\end{align}

\noindent And thus, by computing the integrals, we have:

\begin{align}\underset{\alpha \sim \mathbb{U}_\alpha}{\mathbb{E}} (V_1) = -\frac{1}{2*\alpha_{max}} (a_{0}-x_{0})^2 + \frac{1}{\alpha_{max}}(\alpha_{max} + 4.5)*(a_{0}-x_{0}) \nonumber\\+ \frac{1}{\alpha_{max}}(5 + 5.5*\alpha_{max} - \frac{\alpha_{max}^2}{2}) \quad . \nonumber \end{align}

\noindent Let us now analyse the first term of the sum in equation (\ref{eq:exp_sum}), namely $\underset{\alpha\sim\mathbb{U}_\alpha}{\mathbb{E}} (r_0)$.

\noindent We have that:
\[ \underset{\alpha\sim\mathbb{U}_\alpha}{\mathbb{E}} (r_0) = \int_{-\infty}^{\infty} (r_{0}|x_{0},a_0,\alpha) * p_{\alpha} * d\alpha\]

\noindent which can be rewritten as:

\[ \underset{\alpha\sim\mathbb{U}_\alpha}{\mathbb{E}} (r_0) = \int_{-\alpha_{max}}^{\alpha_{max}} (r_{0}|x_{0},a_0,\alpha) * \frac{1}{2*\alpha_{max}}d\alpha  \quad . \]

Due to the reduction of the search space, we can  assume that $a_0$  belongs to
$[x_{0}-\alpha_{max}+1,   x_{0}+\alpha_{max}-1]$, we can write:
\begin{align}
\int_{-\alpha_{max}}^{\alpha_{max}} (r_{0}|x_{0}, a_{0}, \alpha) * \frac{1}{2*\alpha_{max}} d\alpha = \int_{-\alpha_{max}}^{a_{0}-x_{0}-1} \frac{\alpha - (a_{0} - x_{0})}{2*\alpha_{max}} d\alpha \nonumber\\ 
+ \int_{a_{0}-x_{0}-1}^{a_{0}-x_{0}+1} \frac{10}{2*\alpha_{max}} d\alpha 
+ \int_{a_{0}-x_{0}+1}^{\alpha_{max}} \frac{(a_{0} - x_{0}) -\alpha}{2*\alpha_{max}} d\alpha \quad . \nonumber
\end{align}
Given that $R_{max} = 10$, we have:

\[ \underset{\alpha\sim\mathbb{U}_\alpha}{\mathbb{E}} (r_0) = \frac{-(a_{0}-x_{0})^2 + 21 - \alpha_{max}^2}{2*\alpha_{max}}\]

\noindent and therefore:

\begin{align}
(\ref{eq:exp_sum})= 
-\frac{1+\gamma}{2*\alpha_{max}}*(a_{0}-x_{0})^2 + \frac{\gamma}{\alpha_{max}}(\alpha_{max} + 4.5)*(a_{0}-x_{0}) \nonumber\\+ \frac{1}{2*\alpha_{max}}(21 - \alpha_{max}^2 + \gamma * (10 + 11*\alpha_{max} -\alpha_{max}^2)) \quad . \nonumber \end{align}

To find the action $a_0$ that maximises (\ref{eq:expectation}), one can differentiate (\ref{eq:exp_sum}) with respect to $a_{0}$:

\begin{align}\frac{d(\ref{eq:exp_sum})}{d(a_{0})} = -\frac{1}{\alpha_{max}}*(1+\gamma)(a_{0}-x_{0}) + \frac{\gamma}{\alpha_{max}}(\alpha_{max}+4.5) \quad . \nonumber \end{align}
This derivative has a single zero value equal to:
\begin{equation}
 a_{0} = \frac{\gamma * (\alpha_{max}+4.5)}{1+\gamma} + x_{0} \quad . \nonumber
\end{equation}
  It can be easily checked that it corresponds to a maximum of expression (\ref{eq:expectation}) and since it also belongs to the reduced search space  $[x_{0}-\alpha_{max}+1,   x_{0}+\alpha_{max}-1]$, it is indeed the solution to our optimisation problem. Since $\pi_{theorem1}$ plays this action at time $t=0$, {\it Part 1} of this proof is now fully completed.

  \paragraph{{ $\triangleright$ \it Part 2}.} Let us now prove that the policy $\pi^*_{theorem1}$ generates for every $t \ge 2$ rewards equal to $R_{max}=10$. We will analyse three different cases, corresponding to the three cases a), b) and c) of policy $\pi_{theorem1}$ for time step $t \ge 2$.
\begin{enumerate}
\item [a)] If $r_0 = 10$, we are in a context where  $a_0$ belong to the target interval. It is straightforward to see that, by playing $a_t = x_t + a_0 - x_0$, the action played by $\pi_{theorem1}$ in this case,   we will get a reward $r_t$ equal to 10.
\item [b)] If $r_1 = 10$ and $ r_0 \ne 10$, one can easily see that  playing action $a_{t} = x_{t} + a_{1} - x_{1}$, the action played by $\pi_{theorem1}$, will always generate rewards equal to $10$.
\item [c)] If $r_0 \ne 10$ and $r_1 \ne 10$, it is possible to deduce from the first action $a_0$ that the MDP played corresponds necessarily to one of these two values for $\alpha$:  $\{ a_{0} - x_{0} + r_{0}; a_{0} - x_{0} - r_{0}\}$. Similarly, from the second action played, one knows that  $\alpha$ must also stand in  $\{ a_{1} - x_{1} + r_{1}; a_{1} - x_{1} - r_{1}\}$. It can be proved that  because $a_{0} \neq a_{1}$ (a property of our policy $\pi_{theorem1}$), the two sets have only one element in common. Indeed if these two sets  had all their elements in common, either this pair of equalities would be valid:
    \begin{align*}
        a_{0} - x_{0} + r_{0} = a_{1} - x_{1} + r_{1} \\
        a_{0} - x_{0} - r_{0} = a_{1} - x_{1} - r_{1} 
    \end{align*}
    or this pair  of equalities would be valid:
        \begin{align*}
        a_{0} - x_{0} + r_{0} =  a_{1} - x_{1} - r_{1} \\
        a_{0} - x_{0} - r_{0} =  a_{1} - x_{1} + r_{1} \quad . 
    \end{align*}
By summing member by member the two equations of the first pair, we have: 
    \begin{align*}
        a_{0} - x_{0} = a_{1} - x_{1} \quad .
    \end{align*}
    Taking into account that $x_{0} = x_{1}$ because none of the two actions yielded a positive reward, it implies that $a_{0} = a_{1}$, which results in a contradiction. It can be shown in a similar way  that another contradiction appears with the second pair.  As a result the intersection of these two sets is unique and equal to $\alpha$. From here, it is straightforward to see that in this case c), the policy $\pi_{theorem1}$ will always  generate rewards equal to $R_{max}$. 
\end{enumerate}

\end{proof}

From Theorem \ref{theorem:1}, one can easily prove the following theorem.
\begin{theorem}
\label{theorem:2}
  The value of expected return of a Bayes optimal policy for benchmark 1 is equal to  $\frac{3*\gamma^2 * (\alpha_{max}+4.5)^2}{2*\alpha_{max}*(1+\gamma)} + \frac{21 + \alpha_{max}^2 + \gamma*(10 + 11*\alpha_{max} - \alpha_{max}^2)}{2*\alpha_{max}} +\frac{\gamma^2}{1-\gamma} * 10$.
\end{theorem}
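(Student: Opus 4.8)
The plan is to obtain Theorem~\ref{theorem:2} as an essentially immediate corollary of Theorem~\ref{theorem:1}, whose proof already exhibits the Bayes optimal policy $\pi^*_{theorem1}$ and, along the way, computes every expectation we need. The first move is to decompose the expected discounted return of $\pi^*_{theorem1}$ as
\[
\mathbb{E}\Big[\sum_{t=0}^{\infty}\gamma^{t} r_{t}\Big] \;=\; \mathbb{E}[\,r_{0}+\gamma r_{1}\,]\;+\;\sum_{t=2}^{\infty}\gamma^{t}\,\mathbb{E}[\,r_{t}\,],
\]
and then evaluate the two pieces separately using the two parts of the proof of Theorem~\ref{theorem:1}.

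For the tail $\sum_{t\ge 2}\gamma^{t}\mathbb{E}[r_{t}]$, Part~2 of the proof of Theorem~\ref{theorem:1} shows that $\pi^*_{theorem1}$ deterministically achieves $r_{t}=R_{\max}=10$ for every $t\ge 2$: after the first two steps it either knows $\alpha$ exactly (case c)) or holds a working offset $a_{0}-x_{0}$ (resp.\ $a_{1}-x_{1}$) that lets it aim at the centre of the target interval, and this offset stays valid even though the target is re-sampled on every positive reward, because $\alpha$ itself does not change within an episode. Hence $\sum_{t=2}^{\infty}\gamma^{t}\mathbb{E}[r_{t}] = \frac{\gamma^{2}}{1-\gamma}\cdot 10$, which is the last summand in the claimed expression.

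For the head term, Part~1 of the proof already did the heavy lifting: it established that, for $\pi^*_{theorem1}$, $\mathbb{E}[r_{0}+\gamma r_{1}]$ equals expression~(\ref{eq:exp_sum}) (where $V_{1}$ is exactly the optimal expected $r_{1}$ given the history and the constant term comes from integrating $\mathbb{E}[r_0]$ and $\gamma\,\mathbb{E}[V_1]$ over $\alpha\sim\mathbb{U}_\alpha$), and it identified the maximiser $a_{0}-x_{0}=\frac{\gamma(\alpha_{\max}+4.5)}{1+\gamma}$. So all that remains is to substitute this value of $a_{0}-x_{0}$ into the quadratic polynomial~(\ref{eq:exp_sum}): the $-\frac{1+\gamma}{2\alpha_{\max}}(a_{0}-x_{0})^{2}$ term and the $\frac{\gamma}{\alpha_{\max}}(\alpha_{\max}+4.5)(a_{0}-x_{0})$ term combine into a single term proportional to $\frac{\gamma^{2}(\alpha_{\max}+4.5)^{2}}{\alpha_{\max}(1+\gamma)}$, the constant term carries over unchanged, and adding the geometric tail produces the stated closed form.

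The only real obstacle is bookkeeping. Collecting the quadratic terms and reconciling the constant term of~(\ref{eq:exp_sum}) with the constant in the theorem statement is error-prone — it is easy to drop a factor of two or flip a sign on an $\alpha_{\max}^{2}$ contribution — so the final expression should be cross-checked carefully, e.g.\ numerically at $\alpha_{\max}=10$ and $\gamma=0.998$ against the value $\approx 4679$ quoted in Section~\ref{sec:results}. Modulo that arithmetic, no new idea is needed beyond Theorem~\ref{theorem:1}.
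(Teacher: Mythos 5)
Your route is the same as the paper's: split the Bayes-optimal return into $\mathbb{E}[r_0+\gamma r_1]$ plus the tail $\sum_{t\ge 2}\gamma^t r_t$, obtain $\frac{\gamma^2}{1-\gamma}\cdot 10$ for the tail from Part~2 of the proof of Theorem~\ref{theorem:1}, and obtain the head term by evaluating expression~(\ref{eq:exp_sum}) at its maximiser $a_0-x_0=\frac{\gamma(\alpha_{max}+4.5)}{1+\gamma}$; the paper's own proof of Theorem~\ref{theorem:2} does exactly this and simply asserts the two resulting values.

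The gap is that the ``bookkeeping'' you defer is the entire content of the theorem, and when carried out it does not produce the printed formula. Completing the square in~(\ref{eq:exp_sum}) gives, at the maximiser, $-\frac{1+\gamma}{2\alpha_{max}}(a_0-x_0)^2+\frac{\gamma}{\alpha_{max}}(\alpha_{max}+4.5)(a_0-x_0)=\frac{\gamma^2(\alpha_{max}+4.5)^2}{2\alpha_{max}(1+\gamma)}$, i.e.\ coefficient $1$, not the $3$ appearing in the statement; and the constant of~(\ref{eq:exp_sum}), which you correctly say ``carries over unchanged'', is $\frac{21-\alpha_{max}^2+\gamma(10+11\alpha_{max}-\alpha_{max}^2)}{2\alpha_{max}}$, with $-\alpha_{max}^2$, whereas the statement prints $+\alpha_{max}^2$ — so your own description already contradicts the formula you claim to reach. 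The discrepancy is not in~(\ref{eq:exp_sum}) (its ingredients $\mathbb{E}(r_0)$ and $\mathbb{E}(V_1)$ check out independently): at the benchmark's values $\alpha_{max}=10$, $\gamma=0.998$ the printed head term equals about $22.8$, which is impossible for $\mathbb{E}[r_0+\gamma r_1]\le 10(1+\gamma)<20$, so the printed coefficients cannot be recovered by your (correct) procedure. Nor would your proposed sanity check settle it: both the printed and the corrected expressions evaluate to roughly $5\cdot 10^3$, while the $4679$ quoted in Section~\ref{sec:results} is evidently a finite-horizon, undiscounted episode score rather than the infinite discounted sum considered here. To make the argument complete you must actually perform the substitution and state what it yields, namely $\frac{\gamma^2(\alpha_{max}+4.5)^2}{2\alpha_{max}(1+\gamma)}+\frac{21-\alpha_{max}^2+\gamma(10+11\alpha_{max}-\alpha_{max}^2)}{2\alpha_{max}}+\frac{\gamma^2}{1-\gamma}\cdot 10$, flagging the factor $3$ and the sign of $\alpha_{max}^2$ in the printed statement as errors, rather than asserting that the computation ``produces the stated closed form''.
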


\begin{proof}
The expected return of  a Bayes optimal policy  can be written as follows:
\begin{equation}\underset{\underset{ {x_{\cdot}}\sim P_{\mathcal{M}}\mbox{\tiny{(\raisebox{0.6ex}{.},\raisebox{0.6ex}{.})}}}{\underset{ {a_{\cdot}}\sim\pi^*_{bayes}\mbox{\tiny{(\raisebox{0.6ex}{.})}}}{\underset{{x_0} \sim P_{x_0}}{\mathcal{M} \sim \mathcal{D}}}}}{\mathbb{E}} \sum_{t=0}^{1} \gamma^t * r_{t} + \underset{\underset{ {x_{\cdot}}\sim P_{\mathcal{M}}\mbox{\tiny{(\raisebox{0.6ex}{.},\raisebox{0.6ex}{.})}}}{\underset{ {a_{\cdot}}\sim\pi^*_{bayes}\mbox{\tiny{(\raisebox{0.6ex}{.})}}}{\underset{{x_0} \sim P_{x_0}}{\mathcal{M} \sim \mathcal{D}}}}}{\mathbb{E}} \sum_{t=2}^{\infty} \gamma^t * r_{t}
\quad . \nonumber
\end{equation}

From the proof of Theorem~\ref{theorem:1}, it is easy to see that:
\begin{enumerate}
    \item $\underset{\underset{ {x_{\cdot}}\sim P_{\mathcal{M}}\mbox{\tiny{(\raisebox{0.6ex}{.},\raisebox{0.6ex}{.})}}}{\underset{ {a_{\cdot}}\sim\pi^*_{bayes}\mbox{\tiny{(\raisebox{0.6ex}{.})}}}{\underset{{x_0} \sim P_{x_0}}{\mathcal{M} \sim \mathcal{D}}}}}{\mathbb{E}} \sum_{t=0}^{1} \gamma^t * r_{t} = \frac{3*\gamma^2 * (\alpha_{max}+4.5)^2}{2*\alpha_{max}*(1+\gamma)} + \frac{21 + \alpha_{max}^2 + \gamma*(10 + 11*\alpha_{max} - \alpha_{max}^2)}{2*\alpha_{max}}$ 
    \item $\underset{\underset{ {x_{\cdot}}\sim P_{\mathcal{M}}\mbox{\tiny{(\raisebox{0.6ex}{.},\raisebox{0.6ex}{.})}}}{\underset{ {a_{\cdot}}\sim\pi^*_{bayes}\mbox{\tiny{(\raisebox{0.6ex}{.})}}}{\underset{{x_0} \sim P_{x_0}}{\mathcal{M} \sim \mathcal{D}}}}}{\mathbb{E}} \sum_{t=2}^{\infty} \gamma^t * r_t = \frac{\gamma^2}{1-\gamma} 10$
\end{enumerate}
which proves Theorem~\ref{theorem:2}.
\end{proof}
\end{document}